\def\floor#1{\lfloor #1 \rfloor}
\def\1{\bm{1}}
\def\rvx{{\mathbf{x}}}
\def\rvy{{\mathbf{y}}}
\def\vzero{{\bm{0}}}
\def\vone{{\bm{1}}}
\def\vmu{{\bm{\mu}}}
\def\vpi{{\bm{\pi}}}
\def\vdelta{{\bm{\delta}}}
\def\vf{{\bm{f}}}
\def\vu{{\bm{u}}}
\def\vv{{\bm{v}}}
\def\vz{{\bm{z}}}
\def\mD{{\bm{D}}}
\def\mI{{\bm{I}}}
\def\mK{{\bm{K}}}
\def\mM{{\bm{M}}}
\def\mP{{\bm{P}}}
\def\mQ{{\bm{Q}}}
\def\mU{{\bm{U}}}
\def\mV{{\bm{V}}}
\DeclareMathAlphabet{\mathsfit}{\encodingdefault}{\sfdefault}{m}{sl}
\SetMathAlphabet{\mathsfit}{bold}{\encodingdefault}{\sfdefault}{bx}{n}
\def\gC{{\mathcal{C}}}
\def\gE{{\mathcal{E}}}
\def\gG{{\mathcal{G}}}
\def\gK{{\mathcal{K}}}
\def\gN{{\mathcal{N}}}
\def\gS{{\mathcal{S}}}
\def\gX{{\mathcal{X}}}
\def\gY{{\mathcal{Y}}}
\def\sR{{\mathbb{R}}}
\newcommand{\E}{\mathbb{E}}
\newcommand{\R}{\mathbb{R}}
\newcommand{\Int}{{\text{Int}}}
\let\saveqed\qed
\renewcommand\qed{%
   \ifmmode\displaymath@qed
   \else\saveqed
   \fi}
\theoremstyle{plain}
\newtheorem{theorem}{Theorem}[section]
\newtheorem{proposition}[theorem]{Proposition}
\newtheorem{lemma}[theorem]{Lemma}
\newtheorem{corollary}[theorem]{Corollary}
\theoremstyle{definition}
\newtheorem{assumption}{Assumption}
\theoremstyle{remark}
\newtheorem{remark}{Remark}[section]
\icmltitlerunning{History-Driven Target Towards Efficient Nonlinear MCMC on General Graphs}
\begin{document}

\twocolumn[
\icmltitle{Beyond Self-Repellent Kernels: History-Driven Target Towards Efficient Nonlinear MCMC on General Graphs}

% It is OKAY to include author information, even for blind
% submissions: the style file will automatically remove it for you
% unless you've provided the [accepted] option to the icml2025
% package.

% List of affiliations: The first argument should be a (short)
% identifier you will use later to specify author affiliations
% Academic affiliations should list Department, University, City, Region, Country
% Industry affiliations should list Company, City, Region, Country

% You can specify symbols, otherwise they are numbered in order.
% Ideally, you should not use this facility. Affiliations will be numbered
% in order of appearance and this is the preferred way.
\icmlsetsymbol{equal}{*}

\begin{icmlauthorlist}
\icmlauthor{Jie Hu}{yyy}
\icmlauthor{Yi-Ting Ma}{yyy}
\icmlauthor{Do Young Eun}{yyy}
\end{icmlauthorlist}

\icmlaffiliation{yyy}{Department of Electrical and Computer Engineering, North Carolina State University, Raleigh, North Carolina, USA}

\icmlcorrespondingauthor{Do Young Eun}{dyeun@ncsu.edu}

% You may provide any keywords that you
% find helpful for describing your paper; these are used to populate
% the "keywords" metadata in the PDF but will not be shown in the document
\icmlkeywords{Adaptive MCMC, Graph Sampling, Stochastic Approximation, Asymptotic analysis}

\vskip 0.3in
]

% this must go after the closing bracket ] following \twocolumn[ ...

% This command actually creates the footnote in the first column
% listing the affiliations and the copyright notice.
% The command takes one argument, which is text to display at the start of the footnote.
% The \icmlEqualContribution command is standard text for equal contribution.
% Remove it (just {}) if you do not need this facility.

\printAffiliationsAndNotice{}  % leave blank if no need to mention equal contribution
%\printAffiliationsAndNotice{\icmlEqualContribution} % otherwise use the standard text.

\begin{abstract}
We propose a \textit{history-driven target (HDT)} framework in Markov Chain Monte Carlo (MCMC) to improve any random walk algorithm on discrete state spaces, such as general undirected graphs, for efficient sampling from target distribution $\vmu$. With broad applications in network science and distributed optimization, recent innovations like the self-repellent random walk (SRRW) achieve near-zero variance by prioritizing under-sampled states through transition kernel modifications based on past visit frequencies. However, SRRW's reliance on explicit computation of transition probabilities for all neighbors at each step introduces substantial computational overhead, while its strict dependence on time-reversible Markov chains excludes advanced non-reversible MCMC methods. To overcome these limitations, instead of direct modification of transition kernel, HDT introduces a history-dependent target distribution $\vpi[\rvx]$ to replace the original target $\vmu$ in any graph sampler, where $\rvx$ represents the empirical measure of past visits. This design preserves lightweight implementation by requiring only local information between the current and proposed states and achieves compatibility with both reversible and non-reversible MCMC samplers, while retaining unbiased samples with target distribution $\vmu$ and near-zero variance performance. Extensive experiments in graph sampling demonstrate consistent performance gains, and a memory-efficient Least Recently Used (LRU) cache ensures scalability to large general graphs.
\end{abstract}

\section{Introduction}
\label{section: Introduction}
Random walks on general graphs are fundamental tools across diverse disciplines, including physics, statistics, machine learning, biology, and social sciences \citep{robert1999monte,grover2016node2vec,masuda2017random,kim2024revisiting}, powering applications like online social networks \citep{xie2021optimizing}, web crawling \citep{olston2010web}, robotic exploration \citep{placed2023survey}, and mobile networks \citep{triastcyn2022decentralized,ayache2023walk}. Starting from any state (node), a walker transitions to one of its neighbors along the connected edges using only local information, combining simplicity with scalability. This makes random walks powerful for exploring large networks and performing graph-based Markov chain Monte Carlo (MCMC) tasks to generate samples from target distributions $\vmu$ on the graph (e.g., uniform, degree-based, or energy-based). They also enable distributed optimization in networked systems \citep{sun2018on,hu2022efficiency,even2023stochastic,hendrikx2023principled}. Simple random walks \citep{gjoka2011practical,perozzi2014deepwalk} and Metropolis-Hastings (MH) random walks \citep{metropolis1953equation,hastings1970monte,xia2019random} are foundational methods for these tasks, though they can suffer from slow mixing and local trapping.

\begin{figure*}[ht]
\vskip 0.1in
\begin{center}
    \includegraphics[width=1\linewidth]{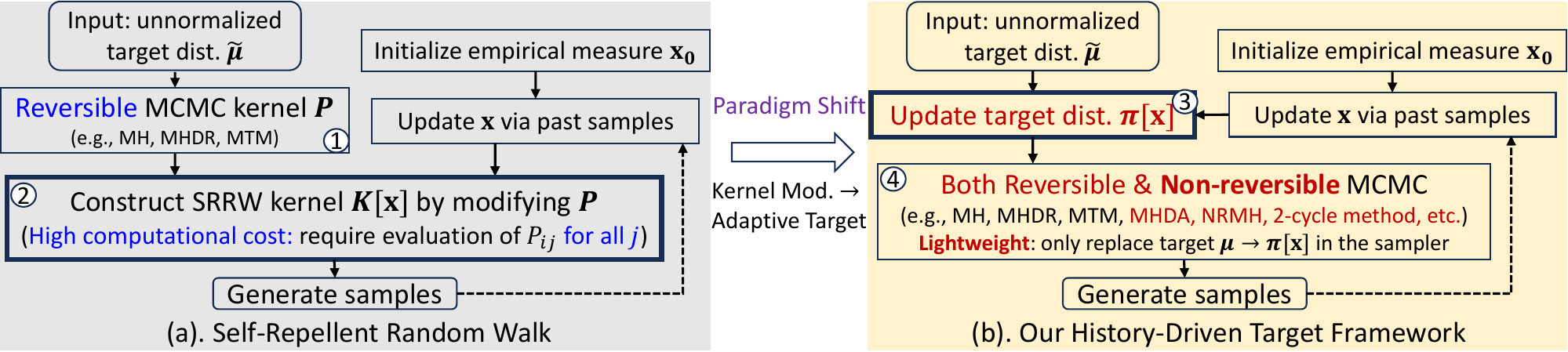}
    \vskip -0.1in
    \caption{Flowchart comparison between (a) SRRW \citep{doshi2023self} and (b) our History-Driven Target (HDT) framework.}
    \label{fig:illustrative figure}
\end{center}
\vskip -0.2in
\end{figure*}
\subsection{Recent Advances in Graph Sampling}
In social networks, e-commerce, recommendation systems and other domains, practitioners need to estimate graph size, node degree distribution, and label distribution \citep{xie2021optimizing}. These applications include detecting bot populations, identifying high-value user segments or scarce features \citep{nakajima2023random}, and gradient of local loss for token algorithms in distributed optimizations \citep{sun2018on,even2023stochastic}, especially with limited graph access. However, a standard random walk can be slow in discovering underrepresented regions or rare features, forcing more samples to achieve acceptable levels of accuracy. To improve the sampling efficiency of the MH algorithm, MH with Delayed Rejection reduces persistent rejections to facilitate exploration \citep{green2001delayed}. Another variant is the Multiple-Try Metropolis (MTM), which proposes multiple candidates at each step and selects one based on their weights \citep{liu2000multiple,pandolfi2010generalization}. The choice of weights is further refined by \citet{chang2022rapidly} with the integration of locally balanced functions in \citet{zanella2020informed}, quantifying the improvement in mixing time. Further improvements have been achieved via non-reversible random walk techniques. These include lifted Markov chains, which augment the state space with directional information \citep{chen1999lifting,diaconis2000analysis,apers2017lifting}; 2-cycle MCMC methods, which alternate between two reversible chains \citep{maire2014comparison,ma2016unifying,andrieu2021peskun}; and modifications from a reversible chain by adding antisymmetric perturbations \citep{suwa2010markov,chen2013accelerating,bierkens2016non,thin2020nonreversible}. Non-backtracking random walks, which avoid revisiting the immediate past, have also improved exploration \citep{alon2007non,lee2012beyond,hermon2019reversibility}.

In fields such as statistical physics (e.g., Ising or Potts models \citep{grathwohl2021oops,zhang2022langevin}), machine learning (e.g., energy-based models for image tasks or Bayesian methods for variable selection \citep{hinton2012practical,sun2023any}), and econometrics (e.g., MCMC-MLE for random graph models \citep{bhamidi2011mixing, byshkin2016auxiliary}), a discrete state space can be represented as a graph. Here, nodes denote configurations, which are linked by edges via predefined Hamming distance. Efficient exploration yields quicker, more accurate system representation and inference. Specifically, informed proposals \citep{zanella2020informed} utilize local data to optimize acceptance-exploration balance but struggle with large-scale applicability. Gradient-based methods improve these proposals through techniques like Taylor series, Metropolis-adjusted Langevin, and Newton approximations \citep{grathwohl2021oops, rhodes2022enhanced, zhang2022langevin, sun2023discrete, xiang2023efficient}. Nonetheless, they are designed for structured spaces and specific energy functions, unlike our focus on general graphs and arbitrary distributions $\vmu$, which renders gradient-based MCMC methods inapplicable.

All the aforementioned works are still rooted in Markov chains. A recent breakthrough is the self-repellent random walk (SRRW) over general graphs \citep{doshi2023self}, a \textit{nonlinear Markov chain} that modifies a time-reversible Markov chain with $\vmu$-invariant transition kernel $\mP$, as shown by Box \ding{172} in Figure \ref{fig:illustrative figure}(a).\footnote{In practice, the unnormalized target $\tilde \vmu$ is typically used in MCMC algorithms instead of the normalized $\vmu$, bypassing the need for calculating the normalizing constant.} Unlike non-backtracking random walks relying solely on the most recent history \citep{alon2007non,lee2012beyond}, SRRW incorporates the \textit{entire past trajectory} to adaptively adjust transition probabilities towards less-visited states while preserving convergence to $\vmu$. Formally, let $\gG = (\gX, \gE)$ be a connected, undirected graph with node set $\gX$ and edge set $\gE$, and let $\gN(i) \triangleq \{j \!\in\! \gX \mid (i, j) \!\in\! \gE\}$ be the neighbor set of state $i$. Its cardinality is denoted $|\gN(i)|$. Let $\overline{\gN}(i) \triangleq \gN(i) \cup \{i\}$ be the `expanded' neighborhood of state $i$. Denote by $x_i \in (0, 1)$ the visit frequency of state $i \in \gX$. Given a time-reversible kernel $\mP$ with $\mu_i P_{ij} = \mu_j P_{ji}$ where $P_{ij} > 0$ only if $(i,j) \in \gE$, SRRW constructs a \emph{nonlinear} kernel $\mK[\rvx]$, parametrized by $\rvx \triangleq [x_i]_{i\in\gX}$, such that, for $j \in \overline{\gN}(i)$,
\begin{equation}\label{eqn:original SRRW kernel}
    K_{ij}[\rvx] = Z_{i}^{-1} P_{ij} (x_j / \mu_j)^{-\alpha},
\end{equation}
where $\alpha\!\geq\! 0$ is the strength of self-repellency, and the normalizing constant $Z_{i} \!\triangleq\!\! \sum_{k\in\overline{\gN}(i)}\! P_{\!ik} (x_k / \mu_k)^{\!-\alpha}$. Note that $K_{ij}[\rvx] = 0$ for $j \notin \overline{\gN}(i)$, conforming to the graph structure. The transition probability to state $j$ is dynamically enlarged or reduced from the original probability $P_{ij}$ based on the history so far via the ratio $(x_j / \mu_j)^{\!-\alpha}$ between empirical measure and target distribution of state $j$ (Box \ding{173} in Figure \ref{fig:illustrative figure}). The kernel \eqref{eqn:original SRRW kernel} is also known to be `scale-invariant' in the sense that it remains unchanged under any constant multiple of $\rvx$ or $\vmu$. This self-repellent scheme shows remarkable success, achieving near-zero variance for large $\alpha$ at a rate of $O(1/\alpha)$ for graph sampling \citep{doshi2023self} and $O(1/\alpha^2)$ for distributed optimization \citep{hu2024accelerating}, and can even outperform \textit{i.i.d.} sampling (i.e., a random jumper) while still walking on the graph.

\subsection{Limitations of SRRW}\label{section 1.2}
\textbf{Computational Issue.} Despite its theoretical appeal and general applicability to reversible MCMC samplers, SRRW suffers from critical computational overhead that undermines its practicality, a shortfall overlooked in \citet{doshi2023self}. To fully understand these challenges, it is important to first clarify the role of self-transition probability $P_{ii}$ in the time-reversible chain $\mP$ and how it is used in SRRW. In the MH algorithm, from state $i$, another neighbor state $j \in \gN(i)$ is proposed with probability $Q_{ij}$ and accepted with probability $A_{ij} \!\triangleq\! \min\{1, \mu_j Q_{ji} / \mu_i Q_{ij}\}$. If accepted, the walker moves to $j$, making the transition probability $P_{ij} = Q_{ij}A_{ij}$, keeping it reversible with respect to the given target $\vmu$. If rejected, the chain remains at $i$, contributing to the self-transition probability $P_{ii} \!=\! 1 \!-\! \sum_{j\neq i} Q_{ij} A_{ij}$. Note that $P_{ii} > 0$ unless $A_{ij} \!=\! 1$ for all $j \neq i$, a rare case where the proposal $\mQ$ is already $\vmu$-invariant. Importantly, $P_{ii}$ is just a byproduct out of the accept-reject mechanism, whose value is never explicitly pre-computed in standard MH algorithms. In addition, the acceptance ratio $A_{ij}$ is evaluated only for the proposed state $j$, ensuring a lightweight sequential implementation. 

However, evaluating $K_{\!ij}[\rvx]$ in \eqref{eqn:original SRRW kernel} demands pre-computation of the normalizing constant $Z_i$, which necessitates knowledge of $P_{ij}$ for all $j \in \overline{\gN}(i)$, \textit{including} the self-transition probability $P_{ii}$ when $j \!=\! i$, implying that all $A_{ij}$ must be pre-calculated. This modification completely \textit{destroys} the essence of MH, where the acceptance ratio is evaluated on demand only for a proposed state one at a time and is never pre-quantified for all possible $j \!\in\! \overline{\gN}(i)$. This issue becomes even more pronounced in scenarios with a large neighborhood size $|\overline{\gN}(i)|$, where the computational cost scales accordingly. On the other hand, sampling $j$ directly from $K_{\!(i, *)}[\rvx] \!\propto\! P_{\!(i,*)}\!(x_* /\! \mu_*)^{\!-\alpha}$ as a target distribution over $\overline{\gN}(i)$ might be deemed as an alternative to bypass direct computation of the normalizing constant $Z_i$. However, such approach also fails because it still requires knowledge of the target distribution up to a multiplicative constant, including $P_{ii}$, defying the purpose of lightweight sampling. In short, $P_{ii}$ becomes a requirement in SRRW rather than a natural outcome of MH, translating to equivalent computational costs to sample from $K_{\!(i,*)}[\rvx]$, and thus offering no benefits.

\textbf{Requirement of Time-Reversibility.} Another significant drawback of SRRW is its strict reliance on time-reversible Markov chains. While this reversibility ensures a well-defined stationary distribution for the modified kernel $\mK[\rvx]$ \citep[Proposition 2.1]{doshi2023self}, it inherently excludes non-reversible MCMC techniques that still converge to the same target distribution $\vmu$ with better performance \citep{neal2004improving,suwa2010markov,lee2012beyond,chen2013accelerating,maire2014comparison,ma2016unifying,bierkens2016non,andrieu2021peskun}.

\textbf{Memory Constraints.} Despite being technically viable, SRRW encounters memory issues in large graphs and configuration spaces where the empirical measure $\rvx$ shares the dimensionality with the size of the state space. Retaining complete historical data through $\rvx$ at each step can exceed memory capacity in simulations. Thus, it is natural to ask:

\textit{Can we design a universal method to harness SRRW benefits, while tackling these challenges: (i) maintain computational efficiency, (ii) leverage both reversible and non-reversible MCMC samplers, and (iii) reduce memory usage?}

\subsection{Our Approach and Contributions}
In this work, we tackle the three aforementioned challenges arising from nonlinear kernels of SRRW. Specifically, we theoretically resolve two of them: computational issue and incompatibility with non-reversible Markov chains, and propose a heuristic scheme that reduces storage requirements for memory issue. Comprehensive applications to exponentially large state spaces, i.e., high-dimensional problems, are deferred to future work.

In SRRW, the self-repellent mechanism is embedded directly into the transition kernel $\mK[\rvx]$, leading to significant computational costs, as discussed earlier. However, the essence of a history-dependent scheme need not reside in the kernel itself. As depicted in Box \ding{174} of Figure \ref{fig:illustrative figure}(b), we shift this scheme to a family of \textit{`history-driven' target (HDT) distributions}. Specifically, we replace the original target $\vmu$, a single point in the probability simplex, with an adaptive distribution $\vpi[\rvx]$ parameterized by the empirical measure $\rvx$, which evolves dynamically at each step based on the `feedback' from past visits. The exact formulation of $\vpi[\rvx]$ in \eqref{eqn:our proposed pi} ensures that less/more-visited state $i$ is assigned higher/lower probabilities than the original target $\mu_i$, dynamically over time. Our approach allows for seamless integration with any MCMC technique, using the target $\vpi[\rvx]$ for a given $\rvx$. Unlike SRRW, in our framework, we can reuse any advanced MCMC technique by simply replacing target $\vmu$ by our design $\vpi[\rvx]$ in both reversible and non-reversible samplers, retaining the lightweight, sequential, and adaptive nature of MCMC methods, as shown in Box \ding{175} of Figure \ref{fig:illustrative figure}. By decoupling the self-repellent mechanism from the transition kernel, we overcome the computational limitations of SRRW while achieving near-zero variance. This paradigm shift offers the best of both worlds: computational efficiency and broad compatibility with advanced MCMC samplers. Our contributions are as follows.
\begin{enumerate}[itemsep=0pt, topsep=0pt, labelsep=0.3em]
    \item \textbf{A General Self-Repellent Framework:} We introduce the \textit{HDT framework}, which replaces the target $\vmu$ by the history-driven target distribution $\vpi[\rvx]$ via rigorous design. This formulation is compatible with both reversible and non-reversible MCMC samplers while eliminating the computational overhead in SRRW. \vspace{-1mm}
    \item \textbf{Theoretical Guarantees:} We prove that our HDT framework converges almost surely to the original target $\vmu$ and achieves $O(1/\alpha)$ variance reduction compared to the base sampler in the central limit theorem (CLT). Moreover, we establish a cost-based CLT, showing an additional $O(1/\E[|\overline{\gN}(i)|])$ variance reduction relative to SRRW by the average neighborhood size.\vspace{-1mm}
    \item \textbf{Empirical Evaluation:} We perform extensive graph sampling simulations across various settings, including real-world graphs and both reversible and non-reversible MCMC samplers, and showcase the consistent efficiency and adaptability of our HDT framework.\vspace{-1mm}
    \item \textbf{Scalable Implementation for Large Graphs:} We implement an LRU (Least Recently Used) cache scheme to manage $\rvx$ with limited memory. For example, even when maintaining visit frequencies for only $10\%$ of states, our method leads to more than a $10\%$ reduction in total variation distance over the original MCMC method, highlighting HDT's effectiveness in limited-memory environments.\vspace{-1mm}
\end{enumerate}

\section{Preliminaries}\label{preliminaries}
\textbf{Basic Notations.} Let $\gX$ denote a finite discrete state space. Vectors are denoted by lower-case bold letters, e.g., $\vv \!\triangleq\! [v_i]_{i \in \gX}$, and matrices by upper-case bold, e.g., $\mM\!\triangleq\! [M_{ij}]_{i, j \in \gX}$. The diagonal matrix $\mD_{\vv}$ is constructed from the vector $\vv$, with its components placed along the main diagonal. We denote by $\vone, \vzero$ vectors of all ones and zeros with proper dimensions, respectively. Denote by $\Sigma$ the $|\gX|$-dimensional probability simplex over $\gX$, with $\Int(\Sigma)$ being its interior, i.e., $\rvx \!\in\! \Int(\Sigma)$ implies that $x_i \!\in\! (0, 1), \forall i \!\in\! \gX$. For a probability vector $\rvx \in \Sigma$, we write $\tilde \rvx$ to denote any of its unnormalized counterparts, i.e., $\rvx = \tilde \rvx / (\vone^T \tilde \rvx) \propto \tilde \rvx$. Define $\vdelta_{i}$ as the canonical vector whose components are all zero, except the $i$-th entry being one. Let $N(\vzero, \mV)$ represent the multivariate Gaussian distribution with zero mean and covariance matrix $\mV$. We use $\xrightarrow[dist.]{}$ for weak convergence.

\textbf{Ergodic Markov Chains.} Consider an ergodic Markov chain with transition kernel $\mP \!\in\! \sR^{|\gX|\times |\gX|}$ and target distribution $\vmu \!\in\! \Int(\Sigma)$, satisfying $\mP\vone \!=\! \vone$ and $\vmu^T \mP \!=\! \vmu^T$. In MCMC, the transition kernel $\mP$ is defined by the target $\vmu$. Throughout this work, we assume that $\mP$ is full-rank and continuous in $\vmu$. Denote by $(\lambda_i,\vu_i, \vv_i)$ the eigenpair of $\mP$, comprising the eigenvalues as well as the corresponding left and right eigenvectors. Here, the Perron–Frobenius eigenvalue $\lambda_1 = 1$ with its corresponding eigenvectors $\vu_1 = \vmu$ and $\vv_1 = \vone$. In addition, $\vu_i^T\vv_i = 1$ and $\vu_i^T \vv_j = 0$ for all $i, j \in [2, |\gX|]$ with $i\neq j$. Consider the sample path $\{X_s\}_{s\geq 1}$ driven by an ergodic Markov chain on $\gX$, with $\delta(\cdot)$ as the indicator function. The cumulative visit count to state $i$ by time $n$ is expressed as $\tilde x_n(i) \triangleq \sum_{s=1}^n\delta(X_s = i)$, ensuring $\sum_{i\in\gX} \tilde x_n(i) = n$. The empirical measure $\rvx_n \triangleq [x_n(i)]_{i\in\gX}$ is the normalized version of $\tilde \rvx_n$, hence $\rvx_n = \tilde \rvx_n / (\vone^T \tilde \rvx_n) = \tilde \rvx_n / n$. Alternatively, we can express $\rvx_n$ iteratively as follows:\vspace{-2mm}
\begin{equation}\label{eqn:empirical_measure_update}
    \rvx_{n+1} = \rvx_n + \frac{1}{n+1}(\vdelta_{X_{n+1}} - \rvx_n).\vspace{-2mm}
\end{equation}
The ergodic theorem for Markov chains \citep[Theorem 3.3.2]{bremaud2013markov} states that the empirical measure $\rvx_n$ of an ergodic Markov chain, updated via \eqref{eqn:empirical_measure_update}, almost surely converges to $\vmu$ as $n\to\infty$. In addition, the multivariate CLT \citep[Chapter 1.8.1]{brooks2011handbook} shows that $\sqrt{n}(\rvx_n - \vmu)$ weakly converges to $N(\vzero, \mV)$, where the covariance matrix
\begin{equation}\label{eqn:limiting covariance original definition}
\textstyle \mV \!=\! \lim_{t\!\to\!\infty}\!\frac{1}{t}\E[(\sum_{s=1}^t(\vdelta_{X_s} \!-\! \vmu))(\sum_{s=1}^t(\vdelta_{X_s} \!-\! \vmu))^T],
\end{equation}
which serves as the covariance matrix of both reversible and non-reversible MCMC samplers in Theorem \ref{theorem:main_results}. By \citet[Chapter 6.3.3]{bremaud2013markov}, $\mV$ can be rewritten as \vspace{-2mm}
\begin{equation}\label{eqn:base_covariance}
\mV = \sum_{i=2}^{|\gX|}\frac{1+\lambda_i}{1-\lambda_i}\mD_{\vmu}\vv_i\vu_i^T.\vspace{-2mm}
\end{equation}
When $\mP$ is reversible, the property $\vu_i = \mD_{\vmu}\vv_i$ gives $\mV = \sum_{i=2}^{|\gX|}\frac{1+\lambda_i}{1-\lambda_i}\vu_i\vu_i^T$. See Appendix \ref{appendix: matrix form of covariance} for the derivation of \eqref{eqn:base_covariance}.

\textbf{Properties of SRRW.} The SRRW algorithm in \citet{doshi2023self} includes two steps at each time $n$: First, sample $X_{n+1} \sim K_{(X_n, \cdot)}[\rvx_n]$ in \eqref{eqn:original SRRW kernel} using the current empirical measure $\rvx_n$ and state $X_n$; Second, update $\rvx_{n+1}$ via \eqref{eqn:empirical_measure_update}. This ensures that the SRRW kernel $\mK[\rvx]$ is adapted to the updated empirical measure $\rvx_n$ at each time $n$, making it a nonlinear Markov chain. A notable property of SRRW is `scale-invariance', i.e., for any non-zero scalar $C$, $K_{ij}[C\rvx] = K_{ij}[\rvx], \forall i, j \in \gX$, such that computing $K_{ij}[\rvx]$ only requires knowing vectors $\rvx, \vmu$ up to some constant multiples. It has been theoretically proved that the empirical measure $\rvx_n \!\to\! \vmu$ almost surely as $n \!\to\!\infty$, and the scaled error $\sqrt{n}(\rvx_n \!-\! \vmu) \xrightarrow[dist.]{n\to\infty} N(\vzero, \mV^{\text{{\tiny SRRW}}}(\alpha))$, where \vspace{-3mm}
\begin{equation}\label{eqn:SRRW_covariance}
\mV^{\text{{\tiny SRRW}}}(\alpha) = \sum_{i=2}^{|\gX|}\frac{1}{2\alpha(\lambda_i + 1) + 1}\cdot \frac{1+\lambda_i}{1-\lambda_i} \vu_i\vu_i^T,\vspace{-2mm}
\end{equation}
and $\lambda_i, \vu_i$ are eigenvalues and left eigenvectors of the time-reversible base MCMC kernel $\mP$ leveraged by SRRW. 

\section{Main Results}
We present our theoretical findings by first outlining the design principles for a HDT distribution that balances exploration with computational efficiency. Next, we examine the convergence and statistical properties of the HDT-MCMC algorithm. Lastly, we evaluate its computational cost against SRRW within a fixed budget using a cost-based CLT, showcasing the performance benefits of HDT-MCMC.
\subsection{Design of History-Driven Target Distribution}\label{subsection: Design_of_HDT}
The foundation of HDT-MCMC is in the design of a history-driven target $\vpi[\rvx, \vmu]$ that relies solely on the visit count $\tilde \rvx$ and an unnormalized target $\tilde\vmu$ (as an input), while encapsulating self-repellent behavior in the resulting kernel. Our goal is to design the HDT $\vpi[\rvx,\vmu]$ to satisfy the following four conditions:
\begin{enumerate}[label=C\arabic*, ref=C\arabic*, itemsep=0pt, topsep=0pt, labelsep=0.3em]
    \item \label{condition:1} (Scale Invariance). $\pi_i[\rvx,\vmu] \!=\! \pi_i[\tilde \rvx, \tilde \vmu]$. 
    \item \label{condition:2} (Local Dependence). The unnormalized term $\tilde \pi_i [\rvx,\vmu]$ depends only on $\mu_i$ and $x_i$, and is continuous in $x_i, \mu_i$.
\end{enumerate}
Henceforth, we suppress $\vmu$ and simply write $\vpi\![\rvx]$ for brevity, whenever its dependence on $\vmu$ is clear from the context.
\begin{enumerate}[label=C\arabic*, ref=C\arabic*, itemsep=0pt, topsep=0pt, labelsep=0.3em]
    \setcounter{enumi}{2}
    \item \label{condition:3} (Fixed Point). $\vpi[\vmu] = \vmu$.
    \item \label{condition:4} (History Dependence). $\vpi[\rvx]$ promotes/deters exploration of under/over-sampled states.
\end{enumerate}
\ref{condition:1} preserves the key property that our HDT-MCMC algorithm only needs the unnormalized terms $\tilde \vmu$, and $\tilde \rvx$ when, for instance, computing an acceptance ratio of MH. Examples of the unnormalized target $\tilde \vmu$ include: $\tilde\mu_i \!=\! 1$ for uniform target, $\tilde\mu_i \!=\! |\gN(i)|$ for degree-based target, and $\tilde\mu_i \!=\! e^{-H(i)}$ for energy-specific target \citep{gjoka2011practical,lee2012beyond,grathwohl2021oops,pynadath2024gradientbased}. \ref{condition:2} ensures that the sampler uses only minimal local information, i.e., $\tilde x_i$ and $\tilde \mu_i$, in its update. By eliminating neighbor information, the algorithm avoids prohibitive computational costs associated with increasing neighborhood size, e.g., in large or complete graphs. \ref{condition:3} indicates that if empirical measure $\rvx$ converges to the target $\vmu$, the HDT $\vpi[\vmu]$ remains $\vmu$ itself. Indeed, if $\vpi[\vmu] \neq \vmu$, then an MCMC sampler with its target $\vpi[\rvx]$ would converge to a different distribution, contradicting our goal of preserving $\vmu$ as the true target distribution. \ref{condition:4} is to facilitate exploration, ensuring that the sampler adaptively `pushes' itself away from states already visited more often than $\mu_i$, thereby mimicking the key self-repellent effect of SRRW in \citet{doshi2023self}.

\begin{lemma}\label{lemma:unique design of pi_x}
    Conditions \ref{condition:1} - \ref{condition:4} hold if and only if \vspace{-1mm}
    \begin{equation}\label{eqn:our proposed pi}
        \pi_i[\rvx] \propto \mu_i ( x_i / \mu_i)^{-\alpha} ~~~ \text{for any}~\alpha> 0.\vspace{-3mm}
    \end{equation}
\end{lemma}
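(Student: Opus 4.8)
The plan is to prove the two directions of the "if and only if" separately, with the harder direction being necessity (the "only if"). For \emph{sufficiency}, I would simply verify that the candidate $\pi_i[\rvx] \propto \mu_i(x_i/\mu_i)^{-\alpha}$ satisfies all four conditions. \ref{condition:1} is immediate since scaling $\rvx \mapsto c\rvx$ and $\vmu \mapsto c'\vmu$ only rescales the unnormalized vector and the normalization absorbs the constants; moreover $\mu_i(x_i/\mu_i)^{-\alpha} = \mu_i^{1+\alpha} x_i^{-\alpha}$ can be written using only $\tilde x_i, \tilde\mu_i$ up to a global constant. \ref{condition:2} holds because $\tilde\pi_i = \mu_i^{1+\alpha} x_i^{-\alpha}$ depends only on $(x_i,\mu_i)$ and is continuous on $(0,1)^2$ for $\alpha>0$. \ref{condition:3} follows by substituting $\rvx = \vmu$: $(x_i/\mu_i)^{-\alpha} = 1$, so $\pi_i[\vmu] \propto \mu_i$, hence $\vpi[\vmu] = \vmu$. \ref{condition:4} follows because $(x_i/\mu_i)^{-\alpha}$ is strictly decreasing in $x_i$ for $\alpha>0$, so an over-sampled state (with $x_i > \mu_i$) gets $\pi_i < \mu_i$ and an under-sampled state gets $\pi_i > \mu_i$ — i.e., the sampler is pushed away from over-visited states.

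For \emph{necessity}, I would argue as follows. By \ref{condition:2}, there is a function $g$ with $\tilde\pi_i[\rvx] = g(x_i,\mu_i)$, continuous in both arguments, and by \ref{condition:1} the normalized $\pi_i$ is invariant under separate rescalings of $\rvx$ and $\vmu$. The key step is to convert scale invariance into a multiplicative functional equation: for all scalars $c,c' > 0$,
\begin{equation*}
\frac{g(c x_i, c'\mu_i)}{\sum_{k} g(c x_k, c'\mu_k)} = \frac{g(x_i,\mu_i)}{\sum_k g(x_k,\mu_k)},
\end{equation*}
which forces $g(cx, c'\mu) = \phi(c,c')\, g(x,\mu)$ for some positive function $\phi$ independent of the state. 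Standard arguments (Cauchy-type functional equation under continuity, e.g. passing to logarithms) then give $g(x,\mu) = \text{const}\cdot x^{a}\mu^{b}$ for some exponents $a,b$. Next, \ref{condition:3} pins down a relation between $a$ and $b$: setting $x_i = \mu_i$ we need $\pi_i[\vmu] \propto \mu_i$, so $\mu_i^{a+b} \propto \mu_i$ as a function of $\vmu \in \Int(\Sigma)$, forcing $a+b = 1$. Writing $a = -\alpha$ gives $b = 1+\alpha$ and hence $\tilde\pi_i[\rvx] \propto \mu_i^{1+\alpha} x_i^{-\alpha} = \mu_i(x_i/\mu_i)^{-\alpha}$. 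Finally \ref{condition:4} (strict monotonicity: $\pi_i$ decreasing in $x_i$, i.e. $a<0$) rules out $\alpha \le 0$, leaving exactly $\alpha > 0$.

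The main obstacle I anticipate is making the functional-equation step fully rigorous: one must be careful that the scale-invariance identity holds simultaneously for \emph{all} admissible empirical measures $\rvx$ and targets $\vmu$ (not just a single pair), so that the cross-state quotient argument genuinely isolates the state-wise form of $g$. A clean way is to fix two coordinates, vary $c$ alone (with $c'=1$) to show $g(cx,\mu)/g(x,\mu)$ is independent of $x$ and multiplicative in $c$, deduce the power law in $x$ by continuity, then repeat symmetrically in $\mu$ with $c$ alone; one should also note $g>0$ on $(0,1)^2$ (else some $\pi_i$ vanishes, violating invariance/continuity at interior points), so logarithms are legitimate. The remaining steps — imposing \ref{condition:3} and \ref{condition:4} — are short algebraic checks. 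I would relegate the detailed functional-equation manipulation to an appendix and keep the main-text proof at the level of this outline.
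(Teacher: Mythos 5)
Your proposal is correct and follows essentially the same route as the paper's proof: reduce via \ref{condition:2} and \ref{condition:1} to the multiplicative functional equation $f(c_1x,c_2\mu)=g(c_1,c_2)f(x,\mu)$ (the paper makes the state-independence of $g$ rigorous with a ``bridge'' argument over pairs of probability vectors, matching the obstacle you flag), solve it as a Cauchy-type equation to get $f(x,\mu)=Cx^{\rho_1}\mu^{\rho_2}$, then use \ref{condition:3} to force $\rho_1+\rho_2=1$ and \ref{condition:4} to fix the sign of the exponent. The only differences are cosmetic: you spell out the easy sufficiency direction, which the paper leaves implicit, and you insist on strict $\alpha>0$ where the paper's final step allows $\alpha\geq 0$.
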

The proof of Lemma \ref{lemma:unique design of pi_x} can be found in Appendix \ref{appendix: proof of unique design of pi_x}. It reveals that an HDT distribution $\vpi[\rvx]$ satisfying all four conditions \ref{condition:1} - \ref{condition:4} must take the simple form of \eqref{eqn:our proposed pi}. A special case $\alpha = 0$ reduces $\vpi[\rvx]$ to the original target $\vmu$, which becomes history-independent. Lemma \ref{lemma:unique design of pi_x} also demonstrates that our design \eqref{eqn:our proposed pi} suffices to work with unnormalized quantities $\tilde \vmu$ and $\tilde \rvx$ in place of $\vmu$ and $\rvx$, i.e.,\vspace{-1mm}
\begin{equation}\label{eqn: practical pi_x}
    \pi_i[\rvx] \propto \tilde \pi_i[\rvx] = \tilde \mu_i(\tilde x_i / \tilde \mu_i)^{-\alpha}.\vspace{-1mm}
\end{equation}
Following Lemma \ref{lemma:unique design of pi_x}, Algorithm \ref{alg:L_SRRW} shows the steps in our HDT-MCMC framework.\footnote{Similar to \citet{doshi2023self}, we initialize $\tilde x_i > 0$ with fake visit count for well-defined $\vpi[\rvx]$ at every step. In reality, memory is allocated for states that have received actual visits, enabling adaptive storage for $\tilde \rvx$.} As illustrated by Box \ding{175} in Figure \ref{fig:illustrative figure}, the base MCMC sampler for graph sampling in our framework can either be time-reversible, i.e., MH \citep{metropolis1953equation,hastings1970monte}, MHDR \citep{green2001delayed}, MTM \citep{liu2000multiple,chang2022rapidly}, or non-reversible, i.e., MHDA \citep{lee2012beyond}, 2-cycle Markov chains \citep{maire2014comparison,andrieu2021peskun}, and non-reversible MH \citep{bierkens2016non,thin2020nonreversible}.

\begin{algorithm}[tb]
\caption{HDT-MCMC: Graph Sampling Framework}
\label{alg:L_SRRW}
\begin{algorithmic}
\STATE {\bfseries Input:} Graph $\gG(\gX,\gE)$, parameter $\alpha \geq 0$, unnormalized target $\tilde\vmu$, number of iterations $T$, a base MCMC sampler (\textit{Bring Your Own MCMC}).
\STATE \textbf{Initialization:} state $X_0 \!\in\! \gX$, visit count $\tilde x(i) \!>\!0, \forall i \!\in\! \gX$.
\FOR{$n = 0$ {\bfseries to} $T-1$}
\STATE Step 1: Use the base sampler (reversible or non-reversible) to draw $X_{n+1}$ with history-driven target $\vpi[\rvx]$ from \eqref{eqn: practical pi_x}.
\STATE Step 2:  Update visit count $\tilde x(X_{n+1}) \!\leftarrow\! \tilde x(X_{n+1}) \!+ 1$;
\ENDFOR
\STATE {\bfseries Output:} A set of samples $\{X_n\}_{n=1}^T$.
\end{algorithmic}
\end{algorithm}
As an example, we here illustrate our HDT-MCMC if we use the standard MH algorithm as the base MCMC sampler (Step 1) in Algorithm \ref{alg:L_SRRW}.\footnote{Detailed overview of Algorithm \ref{alg:L_SRRW} applied to advanced MCMC samplers are in Appendix \ref{appendix:implementation of algorithm 1}.} At current state $X_n = i$, a candidate $j \in \gN(i)$ is selected with probability $Q_{ij}$, then the acceptance ratio is calculated through \vspace{-1mm}
\begin{equation}\label{eqn:computation of acceptance ratio}
A_{ij\!}[\rvx] \!\!=\! \min\!\left\{\!1, \!\frac{\pi_{\!j}[\rvx] Q_{\!ji\!}}{\pi_{\!i}[\rvx] Q_{\!ij\!}}\!\right\} \!\!=\!  \min\!\left\{\!1,\! \frac{\tilde \mu_{\!j} (\tilde x_{\!j} / \tilde \mu_{\!j})^{\!-\!\alpha} Q_{\!ji\!}}{\tilde \mu_{\!i} (\tilde x_{\!i} / \tilde \mu_{\!i})^{\!-\!\alpha} Q_{\!ij\!}}\!\right\}\!, \vspace{-1mm}
\end{equation}
where only the unnormalized terms $\tilde \mu_i, \tilde \mu_j, \tilde x_i$, and $\tilde x_j$ are required, keeping the same computational cost as standard MH with true target $\vmu$. Then, the sampler accepts state $j$ with probability $A_{ij}[\rvx]$ and sets $X_{n+1} = j$, or rejects it with probability $1 - A_{ij}[\rvx]$ upon which $X_{n+1} = i$ and repeats the procedure. Note that we recover the acceptance ratio $A_{ij}$ of the standard MH with target $\vmu$, when $\alpha = 0$. This manner alters the target $\vmu$ of the standard MH algorithm by HDT $\vpi[\rvx]$ while preserving the lightweight, on-demand nature of MH, in contrast to SRRW in which $P_{ij}$ must be evaluated for all possible $j \!\in\! \overline{\gN}(i)$ for a sample $X_{n+1}$. In addition to computational efficiency, $A_{ij}[\rvx]$ inherently embeds the `self-repellent' effect. If state $j$ is relatively less-visited than state $i$, i.e., $\tilde x_j / \tilde \mu_j < \tilde x_i / \tilde \mu_i$, it then follows that $A_{ij}[\rvx] \!\geq\! A_{ij}$, implying that state $j$ is more likely to be accepted than the case with standard MH, and vice versa.

\begin{remark}\label{remark:1}
    In \citet{doshi2023self}, SRRW directly modifies a time-reversible Markov chain $\mP$ to incorporate self-repellency into the kernel $\mK[\rvx]$ as in \eqref{eqn:original SRRW kernel}, which is then shown to be, for any given $\rvx \in \Int(\Sigma)$, reversible w.r.t. \vspace{-1mm}
    \[
    \pi^{\text{{{\tiny SRRW}}}}_i[\rvx] \propto \mu_i ( x_i / \mu_i)^{-\alpha} \sum_{j \in \overline{\gN}(i)} P_{ij} ( x_j / \mu_j)^{-\alpha}, \forall i \in \gX, \vspace{-2mm}
    \]
    whose proof in \citet[Appendix A]{doshi2023self} critically depends on the reversibility of $\mP$ w.r.t. $\vmu$. Note that $\pi^{\text{{{\tiny SRRW}}}}_i[\rvx]$ is the byproduct of the constructed kernel $\mK[\rvx]$ as in \eqref{eqn:original SRRW kernel}, thus inheriting the same neighborhood dependency. Simply adopting $\vpi^{\text{{{\tiny SRRW}}}}[\rvx]$ in our Algorithm \ref{alg:L_SRRW} violates \ref{condition:2} and would incur high computational cost for resulting nonlinear kernels. In contrast, our HDT \eqref{eqn:our proposed pi} \textit{decouples} neighbors in the target distribution itself, which eliminates the need to evaluate transition probabilities for all neighbors at each step, offering substantial computational savings and compatibility with both reversible and non-reversible samplers.
\end{remark}

\subsection{Performance of HDT-MCMC}\label{section 3.2}
We next analyze HDT-MCMC regarding (i) almost-sure convergence of the empirical measure $\rvx_n$ to $\vmu$, and (ii) an $O(1/\alpha)$ variance reduction relative to the base MCMC sampler with true target $\vmu$.

Observe that \eqref{eqn:empirical_measure_update} allows us to decompose $\rvx_{n+1}$ as \vspace{-1mm}
\[
\rvx_{n+1} \!=\! \rvx_n \!+\! \frac{1}{n+1}[\underbrace{(\vpi[\rvx_n] \!-\! \rvx_n)}_{\text{deterministic drift}} \!+\! \underbrace{(\vdelta_{X_{n+1}}  \!-\! \vpi[\rvx_n])}_{\text{noise term}}], \vspace{-1mm}
\]
which is the standard step in stochastic approximation (SA) with controlled Markovian dynamics \citep{kushner2003stochastic,benveniste2012adaptive,borkar2009stochastic}. It can be viewed as combining a deterministic drift $\vpi[\rvx_n] - \rvx_n$ towards the solution of the ODE \vspace{-1mm}
\begin{equation}\label{eqn:related ODE}
    \dot \rvx(t) = \vpi[\rvx(t)] - \rvx(t) \vspace{-1mm}
\end{equation}
and a noise term $\vdelta_{X_{\!n+1}} \!- \vpi[\rvx_n]$. The ODE viewpoint clarifies the global asymptotic stability of $\vmu$, while the noise term characterizes fluctuations around $\vmu$.
\begin{lemma}\label{lemma: stability ode}
    For $\vpi[\rvx]$ in \eqref{eqn:our proposed pi}, the ODE \eqref{eqn:related ODE} has a unique fixed point $\vmu$, which is globally asymptotically stable.
\end{lemma}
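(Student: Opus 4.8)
The plan is to analyze the ODE $\dot\rvx(t) = \vpi[\rvx(t)] - \rvx(t)$ on the simplex $\Sigma$ in two stages: first show that $\vmu$ is the \emph{unique} fixed point, then exhibit a Lyapunov function certifying global asymptotic stability. For uniqueness, a fixed point satisfies $\rvx = \vpi[\rvx]$, i.e. $x_i \propto \mu_i (x_i/\mu_i)^{-\alpha}$ for all $i$, which means $(x_i/\mu_i)^{1+\alpha}$ is the same constant for all $i$; since $\alpha > 0$ the map $t \mapsto t^{1+\alpha}$ is injective on $(0,\infty)$, so $x_i/\mu_i$ is constant in $i$, and as both $\rvx$ and $\vmu$ lie in $\Sigma$ that constant is $1$, giving $\rvx = \vmu$. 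I should also check the flow is well-defined and leaves $\Int(\Sigma)$ invariant: summing the ODE coordinates gives $\frac{d}{dt}\sum_i x_i(t) = \sum_i \pi_i[\rvx] - \sum_i x_i = 1 - 1 = 0$, so $\Sigma$ is invariant; and near the boundary $x_i \to 0^+$ the drift $\pi_i[\rvx] - x_i \to \pi_i > 0$ pushes inward, so $\Int(\Sigma)$ is forward-invariant and trajectories stay bounded away from the boundary.

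For global asymptotic stability, the natural candidate Lyapunov function is a Kullback–Leibler / free-energy type functional. Because the drift is a gradient-like object, I would try $\mathcal{L}(\rvx) = \sum_i x_i \log(x_i/\mu_i)$ (relative entropy of $\rvx$ w.r.t. $\vmu$) or, better suited to the power-law form of $\vpi$, a functional whose stationary condition matches $\rvx = \vpi[\rvx]$. Computing $\frac{d}{dt}\mathcal{L}(\rvx(t)) = \sum_i (\log(x_i/\mu_i) + 1)\dot x_i = \sum_i (\log(x_i/\mu_i))(\pi_i[\rvx] - x_i)$ using $\sum_i \dot x_i = 0$. Writing $r_i \triangleq x_i/\mu_i$ and noting $\pi_i[\rvx] = \mu_i r_i^{-\alpha} / (\sum_k \mu_k r_k^{-\alpha})$, this derivative should be shown to be $\le 0$, with equality only when all $r_i$ are equal (i.e. at $\vmu$). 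This is where I expect the bookkeeping to need care: one wants to argue $\sum_i (\log r_i)(\pi_i[\rvx] - x_i) \le 0$, which is a correlation-type inequality saying $\log r_i$ and $\pi_i[\rvx] - x_i$ are negatively associated — states with large $r_i$ (over-sampled) get $\pi_i[\rvx] < \mu_i$ but we are comparing to $x_i = \mu_i r_i$, so the sign analysis is the delicate part. An alternative, possibly cleaner, route is to invoke the LaSalle invariance principle together with the observation that $\mathcal L$ is a strict Lyapunov function, or to linearize at $\vmu$ (the Jacobian of $\vpi[\rvx] - \rvx$ at $\vmu$ should have eigenvalues with negative real part, e.g. restricted to the tangent space of $\Sigma$) to get local stability and then combine with a global Lyapunov/trapping argument.

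The main obstacle will be verifying the sign of $\dot{\mathcal L}$ globally rather than just near $\vmu$ — i.e. producing the right Lyapunov function and the monotonicity inequality that holds on all of $\Int(\Sigma)$. I anticipate the scale-invariance of $\vpi$ (Condition~\ref{condition:1}) is the key structural fact that makes this work: it lets one reduce to analyzing the ratios $r_i = x_i/\mu_i$, and the convexity of $t\mapsto t^{1+\alpha}$ / concavity of $\log$ then delivers the strict decrease. Once $\mathcal L$ is confirmed as a strict Lyapunov function with $\vmu$ its unique minimizer on the compact invariant set $\Sigma$, standard ODE stability theory (e.g. \citet[]{borkar2009stochastic}) yields that $\vmu$ is globally asymptotically stable, completing the proof. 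I would relegate the explicit Jacobian computation and the inequality verification to an appendix.
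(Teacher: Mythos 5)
Your uniqueness argument for the fixed point is correct and matches the paper's: $\rvx=\vpi[\rvx]$ forces $(x_i/\mu_i)^{1+\alpha}$ to be the same constant for all $i$, hence $x_i/\mu_i$ is constant and, on the simplex, $\rvx=\vmu$. The gap is in the stability half. You propose $\mathcal{L}(\rvx)=\sum_i x_i\log(x_i/\mu_i)$, correctly reduce its derivative along the flow to $\sum_i(\log r_i)(\pi_i[\rvx]-x_i)$ with $r_i=x_i/\mu_i$, and then explicitly defer the sign analysis ("the delicate part"), offering only the expectation that convexity or scale-invariance will deliver it. But that inequality \emph{is} the content of the lemma: without a function whose derivative is shown to be $\le 0$ on all of $\Int(\Sigma)$ with equality only at $\vmu$, neither LaSalle nor your linearization fallback gives global asymptotic stability (linearization at $\vmu$ is only local, and LaSalle presupposes the very monotonicity you left unproved). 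As written, the proposal therefore does not establish the global claim.

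For completeness, your candidate does work, but the missing step is a correlation inequality rather than bare convexity: writing $\dot{\mathcal L}=\E_{\vpi[\rvx]}[\log r]-\E_{\vmu}[r\log r]$, the second term equals $\KL(\rvx\,\|\,\vmu)\ge 0$, while the first equals $\E_{\vmu}[r^{-\alpha}\log r]/\E_{\vmu}[r^{-\alpha}]\le\E_{\vmu}[\log r]\le\log\E_{\vmu}[r]=0$ by Chebyshev's association inequality (the decreasing function $r^{-\alpha}$ against the increasing function $\log r$) followed by Jensen, and equality forces $r\equiv 1$, i.e.\ $\rvx=\vmu$. Even with that filled in, note that $\mathcal{L}$ is \emph{bounded} on $\Sigma$, so it is not proper as $\rvx\to\partial\Sigma$ and your boundary-repulsion sketch must be made rigorous before applying LaSalle on the open set $\Int(\Sigma)$. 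The paper sidesteps both issues with a different Lyapunov function, $V(\rvx)=\sum_i\mu_i(x_i/\mu_i)^{-\alpha}$: a single Cauchy--Schwarz argument gives $\dot V\le 0$ with equality only at $\vmu$, and $V\to\infty$ as $\rvx\to\partial\Sigma$, so LaSalle's invariance principle applies directly. Either route is viable, but the decrease inequality must actually be proved, not anticipated.
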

Proofs follow by showing that target $\vmu$ is the unique stable equilibrium and employing standard Lyapunov stability theory, as detailed in Appendix \ref{appendix: proof of stability ode}. For the iteration $\rvx_n$ in \eqref{eqn:empirical_measure_update}, we need to additionally account for the noise term driven by a history-dependent MCMC from Algorithm \ref{alg:L_SRRW}. We impose the following assumption on $\rvx_n$ throughout the paper.
\begin{assumption}\label{assumption: stability}
    $\rvx_n \in \Int(\Sigma)$ for all $n\geq 0$ almost surely. \vspace{-1mm}
\end{assumption}
Assumption \ref{assumption: stability} guarantees $x_n(i) \!>\! 0$ almost surely for all $i\!\in\!\gX$ and $n\!\geq\! 0$, keeping $\vpi[\rvx]$ well-defined at each step. This type of assumption is standard in the SA literature \citep{fort2015central,borkar2009stochastic,li2023online}. In practice, it can be enforced using truncation-based methods within \eqref{eqn:empirical_measure_update}, as discussed in \citet[Remark 4.5 and Appendix E]{doshi2023self}. These methods effectively prevent any component $x_n(i)$ from approaching zero, thus maintaining $\rvx_n \!\in\! \Int(\Sigma)$. Under this assumption, we obtain the following:
\begin{theorem}[Ergodicity and CLT]\label{theorem:main_results}
   HDT-MCMC in Algorithm \ref{alg:L_SRRW} satisfies

(a) $\rvx_n \to \vmu$ almost surely as $n \to \infty$.

(b) $\sqrt{n}(\rvx_n - \vmu) \xrightarrow[dist.]{n\to\infty}N(\vzero,\mV^{\text{{\tiny HDT}}}(\alpha))$, where \vspace{-4mm}
     \begin{equation}\label{eqn:definition of V}
     \mV^{\text{{\tiny HDT}}}(\alpha) = \frac{1}{2\alpha+1} \mV^{\text{{{\tiny base}}}}, \vspace{-2mm}
    \end{equation}
    and $\mV^{\text{{{\tiny base}}}}$ is the limiting covariance of the base MCMC sampler (reversible or non-reversible) with target $\vmu$ in \eqref{eqn:limiting covariance original definition}.
\end{theorem}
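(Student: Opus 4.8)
The plan is to treat the empirical measure recursion \eqref{eqn:empirical_measure_update} as a stochastic approximation scheme with controlled Markovian dynamics and invoke the standard machinery (e.g.\ \citet{benveniste2012adaptive,borkar2009stochastic,fort2015central}), exactly along the lines used by \citet{doshi2023self} for SRRW. For part (a), the decomposition $\rvx_{n+1} = \rvx_n + \frac{1}{n+1}[(\vpi[\rvx_n] - \rvx_n) + (\vdelta_{X_{n+1}} - \vpi[\rvx_n])]$ identifies the mean field as the right-hand side of the ODE \eqref{eqn:related ODE}, whose unique globally asymptotically stable equilibrium is $\vmu$ by Lemma \ref{lemma: stability ode}. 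Under Assumption \ref{assumption: stability} the iterates stay in a compact subset of $\Int(\Sigma)$, $\vpi[\rvx]$ is continuous (indeed smooth) there, and for each fixed $\rvx$ the base sampler is an ergodic Markov chain on the finite space $\gX$ with stationary distribution $\vpi[\rvx]$; a Poisson-equation / solution-to-the-fixed-point argument controls the Markovian noise. Combining the ODE method with the martingale-plus-vanishing-remainder structure of the noise yields $\rvx_n \to \vmu$ a.s.

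For part (b), I would appeal to the central limit theorem for stochastic approximation with Markovian dynamics (e.g.\ \citet[Theorem 2.1]{fort2015central} or the analogous statement in \citet{delyon2000stochastic}). The limiting covariance of $\sqrt{n}(\rvx_n - \vmu)$ is the solution $\mV^{\text{\tiny HDT}}$ of the Lyapunov equation $(\tfrac12 \mI + \mJ)\mV^{\text{\tiny HDT}} + \mV^{\text{\tiny HDT}}(\tfrac12 \mI + \mJ)^T + \mU = \vzero$, where $\mJ \triangleq \nabla_{\rvx}(\vpi[\rvx] - \rvx)|_{\rvx = \vmu}$ is the Jacobian of the mean field at $\vmu$, and $\mU$ is the asymptotic covariance of the Markovian noise evaluated at the limit. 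The key computational steps are therefore: (i) compute $\mJ$; from \eqref{eqn:our proposed pi}, $\partial \pi_i/\partial x_j|_{\vmu}$ is $-\alpha + $ (a rank-one correction coming from the normalizing constant), so $\mJ = -(\alpha+1)\mI + \alpha\,\vone\vmu^T$ restricted to the tangent space $\{\vv : \vone^T\vv = 0\}$; (ii) observe that on this tangent space $\vone\vmu^T$ acts trivially in the relevant sense, so $\mJ$ acts as $-(\alpha+1)\mI$; (iii) identify $\mU$ — since the noise $\vdelta_{X_{n+1}} - \vpi[\rvx_n]$ evaluated at the limit has the same asymptotic covariance as the base sampler's CLT, $\mU = \mV^{\text{\tiny base}}$ as defined in \eqref{eqn:limiting covariance original definition} (this uses continuity of $\mP$ in its target, so the base kernel with target $\vpi[\vmu] = \vmu$ is exactly the base kernel with target $\vmu$). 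Plugging $\mJ = -(\alpha+1)\mI$ and $\mU = \mV^{\text{\tiny base}}$ into the Lyapunov equation gives $-( \alpha + \tfrac12)\mV^{\text{\tiny HDT}} - \mV^{\text{\tiny HDT}}(\alpha+\tfrac12) + \mV^{\text{\tiny base}} = \vzero$, i.e.\ $\mV^{\text{\tiny HDT}}(\alpha) = \frac{1}{2\alpha+1}\mV^{\text{\tiny base}}$, as claimed.

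I expect the main obstacle to be the noise-covariance identification in step (iii), together with the subtlety that the noise here is genuinely Markovian rather than a martingale difference: one must show that the ``corrector'' terms arising from the Poisson equation for the time-varying kernel $\mP$ with target $\vpi[\rvx_n]$ do not contribute to the limiting covariance, and that the frozen-$\rvx$ noise covariance at $\rvx = \vmu$ coincides with the base sampler's CLT covariance $\mV$ in \eqref{eqn:limiting covariance original definition}–\eqref{eqn:base_covariance}. This is where the assumption that $\mP$ is full-rank and continuous in $\vmu$ is used, and where care is needed because the argument must cover non-reversible base kernels (so one cannot rely on the symmetric spectral form and must work with \eqref{eqn:base_covariance} directly). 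A secondary technical point is verifying the regularity/Lipschitz and moment conditions required by the SA-CLT theorem — these are routine given finiteness of $\gX$, Assumption \ref{assumption: stability}, and smoothness of $\vpi[\cdot]$ on compacts of $\Int(\Sigma)$, and can be dispatched by citing \citet{doshi2023self} whose SRRW analysis faces the identical structure with the only difference being the form of $\vpi[\rvx]$ and hence of the Jacobian $\mJ$.
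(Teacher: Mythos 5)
Your overall route is the paper's route: cast \eqref{eqn:empirical_measure_update} as stochastic approximation with controlled Markovian noise, use Lemma \ref{lemma: stability ode} plus Assumption \ref{assumption: stability} and a Poisson-equation bound for part (a), compute the mean-field Jacobian $\nabla h(\vmu)=\alpha\vmu\vone^T-(\alpha+1)\mI$, and kill the rank-one piece via $\vone^T\mU=\vzero$ (your ``tangent space'' remark; the paper does the same through the explicit integral $\int_0^\infty e^{(\nabla h(\vmu)+\mI/2)t}\,\mU\,e^{(\nabla h(\vmu)+\mI/2)^Tt}\,dt$, which is just the solved form of your Lyapunov equation), landing on $\mV^{\text{\tiny HDT}}(\alpha)=\frac{1}{2\alpha+1}\mV^{\text{\tiny base}}$ with $\mU=\mV^{\text{\tiny base}}$.

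The genuine gap is in how you handle non-reversible base samplers. Your plan assumes that ``for each fixed $\rvx$ the base sampler is an ergodic Markov chain on the finite space $\gX$ with stationary distribution $\vpi[\rvx]$.'' That covers vorticity-type non-reversible kernels defined directly on $\gX$, but it fails for the samplers the theorem is explicitly meant to include (and which the paper uses experimentally): MHDA and 2-cycle chains live on an augmented space $\gX\times\gY$, so conditioned on $\rvx_n$ the sequence $\{X_n\}$ alone is \emph{not} a controlled Markov chain, and the Poisson equation \eqref{eqn:poisson_equation} is ill-posed because the kernel acts on $\gX\times\gY$ while $H(\rvx,\cdot)$ is defined on $\gX$. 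The paper's Appendix \ref{appendix e3} resolves this -- and flags it as the main technical point beyond \citet{doshi2023self} -- by lifting the SA to the augmented chain via $\Phi(\rvx,i,j)\triangleq H(\rvx,i)$, checking that the mean field is unchanged because $\pi_i[\rvx]=\sum_{j\in\gY}\hat\pi_{(i,j)}[\rvx]$, and identifying $\mU$ with the original-space asymptotic covariance. Relatedly, your parenthetical that one should ``work with \eqref{eqn:base_covariance} directly'' is off for exactly these samplers: \eqref{eqn:base_covariance} presupposes a kernel on $\gX$, so for augmented-space chains it is unavailable and one must fall back on the definition \eqref{eqn:limiting covariance original definition}, which is why the theorem statement cites that equation. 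Adding the lifting step (or restricting the claim to kernels on $\gX$) is what your proposal needs to be complete; the rest is sound and matches the paper.
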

The full proof of Theorem~\ref{theorem:main_results} is in Appendix~\ref{appendix: main results}, where we leverage the existing asymptotic analysis from the SA literature \citep{delyon1999convergence,fort2015central}, similarly used in \citet[Appendix C]{doshi2023self}. However, the primary technical challenge arises from handling non-reversible Markov chains, which is excluded from \citet{doshi2023self} by the nature of their kernel design. We proceed with our analysis that is specifically tailored to the augmented state space on which the non-reversible Markov chain is defined, and solve a mismatch issue between the augmented space and the original space by only tracking the marginal empirical measure $\rvx \in \Int(\Sigma)$ in the original space $\gX$. Theorem \ref{theorem:main_results} highlights two appealing features of HDT-MCMC: (i) It preserves the unbiased sampling by converging to the true target $\vmu$. (ii) It offers an $O(1/\alpha)$ variance reduction, achieving a similar near-zero variance phenomenon of SRRW but without additional computational overhead or the need for time-reversibility. When $\alpha = 0$, we recover the baseline scenario $\vpi[\rvx] \equiv \vmu$ and $\mV^{\text{{\tiny HDT}}}(0) = \mV^{\text{{{\tiny base}}}}$, as expected.

Moreover, Theorem \ref{theorem:main_results} leads to an instant result as follows.
\begin{corollary}\label{corollary: covariance comparesion}
    Suppose two MCMC samplers $S_1$ and $S_2$ converge to $\vmu$ with limiting covariances $\mV^{S_1}$ and $\mV^{S_2}$ satisfying $\mV^{S_1} \!\!\preceq\!\! \mV^{S_2}\!$.\footnote{\label{footnote: loewner ordering} Two symmetric matrices $\mM_1$, $\mM_2$ follow $\mM_1 \!\preceq\! \mM_2$ (or $\mM_1 \!\succeq\! \mM_2$) if $\mM_2 \!-\! \mM_1$ is positive (or negative) semi-definite.} Applying HDT framework to both, yielding $\mV^{\text{\tiny S\textsubscript{1}-HDT}}\!(\alpha)$ and $\mV^{\text{\tiny S\textsubscript{2}-HDT}}\!(\alpha)$, preserves the ordering: \vspace{-2mm}
    \[
    \mV^{\text{\tiny S\textsubscript{1}-HDT}}(\alpha) \preceq \mV^{\text{\tiny S\textsubscript{2}-HDT}}(\alpha), ~~~\forall \alpha \geq 0. \vspace{-2mm}
    \]
\end{corollary}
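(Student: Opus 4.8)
The plan is to apply Theorem~\ref{theorem:main_results}(b) to each of $S_1$ and $S_2$ separately and then observe that the HDT transformation acts on the base covariance purely as multiplication by a common positive scalar, which preserves the Loewner order. First I would instantiate the CLT covariance formula \eqref{eqn:definition of V}: since $S_1$ and $S_2$ are each valid base MCMC samplers (reversible or non-reversible) converging to $\vmu$ with limiting covariances $\mV^{S_1}$ and $\mV^{S_2}$ given by \eqref{eqn:limiting covariance original definition}, Theorem~\ref{theorem:main_results}(b) yields $\mV^{\text{\tiny S\textsubscript{1}-HDT}}(\alpha) = \tfrac{1}{2\alpha+1}\mV^{S_1}$ and $\mV^{\text{\tiny S\textsubscript{2}-HDT}}(\alpha) = \tfrac{1}{2\alpha+1}\mV^{S_2}$ for every $\alpha \geq 0$.

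Next I would simply subtract and factor out the common scalar:
\[
\mV^{\text{\tiny S\textsubscript{2}-HDT}}(\alpha) - \mV^{\text{\tiny S\textsubscript{1}-HDT}}(\alpha) = \tfrac{1}{2\alpha+1}\left(\mV^{S_2} - \mV^{S_1}\right).
\]
By hypothesis $\mV^{S_2} - \mV^{S_1} \succeq \vzero$, and for all $\alpha \geq 0$ the factor $\tfrac{1}{2\alpha+1}$ is strictly positive, so for any $\vz$ we have $\vz^T\big[\tfrac{1}{2\alpha+1}(\mV^{S_2} - \mV^{S_1})\big]\vz = \tfrac{1}{2\alpha+1}\,\vz^T(\mV^{S_2} - \mV^{S_1})\vz \geq 0$. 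Hence $\mV^{\text{\tiny S\textsubscript{2}-HDT}}(\alpha) - \mV^{\text{\tiny S\textsubscript{1}-HDT}}(\alpha)$ is positive semi-definite, i.e., $\mV^{\text{\tiny S\textsubscript{1}-HDT}}(\alpha) \preceq \mV^{\text{\tiny S\textsubscript{2}-HDT}}(\alpha)$, which is the claim.

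There is essentially no real obstacle here: the corollary is an immediate consequence of Theorem~\ref{theorem:main_results}(b), because the HDT framework rescales the base covariance by the single positive constant $\tfrac{1}{2\alpha+1}$, and scaling by a positive constant maps the positive semi-definite cone into itself. The only bookkeeping point worth making explicit is that the hypotheses underlying Theorem~\ref{theorem:main_results} — in particular Assumption~\ref{assumption: stability} together with the full-rank and continuity conditions on the transition kernel — are assumed to hold for both $S_1$ and $S_2$, which is implicit in describing them as MCMC samplers converging to $\vmu$.
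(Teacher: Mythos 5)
Your argument is correct and matches the paper's own reasoning: the paper simply notes the corollary is straightforward from \eqref{eqn:definition of V}, since $\mV^{\text{\tiny HDT}}(\alpha)=\tfrac{1}{2\alpha+1}\mV^{\text{\tiny base}}$ and multiplying by the common positive scalar $\tfrac{1}{2\alpha+1}$ preserves the Loewner ordering. Your explicit quadratic-form check and the remark about the hypotheses of Theorem~\ref{theorem:main_results} holding for both samplers are fine bookkeeping but introduce nothing beyond the paper's one-line justification.
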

The proof is straightforward from \eqref{eqn:definition of V}. Hence, any known covariance orderings between reversible and non-reversible samplers (see \citet{lee2012beyond,maire2014comparison,bierkens2016non,andrieu2021peskun}) carry over to our HDT-MCMC framework, whereas SRRW cannot accommodate non-reversible Markov chains.

\subsection{Comparative Analysis of Computational Costs}
We now compare the performance of our HDT-MCMC to SRRW \citep{doshi2023self} under a fixed total computational budget $B$. Although both methods achieve an $O(1/\alpha)$ variance reduction, HDT-MCMC requires significantly less computation per sample.

Because SRRW mandates a time-reversible base MCMC sampler, we restrict our comparison to the same reversible chain (as illustrated in Boxes \ding{173} and \ding{175} of Figure~\ref{fig:illustrative figure}). Let $a_i$ (\textit{resp.} $b_i$) $\in (0, \infty)$ be the computational cost of the $i$-th sample in HDT-MCMC (\textit{resp.} SRRW). Define: \vspace{-2mm}
\begin{align*}
    T^{\text{{{\tiny HDT}}}}(B) &\triangleq \max\{k \mid a_{1} + a_{2} + \cdots + a_{k} \leq B\} \\
    T^{\text{{{\tiny SRRW}}}}(B) &\triangleq \max\{k' \mid b_{1} + b_{2} + \cdots + b_{k'} \leq B\}
\end{align*}
so that $T^{\text{{{\tiny HDT}}}}(B)$ (\textit{resp.} $T^{\text{{{\tiny SRRW}}}}(B)$) represents the total number of samples that HDT-MCMC (\textit{resp.} SRRW) can generate before hitting the budget $B$. Intuitively, under the same budget, SRRW’s higher per-sample cost yields fewer total samples. To quantify this effect, we now compare these two frameworks under the same but large amount of total budget $B$ instead of the number of samples as done in Section \ref{section 3.2}. 

Clearly, we have $T^{\text{{{\tiny HDT}}}}(B) \!\to\! \infty$ and $T^{\text{{{\tiny SRRW}}}}(B) \!\to\! \infty$ as $B \!\to\! \infty$. Let ${\rvx_n}$ (\textit{resp.} ${\rvy_n}$) be the empirical measure of HDT-MCMC (\textit{resp.} SRRW). As $B \!\to\! \infty$, both $\rvx_{T^{\text{{{\tiny HDT}}}}\!(B)}$ and $\rvy_{T^{\text{{{\tiny SRRW}}}}\!(B)}$ still converge almost surely to $\vmu$, an instant result of our Theorem \ref{theorem:main_results}(a) and \citet[Theorem 4.1]{doshi2023self}. Analogous to our Theorem \ref{theorem:main_results}, rather than evaluating at $n \!\to\! \infty$, we quantify how HDT-MCMC and SRRW behave under $B \!\!\to\! \infty$ via the following cost-based CLT:
\begin{theorem}[Cost-Based CLT]\label{theorem: cost comparision CLT}
    Suppose that as $B \!\to\! \infty$,
    \vspace{-2mm}
    \[
    B/T^{\text{{{\tiny HDT}}}}(B) \to C^{\text{{{\tiny HDT}}}}, \quad B/T^{\text{{{\tiny SRRW}}}}(B) \to C^{\text{{{\tiny SRRW}}}} ~~~~\rm{a.s.} \vspace{-2mm}
    \]
    Then, we have \vspace{-2mm}
    \begin{align}
        & \sqrt{B}(\rvx_{T^{\text{{{\tiny HDT}}}}(B)} - \vmu) \xrightarrow[dist.]{B\to\infty} N(\vzero,  C^{\text{{{\tiny HDT}}}} \mV^{\text{{\tiny HDT}}}(\alpha)) \label{eqn:definition T1B}\\
        & \sqrt{B}(\rvy_{T^{\text{{{\tiny SRRW}}}}(B)} - \vmu) \xrightarrow[dist.]{B\to\infty} N(\vzero,  C^{\text{{{\tiny SRRW}}}} \mV^{\text{{\tiny SRRW}}}(\alpha)) \label{eqn:definition T2B} \vspace{-2mm}
    \end{align}
    where $\mV^{\text{{\tiny HDT}}}(\alpha)$ is given by \eqref{eqn:definition of V}, and $\mV^{\text{{\tiny SRRW}}}(\alpha)$ by \eqref{eqn:SRRW_covariance}. 
\end{theorem}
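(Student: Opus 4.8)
The plan is to read $T^{\text{{\tiny HDT}}}(B)$ and $T^{\text{{\tiny SRRW}}}(B)$ as \emph{random} sampling horizons and convert the two fixed-horizon CLTs that are already in hand---Theorem~\ref{theorem:main_results}(b) for HDT-MCMC, and the SRRW CLT with covariance $\mV^{\text{{\tiny SRRW}}}(\alpha)$ from \eqref{eqn:SRRW_covariance}---into random-time CLTs via an Anscombe-type random time change, and then absorb the deterministic scaling ratio $B/T(\cdot)$ with Slutsky's theorem. Because \eqref{eqn:definition T1B} and \eqref{eqn:definition T2B} are proved by literally the same argument, I carry it out for HDT-MCMC and note at the end that the SRRW statement follows by substituting $(\rvy, T^{\text{{\tiny SRRW}}}, C^{\text{{\tiny SRRW}}}, \mV^{\text{{\tiny SRRW}}}(\alpha))$ for $(\rvx, T^{\text{{\tiny HDT}}}, C^{\text{{\tiny HDT}}}, \mV^{\text{{\tiny HDT}}}(\alpha))$ and the SRRW CLT for Theorem~\ref{theorem:main_results}(b).

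First I would extract the elementary consequences of the hypothesis $B/T^{\text{{\tiny HDT}}}(B) \to C^{\text{{\tiny HDT}}}$ a.s.: since $C^{\text{{\tiny HDT}}} \in (0,\infty)$, we get $T^{\text{{\tiny HDT}}}(B) \to \infty$ and $T^{\text{{\tiny HDT}}}(B)/\tau(B) \to 1$ a.s., where $\tau(B) \triangleq \lceil B / C^{\text{{\tiny HDT}}} \rceil$ is a deterministic clock with $\tau(B) \to \infty$. Along the deterministic scale $n$, Theorem~\ref{theorem:main_results}(b) gives $\sqrt{n}(\rvx_n - \vmu) \xrightarrow[dist.]{n\to\infty} N(\vzero, \mV^{\text{{\tiny HDT}}}(\alpha))$. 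An Anscombe-type random-time-change theorem then upgrades this to $\sqrt{T^{\text{{\tiny HDT}}}(B)}\,(\rvx_{T^{\text{{\tiny HDT}}}(B)} - \vmu) \xrightarrow[dist.]{B\to\infty} N(\vzero, \mV^{\text{{\tiny HDT}}}(\alpha))$, once one verifies the Anscombe uniform-continuity-in-probability condition: for each $\epsilon > 0$ there is $\eta > 0$ with
\[
\limsup_{B\to\infty}\ \Pr\!\left( \max_{|m - \tau(B)| \le \eta\,\tau(B)} \sqrt{\tau(B)}\,\big\| \rvx_m - \rvx_{\tau(B)} \big\| > \epsilon \right) < \epsilon .
\]
Finally, using the identity $\sqrt{B}\,(\rvx_{T^{\text{{\tiny HDT}}}(B)} - \vmu) = \sqrt{B / T^{\text{{\tiny HDT}}}(B)}\cdot \sqrt{T^{\text{{\tiny HDT}}}(B)}\,(\rvx_{T^{\text{{\tiny HDT}}}(B)} - \vmu)$ together with $\sqrt{B/T^{\text{{\tiny HDT}}}(B)} \to \sqrt{C^{\text{{\tiny HDT}}}}$ a.s., Slutsky's theorem yields $\sqrt{B}\,(\rvx_{T^{\text{{\tiny HDT}}}(B)} - \vmu) \xrightarrow[dist.]{B\to\infty} \sqrt{C^{\text{{\tiny HDT}}}}\cdot N(\vzero, \mV^{\text{{\tiny HDT}}}(\alpha))$, i.e.\ $N(\vzero, C^{\text{{\tiny HDT}}} \mV^{\text{{\tiny HDT}}}(\alpha))$, which is \eqref{eqn:definition T1B}.

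The main obstacle is verifying the Anscombe uniform-continuity condition above. I would obtain it by strengthening the stochastic-approximation CLT of \citet{delyon1999convergence,fort2015central}---the same machinery behind Theorem~\ref{theorem:main_results}---to its functional (Donsker-type) form, so that the rescaled partial-sum process generating $\sqrt{n}(\rvx_n - \vmu)$ converges in the Skorokhod space $D([0,\infty))$ to a Gaussian process with continuous paths; tightness of that sequence is precisely the maximal-increment control needed over a window of vanishing relative width. An equivalent, more hands-on route is to sandwich $T^{\text{{\tiny HDT}}}(B)$ between the deterministic times $\lfloor (1-\eta)\tau(B)\rfloor$ and $\lceil (1+\eta)\tau(B)\rceil$ (valid for all large $B$ a.s., since $T^{\text{{\tiny HDT}}}(B)/\tau(B)\to1$), bound the block maximum of $\|\rvx_m - \rvx_{\tau(B)}\|$ using a Doob/maximal inequality for the martingale noise in the SA decomposition plus the $O(1/n)$-summability of the drift and higher-order remainder, and let $\eta \downarrow 0$. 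It is worth noting that $T^{\text{{\tiny HDT}}}(B)$ is essentially a stopping time---$\{T^{\text{{\tiny HDT}}}(B) \ge k\} = \{a_1 + \cdots + a_k \le B\}$ is measurable with respect to the history through the $k$-th sample, because each per-sample cost $a_i$ is revealed once that sample is drawn---so the optional-stopping steps in the hands-on route are justified; however, the Anscombe theorem itself requires no such structure, only $T^{\text{{\tiny HDT}}}(B)/\tau(B)\to 1$. The SRRW half of the statement is structurally identical, now resting on the SA-based CLT behind \eqref{eqn:SRRW_covariance} from \citet{doshi2023self}, whose reversible base-kernel setting satisfies the same functional-CLT prerequisites.
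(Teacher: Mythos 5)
Your proposal follows essentially the same route as the paper: factor $\sqrt{B}(\rvx_{T^{\text{\tiny HDT}}(B)}-\vmu)$ into $\sqrt{B/T^{\text{\tiny HDT}}(B)}$ times the random-index normalized partial sum, apply a random-change-of-time CLT (the paper cites \citet[Theorem 14.4]{billingsley2013convergence}) to the latter using Theorem~\ref{theorem:main_results}(b) (resp.\ the SRRW CLT), and finish with Slutsky's theorem. The only difference is that you explicitly verify the Anscombe uniform-continuity condition via a functional-CLT or maximal-inequality argument, whereas the paper absorbs that step into the cited random-time-change theorem together with the a.s.\ convergence $B/T(B)\to C$.
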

We leverage the random-change-of-time theory \citep{billingsley2013convergence} and Slutsky's theorem \citep{ash2000probability} to transform our time-based CLT (Theorem \ref{theorem:main_results}) to the cost-based CLT (Theorem \ref{theorem: cost comparision CLT}), with details in Appendix \ref{appendix: cost based CLT}.

In practice, HDT-MCMC’s per-sample cost $C^{\text{{\tiny HDT}}}$ can be significantly smaller than $C^{\text{{\tiny SRRW}}}$ because the latter must pre-compute the transition probability $K_{ij}[\rvx]$ at each step. 
As a concrete example, we focus on the MH framework as the base sampler utilized in many advanced MCMC schemes \citep{liu2000multiple,green2001delayed,lee2012beyond,bierkens2016non,zanella2020informed,chang2022rapidly}. Assuming that computing the proposal probability $Q_{ij}$ and inquiring $\tilde \mu_i$ incurs $c$ units of cost per each pair $(i,j)\!\in\!\gE$, it requires $2c$ costs for $A_{ij}[\rvx]$ in \eqref{eqn:computation of acceptance ratio}. Thus, HDT-MCMC spends $2c$ per sample, whereas SRRW incurs $2c |\overline{\gN}(i)|$ subject to state $i$, as discussed in Section \ref{section 1.2}. The following lemma shows the ordering of their cost-based covariances:
\begin{lemma}\label{lemma: average cost per sample}
    The cost-based covariances between SRRW and HDT-MCMC in \eqref{eqn:definition T1B} and \eqref{eqn:definition T2B} are ordered as follows:
    \[
    C^{\text{{{\tiny HDT}}}} \mV^{\text{{\tiny HDT}}}(\alpha) \preceq (2 / \E_{i\sim\vmu}[|\overline\gN(i)|]) \cdot C^{\text{{{\tiny SRRW}}}} \mV^{\text{{\tiny SRRW}}}(\alpha). \vspace{-2mm}
    \]
\end{lemma}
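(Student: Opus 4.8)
The plan is to first identify the two limiting per-sample costs $C^{\text{{{\tiny HDT}}}}$ and $C^{\text{{{\tiny SRRW}}}}$ from Theorem~\ref{theorem: cost comparision CLT}, and then reduce the claimed Loewner inequality to a scalar, term-by-term comparison of the spectral forms \eqref{eqn:base_covariance} and \eqref{eqn:SRRW_covariance}. By the MH cost accounting in the paragraph preceding the lemma, every HDT-MCMC step costs $a_n = 2c$, so $T^{\text{{{\tiny HDT}}}}(B) = \lfloor B/(2c)\rfloor$ and thus $C^{\text{{{\tiny HDT}}}} = \lim_{B\to\infty} B/T^{\text{{{\tiny HDT}}}}(B) = 2c$. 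For SRRW on the same reversible kernel $\mP$, the step from state $X_s$ costs $b_s = 2c\,|\overline\gN(X_s)|$; since SRRW's empirical measure converges to $\vmu$ almost surely \citep[Theorem 4.1]{doshi2023self}, the running average $\frac{1}{k}\sum_{s=1}^{k} b_s \to 2c\,\E_{i\sim\vmu}[|\overline\gN(i)|]$ a.s., and since the $b_s$ are uniformly bounded on a finite graph, the sandwich $\sum_{s\le T^{\text{{{\tiny SRRW}}}}(B)} b_s \le B < \sum_{s\le T^{\text{{{\tiny SRRW}}}}(B)+1} b_s$ gives $C^{\text{{{\tiny SRRW}}}} = 2c\,\E_{i\sim\vmu}[|\overline\gN(i)|]$, as required by the hypothesis of Theorem~\ref{theorem: cost comparision CLT}.

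Next, substituting these constants, the factor $\E_{i\sim\vmu}[|\overline\gN(i)|]$ cancels on both sides and, using $\mV^{\text{{\tiny HDT}}}(\alpha) = \frac{1}{2\alpha+1}\mV^{\text{{{\tiny base}}}}$ from \eqref{eqn:definition of V}, the assertion becomes the cost-free bound $\frac{1}{2\alpha+1}\mV^{\text{{{\tiny base}}}} \preceq 2\,\mV^{\text{{\tiny SRRW}}}(\alpha)$. Since the comparison is over a reversible base kernel $\mP$, I would expand both matrices in the common eigenbasis $\{\vu_i\}_{i=2}^{|\gX|}$ of $\mP$: the reversible form of \eqref{eqn:base_covariance} gives $\mV^{\text{{{\tiny base}}}} = \sum_{i=2}^{|\gX|}\frac{1+\lambda_i}{1-\lambda_i}\vu_i\vu_i^T$, while \eqref{eqn:SRRW_covariance} gives $\mV^{\text{{\tiny SRRW}}}(\alpha) = \sum_{i=2}^{|\gX|}\frac{1}{2\alpha(\lambda_i+1)+1}\frac{1+\lambda_i}{1-\lambda_i}\vu_i\vu_i^T$. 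Then $2\,\mV^{\text{{\tiny SRRW}}}(\alpha) - \frac{1}{2\alpha+1}\mV^{\text{{{\tiny base}}}} = \sum_{i=2}^{|\gX|}\big(\frac{2}{2\alpha(\lambda_i+1)+1} - \frac{1}{2\alpha+1}\big)\frac{1+\lambda_i}{1-\lambda_i}\,\vu_i\vu_i^T$ is a sum of rank-one PSD matrices $\vu_i\vu_i^T$ weighted by the nonnegative factors $\frac{1+\lambda_i}{1-\lambda_i}$ (as $\lambda_i\in(-1,1)$ for $i\ge2$), so it is enough to check that each scalar $\frac{2}{2\alpha(\lambda_i+1)+1} - \frac{1}{2\alpha+1}$ is nonnegative; clearing the positive denominators this is $2(2\alpha+1) \ge 2\alpha(\lambda_i+1)+1$, i.e., $2\alpha\lambda_i \le 2\alpha+1$, which holds for all $\alpha\ge0$ because $\lambda_i\le1$. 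This yields $2\,\mV^{\text{{\tiny SRRW}}}(\alpha) \succeq \frac{1}{2\alpha+1}\mV^{\text{{{\tiny base}}}}$ and hence the lemma.

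The algebra in the final step is routine; the part that needs care is justifying the cost limits — in particular $B/T^{\text{{{\tiny SRRW}}}}(B) \to 2c\,\E_{i\sim\vmu}[|\overline\gN(i)|]$ from the almost-sure convergence of SRRW's empirical measure together with the uniform boundedness of neighborhood sizes — so that Theorem~\ref{theorem: cost comparision CLT} can be invoked with these explicit constants. After that, the conclusion rests only on the elementary inequality $2\alpha\lambda_i\le2\alpha+1$ and the observation that a coefficient-wise inequality over a fixed family of PSD rank-one matrices is inherited by the Loewner order.
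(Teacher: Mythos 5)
Your proposal is correct and follows essentially the same route as the paper: identify $C^{\text{{\tiny HDT}}}=2c$ and $C^{\text{{\tiny SRRW}}}=2c\,\E_{i\sim\vmu}[|\overline\gN(i)|]$ via the sandwich argument and almost-sure ergodic averaging, then reduce the claim to a coefficient-wise comparison of the spectral expansions of $\mV^{\text{{\tiny base}}}$ and $\mV^{\text{{\tiny SRRW}}}(\alpha)$ in the common eigenbasis. The only cosmetic difference is that you verify the scalar inequality $\tfrac{2}{2\alpha(\lambda_i+1)+1}\ge\tfrac{1}{2\alpha+1}$ in one step, whereas the paper passes through the intermediate bound $\tfrac{1}{4\alpha+1}$ and the observation $\tfrac{2\alpha+1}{4\alpha+1}\ge\tfrac12$; these are equivalent.
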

See Appendix \ref{appendix: average cost per sample} for the proof. Lemma \ref{lemma: average cost per sample} implies that the cost-based covariance of HDT-MCMC is at least a factor of $2 / \E[|\overline\gN(i)|]$ times smaller than that of SRRW in Loewner ordering for each $\alpha$, suggesting a universal advantage. This factor becomes more pronounced in dense or nearly complete graphs, where the average neighborhood size is $\E[|\overline{\gN}(i)|]\gg 2$. 

For unbiased graph sampling, at each time $T$, the sampling agent records state $X_T$, obtains the value $f(X_T) \in \sR$, and updates the unbiased MCMC estimator $\psi_T(f)$, aiming to approximate the ground truth $\bar f$, such as a global attribute of the unknown graph, and their expressions are defined in the following:
\[
\psi_T(f) \triangleq \frac{1}{T}\sum_{s=1}^T f(X_s), \quad \bar f \triangleq \sum_{i \in \gX} \mu_i f(i).
\] 
Equivalently, we can rewrite the MCMC estimator as $\psi_T(f) = \vf^T\rvx_T$ since $\rvx_T = \frac{1}{T}\sum_{s=1}^T \vdelta_{X_s}$ and $\vf^T\vdelta_{i} = f(i)$, where $\vf \triangleq [f(i)]_{i\in\gX} \in \sR^{|\gX|}$. Thus, the cost-based variance of $\psi_T(f)$ for HDT-MCMC and SRRW is derived by left multiplying $\vf^T$ into \eqref{eqn:definition T1B} and \eqref{eqn:definition T2B}, yielding 
\begin{align}
    &\sqrt{T}(\psi_T^{\text{{\tiny HDT}}}(f) - \bar f) \xrightarrow[dist.]{T \to \infty}, N(\vzero, \text{Var}^{\text{{\tiny HDT}}}(\alpha)), \\
    &\sqrt{T}(\psi_T^{\text{{\tiny SRRW}}}(f) - \bar f) \xrightarrow[dist.]{T \to \infty} N(\vzero, \text{Var}^{\text{{\tiny SRRW}}}(\alpha)),
\end{align}
where $\text{Var}^{\text{{\tiny HDT}}}(\alpha)=C^{\text{{{\tiny HDT}}}} \vf^T \mV^{\text{{\tiny HDT}}}(\alpha) \vf$, and $\text{Var}^{\text{{\tiny SRRW}}}(\alpha) = C^{\text{{{\tiny SRRW}}}} \vf^T \mV^{\text{{\tiny SRRW}}}(\alpha) \vf,$
respectively. By Lemma \ref{lemma: average cost per sample} and the Loewner ordering in footnote \ref{footnote: loewner ordering}, we have for any $\alpha > 0$,
\[\text{Var}^{\text{{\tiny HDT}}}(\alpha) \leq (2 / \E[|\overline\gN(i)|])\cdot  \text{Var}^{\text{{\tiny SRRW}}}(\alpha),\]
which translates into at least a factor of $2 / \E[|\overline\gN(i)|]$ times smaller cost-based variance of the MCMC estimator $\psi_T(f)$ than the case with SRRW. In addition, unlike SRRW, HDT-MCMC also accommodates non-reversible base samplers (see Corollary \ref{corollary: covariance comparesion}), further enhancing efficiency in many applications.

\section{Simulations}\label{simulations}
We design a series of experiments to evaluate the performance of HDT-based MCMC methods in graph sampling tasks. Our goal is to compare HDT-MCMC with various advanced MCMC algorithms, including both reversible and non-reversible Markov chains, and compare HDT-MCMC with SRRW within the same total computational budget.

\begin{table*}[!ht]
    \centering
    \caption{Mean (std error) TVD and NRMSE at $n=15{,}000$ steps on facebook graph under different $X_0$ groups and fake visit counts $\rvx_0$. All std error values are in units of $10^{-3}$.}
    \label{tab:1}
    \resizebox{1\textwidth}{!}{%
    \begin{tabular}{lcccccccccc}
        \toprule
        & \multicolumn{4}{c}{Initial State ($X_0$)} & \multicolumn{6}{c}{Fake Count ($\rvx_0$)} \\
        \cmidrule(lr){2-5} \cmidrule(lr){6-11}
        & \multicolumn{2}{c}{Low Deg} & \multicolumn{2}{c}{High Deg} 
        & \multicolumn{2}{c}{Deg} & \multicolumn{2}{c}{Non-unif} & \multicolumn{2}{c}{Unif} \\
        \cmidrule(lr){2-3} \cmidrule(lr){4-5}
        \cmidrule(lr){6-7} \cmidrule(lr){8-9} \cmidrule(lr){10-11}
        Method & TVD & NRMSE & TVD & NRMSE & TVD & NRMSE & TVD & NRMSE & TVD & NRMSE \\
        \midrule
        HDT-MHRW & 0.372 (1.221) & 0.027 & 0.371 (1.296) & 0.029 
                 & 0.371 (1.246) & 0.028 & 0.371 (1.281) & 0.028 & 0.371 (1.246) & 0.028 \\
        HDT-MTM  & 0.283 (1.199) & 0.070 & 0.284 (1.280) & 0.068 
                 & 0.288 (1.751) & 0.098 & 0.285 (1.456) & 0.047 & 0.285 (1.496) & 0.062 \\
        HDT-MHDA & 0.364 (1.230) & 0.026 & 0.365 (1.292) & 0.028 
                 & 0.365 (1.281) & 0.027 & 0.365 (1.246) & 0.028 & 0.366 (1.258) & 0.027 \\
        \bottomrule
    \end{tabular}%
    }
\end{table*}
\subsection{Simulation Setup}
We conduct experiments on two real-world graphs, i.e., facebook ($4039$ nodes with $88234$ edges) and p2p-Gnutella04 ($10876$ nodes with $39994$ edges) from SNAP \citep{snapnets}. We use a uniform target distribution $\vmu=\frac{1}{|\gX|}\vone$ throughout this section while deferring the experiments of non-uniform target to Appendix \ref{appendix:non-uniform target}. In the reversible setting, we apply the standard MH algorithm (MHRW) and MTM with locally balanced weights and $K=3$ proposed candidates \citep{chang2022rapidly}. In the non-reversible setting, we adopt MHDA \citep{lee2012beyond}. Additional experiments on WikiVote, p2p-Gnutella08, and non-reversible 2-cycle Markov chains appear in Appendix \ref{appendix: additional simulation}. To assess convergence, we use the total variation distance $$\text{TVD}(\rvx_n, \vmu) \!\triangleq\! \frac{1}{2}\|\rvx_n\!-\!\vmu\|_1$$ for the distance between the empirical measure $\rvx_n$ of the collected samples and the target $\vmu$.
We also evaluate normalized root mean squared error $$\text{NRMSE}(\psi_n,\bar \psi)\!=\!\sqrt{\E[(\psi_n(f)\!-\! \bar \psi)^2]} / \bar \psi$$ for graph-based group-size estimation with test function $f$ defined in Appendix \ref{appendix: graph simulation}. Each experiment consists of $1000$ independent runs, and one-third of the total iterations is used as the burn-in period.

\begin{figure}[!ht]
\begin{subfigure}[t]{0.49\columnwidth}
     \includegraphics[width=\linewidth]{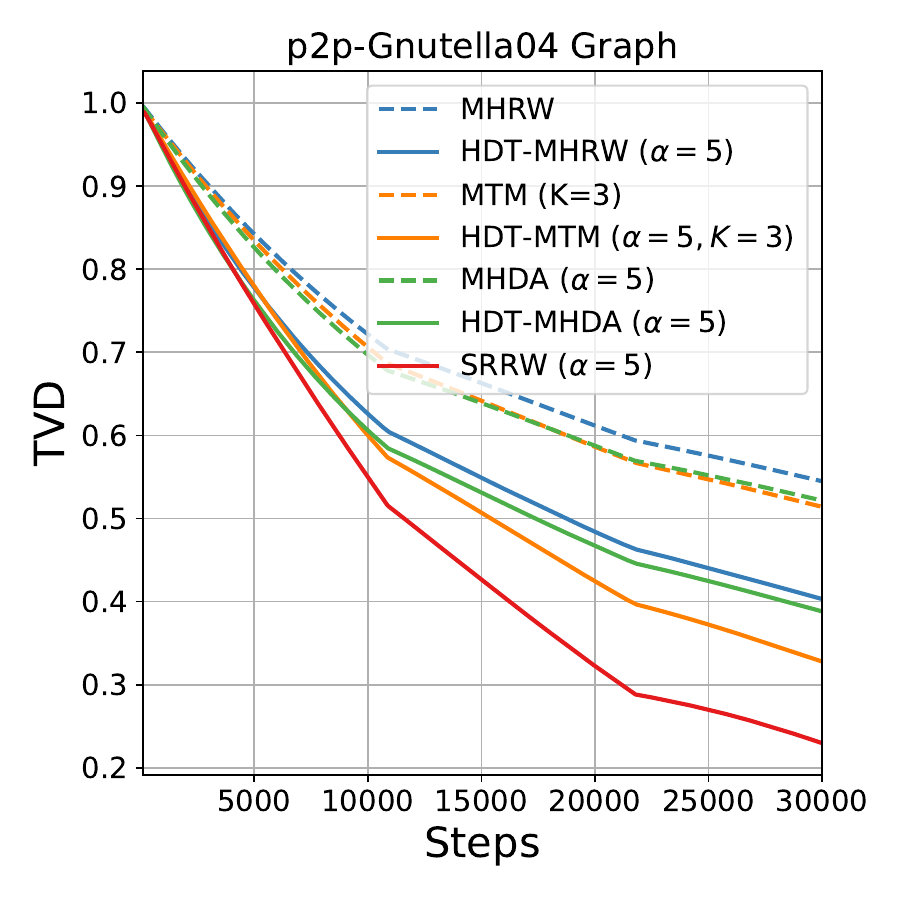}
     \end{subfigure}
     \begin{subfigure}[t]{0.49\columnwidth}
         \includegraphics[width=\linewidth]{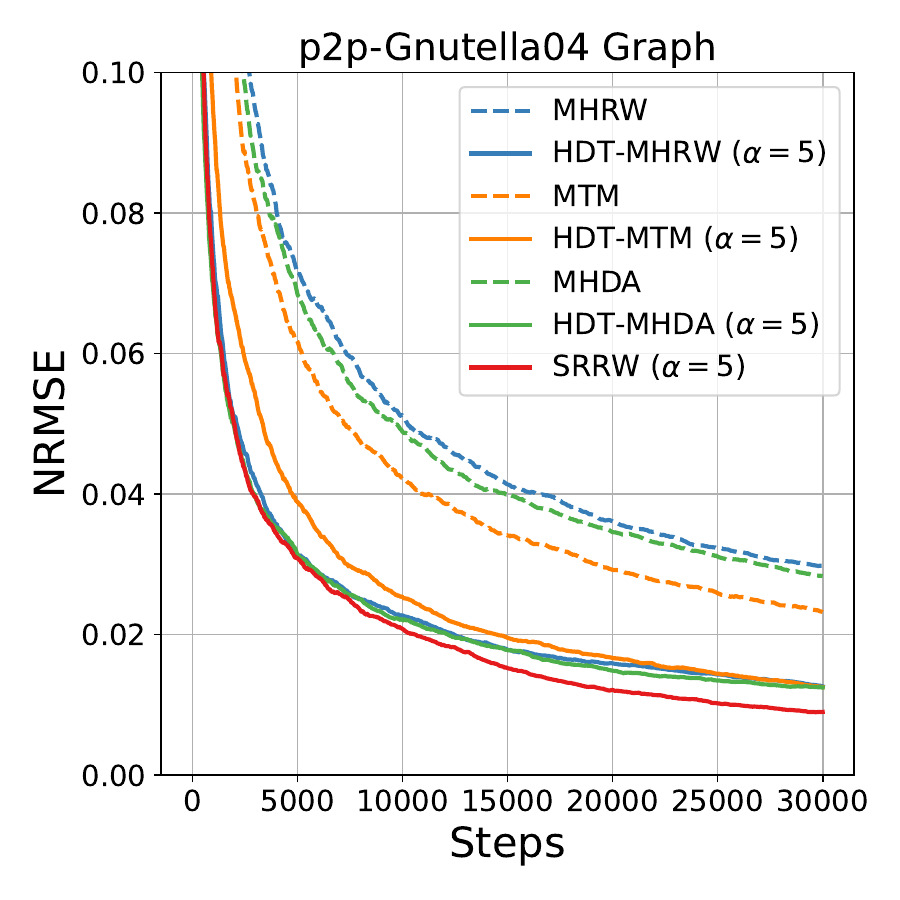}
     \end{subfigure}
     \vspace{-5mm}
    \caption{TVD and NRMSE Comparison of HDT-MCMC with base chain MHRW, MTM, and MHDA.}\vspace{-2mm}
    \label{fig:TVD reversible}
\end{figure}
\subsection{Comparison of Base MCMC and its HDT Version}
We first compare each base MCMC algorithm with its HDT-enhanced version, setting $\alpha = 5$ in the target $\vpi[\rvx]$ in \eqref{eqn:our proposed pi}. Figure \ref{fig:TVD reversible} shows the average TVD and NRMSE for MTM and MHDA, with MHRW serving as a benchmark. In both cases, MTM and MHDA outperform MHRW when targeting $\vmu$, which aligns with the theoretical results in \citet{chang2022rapidly} and \citet{lee2012beyond}. Moreover, the HDT-enhanced versions (HDT-MTM and HDT-MHDA) consistently attain lower TVD and NRMSE than their respective base algorithms, indicating faster convergence to $\vmu$, consistent with Theorem \ref{theorem:main_results}. A similar trend is observed in other graphs using TVD and NRMSE metrics (see Appendix \ref{appendix: graph simulation}). Notably, SRRW (with MHRW as its base) achieves the lowest TVD and NRMSE among all methods but entails substantially higher computational overhead to obtain one sample, whose effect is not reflected in the number of steps. We shall examine SRRW's performance under fixed budget in the next experiment. We do not combine SRRW with MTM in this experiment due to its heavy computation in $P_{ij}$ for all possible combinations of $K$ intermediate proposed candidates, whereas HDT integrates seamlessly into MTM without additional cost. Moreover, we conduct the experiment on the effect of different $\alpha$ values influence HDT-MHRW algorithm in Appendix \ref{appendix: alpha effect} and observe consistent improvement with smaller TVD using larger $\alpha$. 

\subsection{Robustness to Different Initializations}
To demonstrate that HDT-MCMC is robust to different initializations, we evaluate its performance when initialized at various nodes, using base chains MHRW, MTM, and MHDA for the HDT-enhanced versions. We show the experimental results for the Facebook graph in Table \ref{tab:1} and include the results for other graphs in Appendix \ref{appendix:initialization}. In particular, we examine the effects of both initial state $X_0$ and the fake visit counts $\rvx_0$. The initiate state $X_0$ is randomly chosen from either low-degree group (where the node’s degree is smaller than the average degree) or high-degree group. This is to test the sensitivity of our algorithm starting from sparse or dense regions. On the other hand, fake visit count $\rvx_0$ is initialized using one of the following settings: `Deg' (proportional to node degree, e.g., $\tilde x_i = |\gN(i)|$ for all $i \in [N]$), `Non-unif' (a non-uniform draw from a Dirichlet distribution with default hyperparameter $0.5$), and `Unif' (same initial count across all nodes, e.g., $\tilde x_i = 1$ for all $i \in [N]$). In Table \ref{tab:1}, both TVD and NRMSE\footnote{NRMSE inherently measures the deviation from the true value, hence confidence intervals are not provided for this metric.} show consistent performance when starting from either the low-degree group or the high-degree group. Similarly, a robust result is observed when using different settings of fake visit counts.

\subsection{Computational Cost Comparison with SRRW}
We compare HDT-MHRW and SRRW under a fixed computational budget $B$. At each iteration, SRRW needs to compute or retrieve transition probabilities for every neighbor due to the nature of the kernel design \eqref{eqn:original SRRW kernel}, whereas HDT only updates the proposal for a single candidate. This computational cost aligns with the concerns of the $O(\gN(i))$ evaluations in the proposal distribution highlighted in \citet{zanella2020informed, grathwohl2021oops} in high-dimensional spaces. In graph sampling, simply counting samples to assess performance can be misleading under rate-limited API constraints, e.g., online social network sampling \citep{xu2017challenging,li2019walking}. The performance gap between HDT-MHRW and SRRW in budget $B$ increases with the degree of the node, as proved in Lemma \ref{lemma: average cost per sample}. Figure \ref{fig:Budget} shows that under the same total budget, HDT-MHRW consistently achieves lower TVD and NRMSE than SRRW. The discrepancy is even more pronounced in the Facebook graph in both plots, where the average degree ($43.6$) far exceeds that of p2p-Gnutella04 ($7.4$), supporting our discussion after Lemma \ref{lemma: average cost per sample}, where larger neighborhood size leads to more performance advantage of HDT-MCMC compared to SRRW.
\begin{figure}[ht]\vspace{-2mm}
\begin{subfigure}[t]{0.49\columnwidth}
     \includegraphics[width=\linewidth]{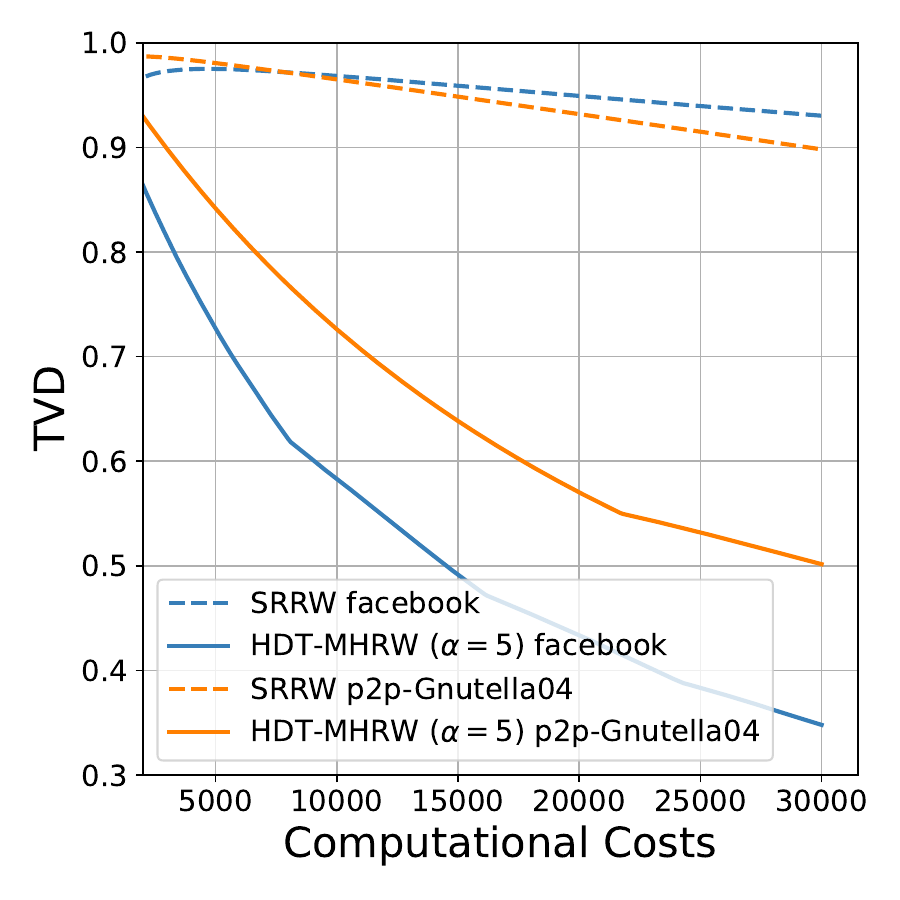}
     \end{subfigure}
     \begin{subfigure}[t]{0.49\columnwidth}
         \includegraphics[width=\linewidth]{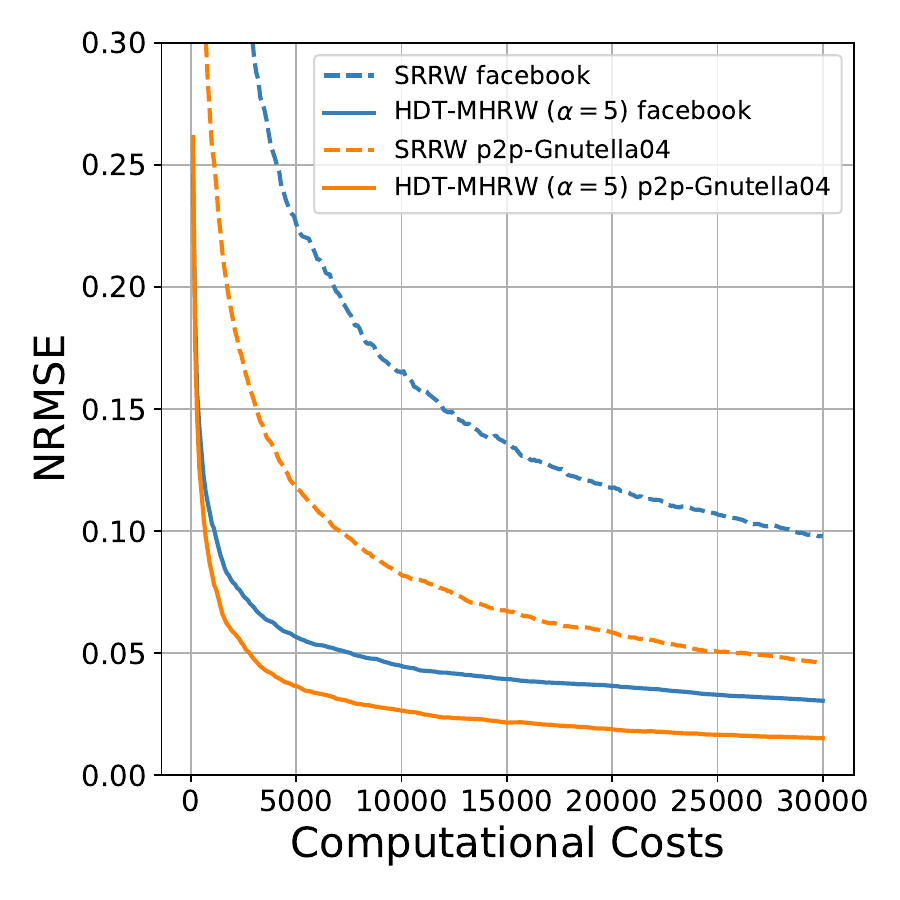}
     \end{subfigure}
     \vspace{-4mm}
    \caption{TVD and NRMSE Comparison between HDT-MHRW and SRRW under budget constraints.}
    \label{fig:Budget}\vspace{-4mm}
\end{figure}

\subsection{Least Recently Used (LRU) Cache Scheme}
A core challenge in HDT-MCMC, as noted in Section \ref{section 1.2}, involves handling large graphs or exponentially growing configuration state spaces, where storing the entire visit counts $\tilde \rvx \in \sR^{|\gX|}$ of $|\gX|$ dimension becomes infeasible. To address this challenge, we propose a memory-efficient Least Recently Used (LRU) cache strategy and emulate its effect on real-world graphs that serves as a pilot study prior to full deployment in large configuration spaces (beyond this paper’s scope). 

The essential idea is to track only recently visited states, discarding the least-recently used when capacity in cache $\gC$ is reached. This leverages temporal locality, as non-neighboring states do not affect self-repellency. Unlike sparsification methods \citep{verma2024sparsifying,barnes2024efficient}, which ignore the walker’s position, LRU’s local simplicity aligns with our HDT-MCMC. For a neighbor $j \notin \gC$ of current state $i$, we approximate its frequency via \vspace{-1mm}
\begin{equation}\label{eqn:LRU_fake_visit}
\hat x_j = \tilde \mu_j |\overline\gN(i) \cap \gC|^{-1}\sum_{k \in \overline\gN(i) \cap \gC}\tilde x_k / \tilde \mu_k. \vspace{-2mm}
\end{equation}
This approach approximates $\tilde x_j / \tilde \mu_j$, which mimics \eqref{eqn:our proposed pi} via temporal closeness among states frequently visited around $i$, allowing the sampler to maintain self-repellency without true visit count. We examine HDT-MCMC using an LRU cache with size $|\gC|=r|\gX|$ where $0 < r < 1$. In Figure \ref{fig:LRU}, the average TVD of HDT-MHRW with LRU outperforms MHRW even without exact empirical measure due to it limited capacity. In most scenarios, HDT-MHRW with LRU leads to $10\%$ smaller TVD than the base MHRW with over $90\%$ memory reduction. The performance of HDT-MHRW with LRU is robust to the choice of $r$ in most cases. Due to space constraint, we defer more result of LRU scheme in plus-combined graph (over $100$K nodes) to Appendix \ref{appendix: LRU large}.
\begin{figure}[ht]\vspace{-2mm}
     \begin{subfigure}[t]{0.49\columnwidth}
     \includegraphics[width=\linewidth]{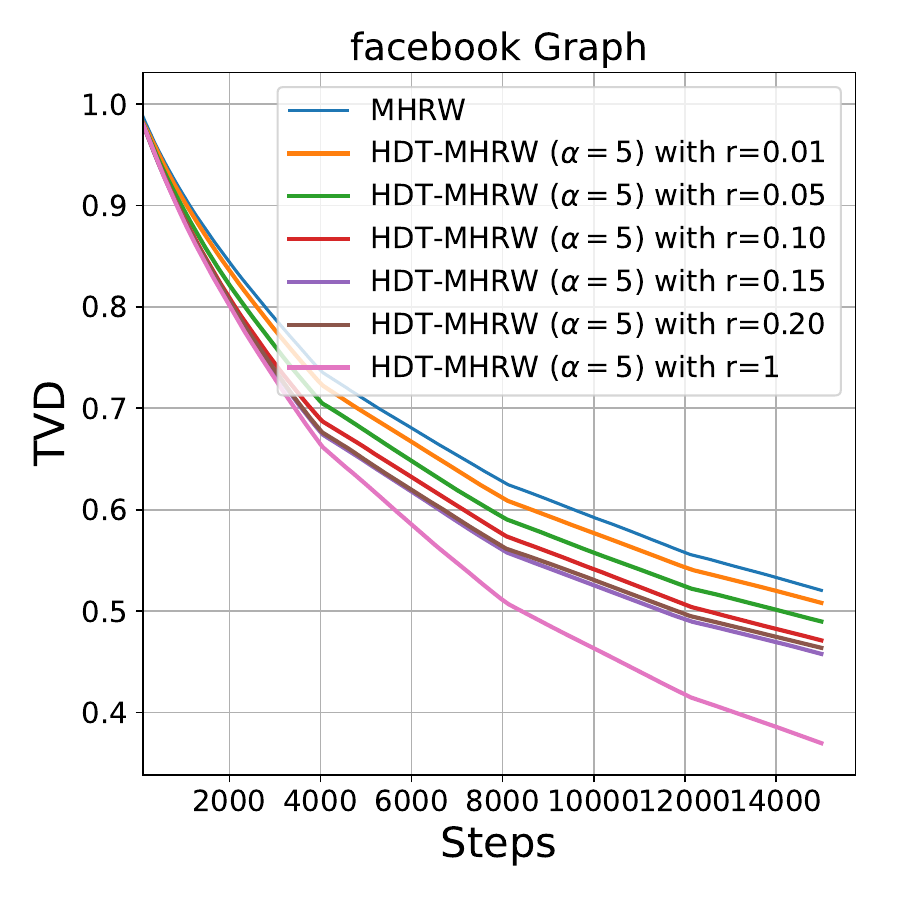}
     \end{subfigure}
     \begin{subfigure}[t]{0.49\columnwidth}
         \includegraphics[width=\linewidth]{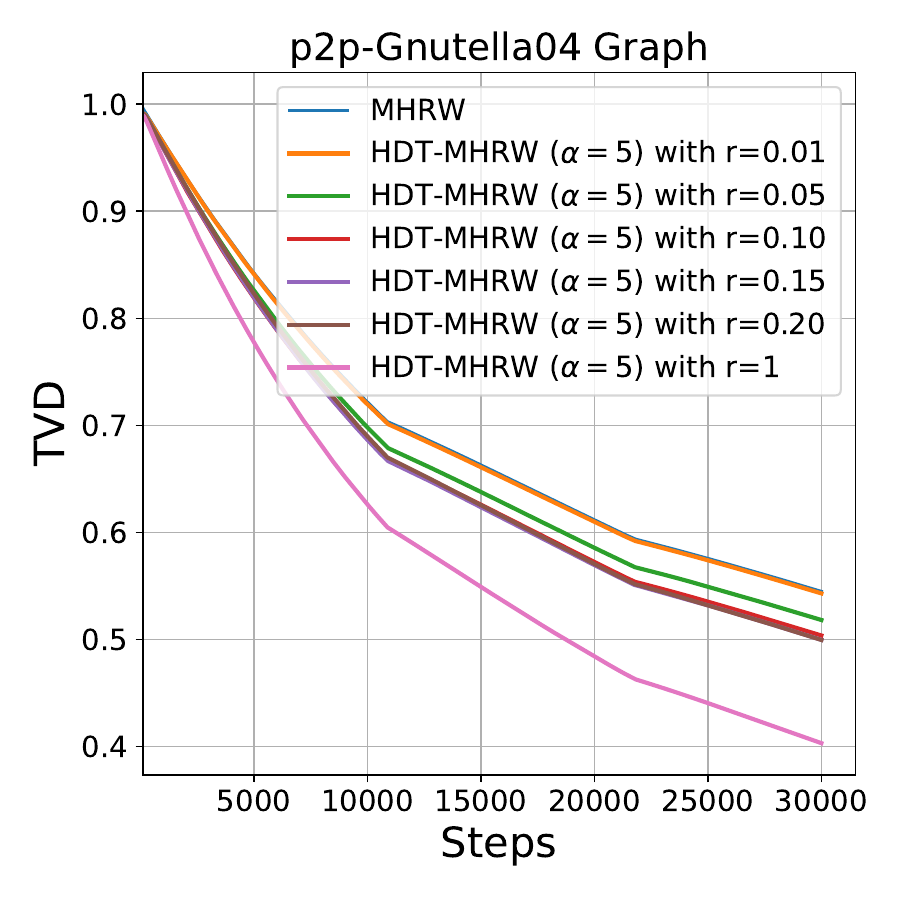}
     \end{subfigure}
     \vspace{-5mm}
    \caption{Performance of HDT-MHRW with LRU cache scheme.}
    \label{fig:LRU}\vspace{-4mm}
\end{figure}

\section{Conclusion}
In this paper, we propose a history-driven target (HDT) framework for MCMC sampling on general graphs. By embedding self-repellency in the target rather than the transition kernel, HDT maintains unbiased sampling with a lightweight design and provides an $O(1/\alpha)$ variance reduction without high computational cost or time-reversibility constraints from SRRW. Our theoretical analysis covers both reversible and non-reversible MCMC samplers, while empirical results on real-world graphs show robust performance gains and reduced computational overhead. To handle memory limitations, we introduce a Least Recently Used (LRU) cache scheme, enabling partial tracking of the empirical measure without loss in sampling efficiency. Future directions include more refined memory approximations for exponentially large configuration spaces and applications to high-dimensional statistical inference and network analysis.

\section*{Acknowledgments and Disclosure of Funding}
We thank anonymous reviewers for their constructive comments and Bohyung Han from Seoul National University for the valuable discussions. This work was supported in part by National Science Foundation under Grant Nos. CNS-2007423, IIS-2421484, and by Brain Pool program funded by the Ministry of Science and ICT through the National Research Foundation of Korea (No. RS-2024-00408610).

\section*{Impact Statement}
This paper presents work whose goal is to advance the field of Machine Learning. There are many potential societal consequences of our work, none which we feel must be specifically highlighted here.

\bibliography{ref}
\bibliographystyle{icml2025}

%%%%%%%%%%%%%%%%%%%%%%%%%%%%%%%%%%%%%%%%%%%%%%%%%%%%%%%%%%%%%%%%%%%%%%%%%%%%%%%
%%%%%%%%%%%%%%%%%%%%%%%%%%%%%%%%%%%%%%%%%%%%%%%%%%%%%%%%%%%%%%%%%%%%%%%%%%%%%%%
% APPENDIX
%%%%%%%%%%%%%%%%%%%%%%%%%%%%%%%%%%%%%%%%%%%%%%%%%%%%%%%%%%%%%%%%%%%%%%%%%%%%%%%
%%%%%%%%%%%%%%%%%%%%%%%%%%%%%%%%%%%%%%%%%%%%%%%%%%%%%%%%%%%%%%%%%%%%%%%%%%%%%%%
\newpage
\appendix
\onecolumn
\section{Matrix Form of Covariance Matrix \eqref{eqn:base_covariance}}\label{appendix: matrix form of covariance}
The multivariate version of \citet[eq. (6.33)]{bremaud2013markov} shows that
\begin{equation}\label{eqn:A.1}
\lim_{t\to\infty}\frac{1}{t}\E\left[\left(\sum_{s=1}^t f(X_s) - \bar f\right)\left(\sum_{s=1}^t f(X_s) - \bar f\right)^T\right]
= \text{Var}_{\vmu}(f(X_0)) + 2\sum_{i=2}^{|\gX|}\frac{\lambda_i}{1-\lambda_i} \vf^T\mD_{\vmu}\vv_i\vu_i^T \vf,
\end{equation}
where $f(X) \in \sR^d$, $\vf = [f(i), f(j), \cdots]^T \in \sR^{|\gX|\times d}$, and $\text{Var}_{\vmu}(f(X_0)) = \vf^T(\mD_{\vmu} - \vmu\vmu^T)\vf$.

Replacing the function $f(\cdot)$ in \eqref{eqn:A.1} with our canonical vector $\vdelta_{(\cdot)}$ defined in Section \ref{preliminaries}, we have
\begin{align*}
\mV &= \lim_{t\to\infty}\frac{1}{t}\E\left[\left(\sum_{s=1}^t\vdelta_{X_s} - \vmu\right)\left(\sum_{s=1}^t\vdelta_{X_s} - \vmu\right)^T\right] \\
&= \mD_{\vmu} - \vmu\vmu^T + 2\sum_{i=2}^{|\gX|}\frac{\lambda_i}{1-\lambda_i}\mD_{\vmu}\vv_i\vu_i^T \\
&= \mD_{\vmu}\left(\sum_{i=1}^{|\gX|}\vv_i\vu_i^T\right) - \vmu\vmu^T + 2\mD_{\vmu}\sum_{i=2}^{|\gX|}\frac{\lambda_i}{1-\lambda_i}\vv_i\vu_i^T \\
&= \mD_{\vmu}(\vone\vmu^T) - \vmu\vmu^T + \mD_{\vmu}\sum_{i=2}^{|\gX|} \frac{1+\lambda_i}{1-\lambda_i}\vv_i\vu_i^T \\
&= \sum_{i=2}^{|\gX|}\frac{1+\lambda_i}{1-\lambda_i}\mD_{\vmu}\vv_i\vu_i^T.
\end{align*}
where the second equality is because $\mI = [\vdelta_{i}]^T_{i\in\gX}$, and the third equality comes from the fact that $\mI = \sum_{i=1}^{|\gX|}\vv_i\vu_i^T$.

\section{Proof of Lemma \ref{lemma:unique design of pi_x}}\label{appendix: proof of unique design of pi_x}
    Following \ref{condition:2}, we let $\pi_i[\rvx] = \frac{f(x_i, \mu_i)}{\sum_{j\in\gX} f(x_j, \mu_j)}$ for some continuous differentiable function $f: \sR_{> 0} \times \sR_{> 0} \to \sR_{> 0} : (x, \mu) \to f(x, \mu)$. Then, \ref{condition:1} implies that for any scalars $c_1, c_2 > 0$,
    \begin{equation}\label{eqn:si_property}
            \frac{f\left(c_1 x_{i}, c_2 \mu_{i}\right)}{\sum_{k\in\gX} f\left(c_1 x_{k}, c_2 \mu_{k}\right)} = \frac{f\left(x_{i}, \mu_{i}\right)}{\sum_{k\in\gX} f\left(x_{k}, \mu_{k}\right)}.  
    \end{equation}
    Now we show that function $f$ has a specific form 
    \begin{equation}\label{eqn: form of f}
        f(c_1 x, c_2 \mu) = g(c_1, c_2) f(x, \mu)
    \end{equation}
    for any $x, \mu \in (0,1)$, $c_1, c_2 > 0$ and some continuous differentiable function $g: (0,\infty)^2 \to \sR_{>0}: (c_1, c_2) \to g(c_1, c_2)$. 
    
\begin{proof}[Proof of \eqref{eqn: form of f}]
    Rearranging \eqref{eqn:si_property} gives
    \[
    f\left(c_1 x_{i}, c_2 \mu_{i}\right) = \frac{\sum_{k\in\gX} f\left(c_1 x_{k}, c_2 \mu_{k}\right)}{\sum_{k\in\gX} f\left( x_{k}, \mu_{k}\right)} \cdot f\left(x_{i}, \mu_{i}\right).
    \]
    Then, with slight abuse of notation, we take $g(c_1, c_2, \rvx, \vmu) := \frac{\sum_{k\in\gX} f\left(c_1 x_{k}, c_2 \mu_{k}\right)}{\sum_{k\in\gX} f\left( x_{k}, \mu_{k}\right)}$. In other words, 
    \begin{equation}\label{eqn:functionf}
    f\left(c_1 x_{i}, c_2 \mu_{i}\right) =g(c_1, c_2, \rvx, \vmu) f(x_i, \mu_i)
    \end{equation}
    for any $i \in \gX$ and arbitrary choices $\rvx, \vmu \in \Int(\Sigma)$. Then, we show that function $g(c_1, c_2, \rvx, \vmu)$ is independent of parameters $\rvx, \vmu$, corresponding to \eqref{eqn: form of f}.
    
    Consider two pairs of probability vectors $(\rvx, \vmu)$ and $(\tilde \rvx, \tilde \vmu)$, whose $i$-th coordinate are identical, i.e., $x_i = \tilde x_i, \mu_i = \tilde \mu_i$. Then, $f(c_1 x_i, c_2 x_i) = f(c_1 \tilde x_i, c_2 \tilde x_i)$, and $f(x_i, x_i) = f(\tilde x_i, \tilde x_i)$. By \eqref{eqn:functionf}, we have $g(c_1, c_2, \rvx, \vmu) = g(c_1, c_2, \tilde \rvx, \tilde \vmu)$.

    Similarly, we construct another pair $(\hat \rvx, \hat \vmu)$ such that $\hat x_j = \tilde x_j$ and $\hat \mu_j = \tilde \mu_j$ for $j$-th coordinate (where $i\neq j$), then we have $g(c_1, c_2, \tilde \rvx, \tilde \vmu) = g(c_1, c_2, \hat \rvx, \hat \vmu)$.

    Therefore, the pair $(\tilde \rvx, \tilde \vmu)$ serves as a `bridge' to $(\rvx, \vmu)$ and $(\hat \rvx, \hat \vmu)$, leading to
    \begin{equation}\label{eqn:functionf2}
            g(c_1, c_2, \rvx, \vmu) = g(c_1, c_2, \hat \rvx, \hat \vmu).
    \end{equation}
    This implies that for a given pair $(\rvx, \vmu)$, we can always construct a distinct pair $(\hat \rvx, \hat \vmu)$ such that \eqref{eqn:functionf2} holds true, indicating that function $g$ must be identical for any pair $(\rvx, \vmu)$. Hence, function $g$ cannot actually depend on $\rvx$or $\vmu$; it must be a function of $(c_1, c_2)$ alone.
\end{proof}

Next, we have the following proposition regarding the exact form of function $f(\cdot)$.\footnote{We admit that this result is likely to be well known but we were unable to locate reliable sources. Thus, we provide the proof for completeness.}
\begin{proposition}\label{proposition:A.3}
   The function $f(x,\mu)$ satisfying \eqref{eqn: form of f} must be of the form 
    \begin{equation}\label{eqn:3}
        f(x,\mu) = C x^{\rho_1}\mu^{\rho_2}
    \end{equation}
    for some positive constant $C$ and $\rho_1, \rho_2 \in \sR$.
\end{proposition}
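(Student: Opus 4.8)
The plan is to solve the multiplicative functional equation $f(c_1 x, c_2 \mu) = g(c_1,c_2) f(x,\mu)$ by reducing it to a known additive (Cauchy-type) equation via logarithms. First I would fix a convenient base point, say $x_0 = \mu_0 = 1$ (recall $f$ is defined on $(0,\infty)^2$ here, after the scale-invariance argument already extended the domain), and set $C \triangleq f(1,1) > 0$. Plugging $x = \mu = 1$ into \eqref{eqn: form of f} gives $f(c_1,c_2) = g(c_1,c_2)\,f(1,1) = C\,g(c_1,c_2)$, so $g$ is determined by $f$ up to the constant $C$, and the functional equation becomes $f(c_1 x, c_2\mu)\,f(1,1) = f(c_1,c_2)\,f(x,\mu)$, i.e.
\[
    f(c_1 x, c_2 \mu) = \frac{1}{C} f(c_1,c_2) f(x,\mu), \qquad \forall\, x,\mu,c_1,c_2 > 0.
\]

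Next I would pass to logarithms. Define $\phi(s,t) \triangleq \ln f(e^s, e^t) - \ln C$ for $(s,t)\in\R^2$; this is well-defined and continuous (indeed $C^1$) since $f$ is positive and continuously differentiable, and $\phi(0,0) = 0$. Substituting $x = e^s$, $\mu = e^t$, $c_1 = e^{s'}$, $c_2 = e^{t'}$ into the displayed identity and taking logs turns it into the additive Cauchy equation on $\R^2$:
\[
    \phi(s+s', t+t') = \phi(s,t) + \phi(s',t').
\]
A continuous additive function on $\R^2$ is linear, so $\phi(s,t) = \rho_1 s + \rho_2 t$ for constants $\rho_1,\rho_2 \in \R$ (obtained, e.g., as $\rho_1 = \partial_s\phi(0,0)$, $\rho_2 = \partial_t\phi(0,0)$, using differentiability to bypass the Hamel-basis pathologies; alternatively one can cite the classical theorem that a continuous solution of the Cauchy equation is linear and apply it separately in each coordinate after noting $\phi(s,t) = \phi(s,0) + \phi(0,t)$). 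Exponentiating back, $f(e^s,e^t) = C e^{\rho_1 s + \rho_2 t}$, which is exactly $f(x,\mu) = C x^{\rho_1}\mu^{\rho_2}$ after setting $x = e^s$, $\mu = e^t$. Conversely, any such $f$ manifestly satisfies \eqref{eqn: form of f} with $g(c_1,c_2) = c_1^{\rho_1} c_2^{\rho_2}$, giving the "only if" and "if" directions together.

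The only mildly delicate point — the part I'd be most careful about — is justifying the step from "continuous (or $C^1$) additive function" to "linear"; without a regularity hypothesis the Cauchy equation has wild non-measurable solutions, so I would explicitly invoke the continuity/differentiability of $f$ assumed in the statement of Lemma \ref{lemma:unique design of pi_x} (condition \ref{condition:2}) to rule these out. One must also make sure the domain is genuinely all of $(0,\infty)^2$ rather than just $(0,1)^2$: this is fine because the scale-invariance relation \eqref{eqn: form of f} itself lets us evaluate $f$ at arbitrarily large arguments $c_1 x$, $c_2\mu$, so the functional equation effectively forces/uses an extension to the full positive quadrant, and the Cauchy equation for $\phi$ then holds on all of $\R^2$. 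Everything else is routine substitution and bookkeeping.
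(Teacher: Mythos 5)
Your proof is correct, and it shares the paper's basic move (pass to logarithms so that the scaling relation becomes additive) but finishes differently. The paper keeps $g$ as an unknown, sets $c_2=1$, differentiates the relation with respect to $\log c_1$, and lets $c_1\to 1$ to conclude that the partial derivatives of $F(\log x,\log\mu)=\log f(x,\mu)$ are constants $\rho_1,\rho_2$, which requires (and uses) the continuous differentiability of $f$ and hence of $g$. You instead eliminate $g$ by evaluating at a base point, so that the relation becomes $f(c_1x,c_2\mu)=C^{-1}f(c_1,c_2)f(x,\mu)$, and then invoke the classical fact that a continuous additive function on $\R^2$ is linear; this needs only continuity, so your route is slightly more economical in hypotheses and outsources the analytic step to a standard theorem rather than redoing it by calculus, while the paper's version is self-contained. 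Two small points to tighten: (i) your base point $x=\mu=1$ lies outside the range $x,\mu\in(0,1)$ for which \eqref{eqn: form of f} is stated, so either pick a base point inside $(0,1)^2$, or first note that the relation extends to all of $(0,\infty)^2$ --- this follows in one line because positivity of $f$ gives $g(c_1d_1,c_2d_2)=g(c_1,c_2)g(d_1,d_2)$, whence $f(c_1x,c_2\mu)/f(x,\mu)=g(c_1s,c_2t)/g(s,t)=g(c_1,c_2)$ for any $s,t$ scaling $x,\mu$ back into $(0,1)$ (recall $f$ is declared on $\R_{>0}\times\R_{>0}$ in the proof of Lemma \ref{lemma:unique design of pi_x}, so all evaluations are legitimate); (ii) the regularity needed to exclude pathological Cauchy solutions is exactly the continuity assumed in \ref{condition:2}, as you note, so that caveat is already covered by the hypotheses. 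With these points spelled out, your argument is a complete and valid alternative proof of Proposition \ref{proposition:A.3}.
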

\begin{proof}[Proof of Proposition \ref{proposition:A.3}.]
Taking the logarithm of both sides of \eqref{eqn: form of f} gives:
\begin{equation}\label{eqn: log of function f}
    \log f(c_1 x, c_2 \mu) = \log g(c_1, c_2) + \log f(x, \mu).
\end{equation}
Define a function $F$ such that
\[
F(\log x, \log \mu) = \log f(x, \mu).
\]
Substituting function $F$ to \eqref{eqn: log of function f} gives:
\[
F(\log c_1 + \log x, \log c_2 + \log \mu) = \log g(c_1, c_2) + F(\log x, \log \mu).
\]
Focusing on partial derivatives, let $c_2 = 1$, i.e., $\log c_2 = 0$, to isolate $x$. Then,
\[
\frac{\partial}{\partial \log c_1}\left[F(\log c_1 + \log x, \log \mu) - F(\log x, \log \mu)\right] = \frac{\partial}{\partial \log c_1} \log g(c_1, 1).
\]
As $c_1 \to 1$, $\frac{\partial}{\partial \log c_1} \log g(c_1, 1)$ approaches to some constant $\rho_1 \in \sR$ irrelevant to $\mu$. Thus, the partial derivative of function $F(\log x, \log \mu)$ w.r.t $\log x$ is a constant and we have the form
\[
F(\log x, \log \mu) = \rho_1 \log x + ~\text{some function of } \log \mu.
\]
Similarly, the partial derivative of $F(\log x, \log \mu)$ w.r.t $\log \mu$ is a constant $\rho_2 \in \sR$ independent of $x$. Thus, we have
\[
F(\log x, \log \mu) = \rho_1 \log x + \rho_2 \log \mu + b = \log f(x, \mu)
\]
where $b$ is a constant. Therefore,
\[
f(x, \mu) = e^b x^{\rho_1} \mu^{\rho_2},
\]
which completes the proof.

\end{proof}
Following Proposition \ref{proposition:A.3}, for $\rho_1, \rho_2 \in \sR$, we have
\begin{equation}
\pi_i(\rvx) \propto x_i^{\rho_1}\mu_i^{\rho_2},
\end{equation}
where the constant $C$ in Proposition \ref{proposition:A.3} is absorbed into the normalizing constant. Then, based on the condition $\vpi[\vmu] = \vmu$ in \ref{condition:3}, we have 
\[
\pi_i[\vmu] = \frac{\mu_i^{\rho_2} \mu_i^{\rho_1}}{\sum_k \mu_k^{\rho_2} \mu_k^{\rho_1}} = \mu_i.
\]
Since this holds for any $i \in \gX$, we have
\[
\left(\frac{\mu_j}{\mu_i}\right)^{\rho_2-1} = \left(\frac{\mu_i}{\mu_j}\right)^{\rho_1} \Longleftrightarrow \frac{\mu_i}{\mu_j} = \left(\frac{\mu_i}{\mu_j}\right)^{(1-\rho_2) / \rho_1}, 
\]
thus $(1-\rho_2) / \rho_1 = 1$, or equivalently, $\rho_1 + \rho_2 = 1$, and
\[
\pi_i[\rvx] \propto \mu_i (x_i / \mu_i)^{1 - \rho_2}.
\]
Now, considering \ref{condition:4}, we need to prioritize the node that is under-sampled, i.e., node $i$ with $x_i / \mu_i < 1$ should have a higher probability than $\mu_i$ such that $1 - \rho_2 \leq 1$. Thus, $\alpha \triangleq \rho_2 - 1 \geq 0$, and $\pi_i[\rvx] \propto \mu_i (x_i / \mu_i)^{-\alpha}$, which completes the proof.

\section{Implementation of Algorithm \ref{alg:L_SRRW} For Advanced MCMC Samplers}\label{appendix:implementation of algorithm 1}
\subsection{Multiple-try Metropolis algorithm with locally balanced weighs function (MTM)}
We followed the MTM algorithm in \cite{chang2022rapidly}, where weights function incorporated locally balanced function $h$ introduced by \cite{zanella2020informed}. Specifically, the locally balanced weights function is defined as
\begin{equation}
    w(j|i) = h\left(\frac{\mu_jQ_{ji}}{\mu_iQ_{ij}}\right)\quad\forall~i,j \in \gX,
\end{equation}
where $Q$ is the random walk proposal. Thus, the corresponding  locally balanced weights function of HDT-MTM is defined as 
\begin{equation}\label{eqn:weight_function}
    w_{\rvx_n}(j|i) = h\left(\frac{\pi_j[\rvx_n]Q_{ji}}{\pi_i[\rvx_n]Q_{ij}}\right) \quad\forall~i,j \in \gX, 
\end{equation}
where $\rvx$ is the empirical measure at step $n$. Typical choices of function $h(u)$ include $\sqrt{u}, \max(1, u), \min(1, u), u / (1 + u)$ and $1 + u$. In the simulation in Section \ref{simulations}, we choose $h$ as $h(u)=\sqrt{u}$, and the algorithm of HDT-MTM is described as below.
\begin{algorithm}[ht]
\caption{HDT-MTM}
\label{alg:HDT-MTM}
\begin{algorithmic}
\STATE {\bfseries Input:}  The number of candidates $K$, a weight function $w$ parameterized by $\rvx$ defined in \eqref{eqn:weight_function}, the number of iterations $T$.
\STATE \textbf{Initialization:} state $X_0 \!\in\! \gX$, visit count $ \tilde x(i) \!>\!0, \forall i \!\in\! \gX$.
\FOR{$n = 0$ {\bfseries to} $T-1$}
\STATE Step 1:Draw $\hat Y_1, \cdots, \hat Y_K$ u.a.r. from $\gN(X_{n})$, and compute $w_{\rvx_n}(\hat Y_k|X_n)$ for $k \in [K]$\;
\STATE Step 2:  Select $j \in [K]$ with probability proportional to $w_{\rvx_n}( \hat Y_j|X_n)$ and let $Y = \hat Y_j$\;
\STATE Step 3: Sample $\hat X_1, \cdots, \hat X_{K-1}$ u.a.r. from $\gN(Y)$ and compute $\alpha_{MTM}$ as
\[\alpha_{MTM}=\min\left\{1, \frac{\sum_{j\in[K]}w_{\rvx_n}(\hat Y_j|X_n)}{w_{\rvx_n}(X_n|Y) + \sum_{j\in[K-1]} w_{\rvx_n}(\hat X_j|Y)}\right\}\]\;
\STATE Step 4: With probability $\alpha_{MTM}$, accept $Y$ and let $X_{n+1} = Y$; otherwise, $X_{n+1} = X_n$.\;
\STATE Stpe 5: Update visit count $\tilde x(X_{n+1}) \leftarrow \tilde x(X_{n+1}) + 1$.
\ENDFOR
\STATE {\bfseries Output:} A set of samples $\{X_n\}_{n=1}^T$.
\end{algorithmic}
\end{algorithm}

\subsection{Metropolis-Hastings algorithm with delayed acceptance (MHDA)}
MHDA \cite{lee2012beyond} is a nonreversible chain where it reduces the probability of the process from backtracking. Unlike standard MH always taking the next step once it accepted, MHDA avoid non-backtracking by re-proposing  other neighboring states uniformly with another  acceptance probability whenever it is about to backtrack. This new acceptance probability is related to the base time-reversible chain $\mP$ with target $\mu$. It is proven to ensure the asymptotic variance of MHDA is no larger than that of MHRW. To apply our framework, we change the target distribution$\mu$ inside the first and the second acceptance probability by $\pi[\rvx]$. The detailed HDT-MHDA algorithm is described in Algorithm \ref{alg:HDT-MHDA}.

\begin{algorithm}[ht]
\caption{HDT-MHDA}
\label{alg:HDT-MHDA}
\begin{algorithmic}
\STATE {\bfseries Input:} The number of iterations $T$.
\STATE \textbf{Initialization:} state $X_0 \!\in\! \gX$, last visit $Y_0=X_0$, visit count $ \tilde x(i) \!>\!0, \forall i \!\in\! \gX$.
\FOR{$n = 0$ {\bfseries to} $T-1$}
\STATE Step 1:
Draw $k$ u.a.r. from $\gN(X_{n})$\;
\STATE Step 2: Generate $p\sim U(0, 1)$\;
\IF{$p\leq \min\left\{1, \frac{\pi_{k}[\rvx_n]d(X_n)}{\pi_{X_n}[\rvx_n]d(k)}\right\}$}
    \STATE
    \IF{$Y_n=k$ and $d(X_n)>1$}
        \STATE Choose $r$ u.a.r from $\gN(X_n)\setminus\{k\}$ and generate $q\sim U(0,1)$
        \IF{$q \leq \min\left\{1,\min\Big\{1,\big(\frac{\pi_r[\rvx_n]d(X_n)}{\pi_{X_n}[\rvx_n]_{X_n}d(r)}\big)^2\Big\}\max\Big\{1, \big(\frac{\pi_{X_n}[\rvx_n]d(k)}{\pi_k[\rvx_n]d(X_n)}\big)^2\Big\}\right\}$}
            \STATE $X_{n+1}=r, Y_{n+1}=X_n$
        \ELSE 
        \STATE $X_{n+1}=k, Y_{n+1}=X_n$
        \ENDIF
    \ELSE 
    \STATE $X_{n+1}=k, Y_{n+1}=X_n$
    \ENDIF
\ELSE 
\STATE $X_{n+1}=X_n, Y_{n+1}=Y_n$
\ENDIF
\STATE Step 3: Update visit count $\tilde x(X_{n+1}) \leftarrow \tilde x(X_{n+1}) + 1$.
\ENDFOR
\STATE {\bfseries Output:} A set of samples $\{X_n\}_{n=1}^T$.
\end{algorithmic}
\end{algorithm}

\subsection{2-cycle MCMC}
HDT-2-cycle MCMC alternates between two reversible Markov chains ($\mP_1$ and $\mP_2$) targeting the history-dependent distribution $\vpi[\rvx]$. At each iteration $n$, if $n$ is even, the next state $X_{n+1}$ is generated using $\mP_1$; if odd, $\mP_2$ is used. Both chains adaptively target $\vpi[\rvx_n]$ via the updated empirical measure $\rvx_n$ at each step. After sampling the state $X_{n+1}$, the visit count $\tilde x(X_{n+1})$ is incremented by $1$, which will guide exploration by penalizing over-visited states, as detailed in Algorithm \ref{alg:HDT-2c}.
\begin{algorithm}[H]
\caption{HDT-2-cycle MCMC}
\label{alg:HDT-2c}
\begin{algorithmic}
\STATE {\bfseries Input:} Two reversible chain $\mP_1$ and $\mP_2$ with respect to target $\mu$, the number of iterations $T$.
\STATE \textbf{Initialization:} state $X_0 \!\in\! \gX$, visit count $ \tilde x(i) \!>\!0, \forall i \!\in\! \gX$.
\FOR{$n = 0$ {\bfseries to} $T-1$}
\STATE Step 1: (Run chain $\mP$)
\IF{n\%2 == 0}
\STATE Generate sample $X_{n+1}$ from chain $\mP_1(X_n,\cdot)$ with target $\vpi[\rvx_n]$
\ELSE
\STATE Generate sample $X_{n+1}$ from chain $\mP_2(X_n, \cdot)$ with target $\vpi[\rvx_n]$
\ENDIF
\STATE Step 2: Update visit count $\tilde x(X_{n+1}) \leftarrow \tilde x(X_{n+1}) + 1$.
\ENDFOR
\end{algorithmic}
\end{algorithm}

\section{Proof of Lemma \ref{lemma: stability ode}}\label{appendix: proof of stability ode}
The proof consists of three parts: We first show that 
\begin{equation}\label{eqn:Lyapunov function}
    V(\rvx) = \sum_{i \in \gX} \mu_i (x_i / \mu_i)^{-\alpha}
\end{equation}
is the Lyapunov function of the ODE 
\begin{equation}\label{eqn:appendix_ode}
    \dot \rvx = \vpi[\rvx] - \rvx.
\end{equation}

Then, we prove that $\vmu$ is the unique fixed point of the ODEs.

Last, with LaSalle's invariance principle \citep[Corollary 4.2]{khalil2002nonlinear}, we can prove that $\vmu$ is globally asymptotically stable.

\textbf{Part I.} To prove \eqref{eqn:Lyapunov function} is the Lyapunov function, we take partial derivative of $V(\rvx)$ w.r.t $t$ such that
\begin{align*}
\frac{\partial V(\rvx)}{\partial t} &= \sum_{i \in \gX} \frac{\partial V(\rvx)}{\partial x_i}\dot x_i \\
&= -\alpha\sum_{i \in \gX} \left(\frac{x_i}{\mu_i}\right)^{-\alpha - 1} \left(\frac{\mu_i (x_i / \mu_i)^{-\alpha}}{\sum_{k\in\gX}\mu_k (x_k / \mu_k)^{-\alpha}} - x_i\right) \\
&= -\alpha \sum_{i \in \gX} \left[\frac{\mu_i (x_i / \mu_i)^{-2\alpha - 1}}{\sum_{k\in\gX}\mu_k (x_k / \mu_k)^{-\alpha}} + \mu_i \left(\frac{x_i}{\mu_i}\right)^{-\alpha}\right] \\
&= \frac{-\alpha}{V(\rvx)}\left[\sum_{i \in \gX}\mu_i (x_i / \mu_i)^{-2\alpha - 1} - \left(\sum_{i\in\gX}\mu_i \left(\frac{x_i}{\mu_i}\right)^{-\alpha}\right)^2\right] \\
&= \frac{-\alpha}{V(\rvx)}\left[\sum_{i \in \gX} x_i (x_i / \mu_i)^{-2\alpha - 2} - \left(\sum_{i\in\gX} x_i \left(\frac{x_i}{\mu_i}\right)^{-\alpha - 1}\right)^2\right] \leq 0,
\end{align*}
where the fourth equality comes from the definition of $V(\rvx)$ in \eqref{eqn:Lyapunov function}, and the last inequality is from Cauchy-Schwartz inequality by rewriting 
\[
\left(\sum_{i\in\gX} x_i \left(\frac{x_i}{\mu_i}\right)^{-\alpha - 1}\right)^2 = \left(\sum_{i\in\gX} x_i^{1/2} \cdot x_i^{1/2}\left(\frac{x_i}{\mu_i}\right)^{-\alpha - 1}\right)^2 \leq \underbrace{\left(\sum_{i\in\gX} x_i\right)}_{=1}\left(\sum_{i\in\gX} x_i (x_i / \mu_i)^{-2\alpha - 2}\right),
\]
and its equality holds only when $x_i^{1/2} = C x_i^{1/2} (x_i / \mu_i)^{-\alpha - 1}$ for some constant $C$ and any $i \in \gX$. Or equivalently, $\rvx = \vmu$. Then, $\dot V(\rvx)\leq 0$ and the equality holds only when $\rvx = \vmu$, which is the equilibrium of the ODE \eqref{eqn:appendix_ode}. Thus, $V(\rvx)$ in the form of \eqref{eqn:Lyapunov function} is the Lyapunov function of the ODE \eqref{eqn:appendix_ode}.

\textbf{Part II.} Then, we show that $\vmu$ is the unique fixed point of $\vpi[\rvx]$ in \ref{condition:3}. Let $\rvx^*$ be the fixed point of $\vpi[\rvx]$, i.e.,
\[
\pi_i[\rvx^*] = \frac{\mu_i (x^*_i / \mu_i)^{-\alpha}}{\sum_{k \in \gX}\mu_k (x^*_k / \mu_k)^{-\alpha}} = x^*_i, \forall i \in \gX.
\]
Rearranging and combining terms on both sides of the equation gives
\[
(x^*_i / \mu_i)^{-\alpha - 1} = \sum_{k \in \gX}\mu_k (x^*_k / \mu_k)^{-\alpha}.
\]
Since the RHS is invariant to index $i$, we have
\[
(x^*_i / \mu_i)^{-\alpha - 1} = (x^*_j / \mu_j)^{-\alpha - 1}, \forall i, j \in \gX.
\]
Since $\alpha \geq 0$, we have $x^*_i / \mu_i = x^*_j / \mu_j$ for all $i, j \in \gX$, implying that $x^*_i = C \mu_i$ for some scalar $C$ and any $i \in \gX$, and thus $\rvx^* = \vmu$ because $\rvx^*, \vmu \in Int(\Sigma)$.

\textbf{Part III.} We first provide the LaSalle's invariance principle below for self-contained purpose.
\begin{theorem}[LaSalle's invariance principle \citep{khalil2002nonlinear}]\label{theorem: LaSalle's invariance principle}
    Let $\rvx^*$ be an equilibrium point for the ODE $\dot \rvx = f(\rvx)$. Let $V: \text{dom} f \to \sR$ be a continuously differentiable, radially unbounded function, positive definite function such that $\dot V(\rvx) \leq 0$ for all $\rvx \in \text{dom} f$.Let $S = \{\rvx \in \text{dom} f| \dot V(\rvx) = 0\}$ and suppose that no solution can stay identically in $S$ other than the trivial solution $\rvx(t) \equiv \rvx^*$. Then, $\rvx^*$ is globally asymptotically stable.
\end{theorem}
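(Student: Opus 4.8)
The plan is to derive this statement---the Barbashin--Krasovskii global version of LaSalle's principle---from three ingredients: the classical Lyapunov stability theorem, a boundedness argument based on radial unboundedness, and the invariance properties of positive limit sets. Global asymptotic stability of $\rvx^*$ means precisely that $\rvx^*$ is (i) Lyapunov stable and (ii) globally attractive, so I would establish these two properties separately and then combine them.

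First I would prove Lyapunov stability. Since $V$ is positive definite (so $V(\rvx^*)=0$ and $V(\rvx)>0$ for $\rvx\neq\rvx^*$) and $\dot V(\rvx)\le 0$ along trajectories, the sublevel sets $\Omega_c \triangleq \{\rvx\in\text{dom} f : V(\rvx)\le c\}$ are positively invariant: a trajectory starting in $\Omega_c$ cannot increase $V$, hence stays in $\Omega_c$. Given any $\epsilon>0$, choosing $c$ small enough that $\Omega_c$ lies inside the ball of radius $\epsilon$ around $\rvx^*$, and then a $\delta$-ball inside $\Omega_c$, yields the standard $\epsilon$--$\delta$ stability estimate. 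Next, radial unboundedness ($V(\rvx)\to\infty$ as $\|\rvx\|\to\infty$) guarantees that every $\Omega_c$ is bounded, hence compact by continuity of $V$; therefore any trajectory starting at $\rvx(0)$ remains in the compact, positively invariant set $\Omega_{c}$ with $c=V(\rvx(0))$, which also ensures the solution is defined for all $t\ge 0$.

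The core of the argument is the invariance step. Because $V(\rvx(t))$ is non-increasing along the bounded trajectory and bounded below by $0$, it converges to some limit $a\ge 0$. I would then invoke the standard properties of the positive limit set $L^+$ of the bounded trajectory: $L^+$ is nonempty, compact, invariant, and the trajectory approaches $L^+$ as $t\to\infty$. By continuity, $V\equiv a$ on $L^+$; since $L^+$ is invariant, $V$ is constant along any solution lying entirely in $L^+$, which forces $\dot V\equiv 0$ there, i.e. $L^+\subseteq S$. Thus $L^+$ is an invariant subset of $S$, so any solution starting in $L^+$ remains in $S$ for all time; by the hypothesis that the only such solution is $\rvx(t)\equiv\rvx^*$, we conclude $L^+=\{\rvx^*\}$. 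Hence $\rvx(t)\to\rvx^*$, and since $\rvx(0)$ was arbitrary this gives global attraction. Combined with Lyapunov stability, $\rvx^*$ is globally asymptotically stable.

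The main obstacle is establishing the invariance properties of $L^+$ rigorously, since this is where the dynamical content resides: nonemptiness and compactness follow from boundedness, but invariance of $L^+$ requires continuous dependence of solutions on initial conditions (the flow property), and the step ``$V\equiv a$ on $L^+$ $\Rightarrow$ $\dot V\equiv 0$ on $L^+$'' uses that $V$ is constant along the limiting invariant trajectories. Handling these carefully---rather than the routine Lyapunov-stability and sublevel-set bookkeeping---is the crux; everything else is standard.
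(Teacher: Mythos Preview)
Your sketch is a correct outline of the standard Barbashin--Krasovskii/LaSalle argument, but there is nothing in the paper to compare it against: the paper does not prove this theorem. It is stated in Appendix~D with the preface ``We first provide the LaSalle's invariance principle below for self-contained purpose'' and cited verbatim from \citet[Corollary~4.2]{khalil2002nonlinear}; the paper then \emph{applies} it (in Part~III of the proof of Lemma~\ref{lemma: stability ode}) by checking that the Lyapunov function $V(\rvx)=\sum_i \mu_i(x_i/\mu_i)^{-\alpha}$, the equilibrium $\rvx^*=\vmu$, and the set $S=\{\vmu\}$ satisfy the hypotheses. So your proposal is not wrong, but it answers a different question than the paper does---the paper treats the theorem as a black box, and the relevant ``proof'' content in the paper is the verification of the hypotheses for the specific ODE $\dot\rvx=\vpi[\rvx]-\rvx$, not a derivation of LaSalle's principle itself.
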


To apply Theorem \ref{theorem: LaSalle's invariance principle}, we let $f(\rvx) \triangleq \vpi[\rvx] - \rvx$ such that $domf = Int(\Sigma)$. The set $S = \{\vmu\}$. By construction, $V(\rvx) \geq 0$ and $V(\rvx) \to \infty$ as $\rvx \to \partial \Sigma$ (the boundary of the probability simplex, where at least one entry $x_i$ of the empirical measure $\rvx$ becomes zero). Along with the design of $\vpi[\rvx]$ in \eqref{eqn:our proposed pi} following that $\vmu$ is the unique fixed point, we prove that $\vmu$ is globally asymptotically stable for the ODE \eqref{eqn:appendix_ode}.

\section{Proof of Theorem \ref{theorem:main_results}}\label{appendix: main results}
We leverage the stochastic approximation theories from \citet{delyon2000stochastic} and \citet{fort2015central} for asymptotic analysis within our HDT-MCMC framework regarding the iteration of the empirical measure $\rvx_n$. We obtain almost sure convergence and CLT results in Appendix \ref{appendix e1} and \ref{appendix e2}, as was done similarly for SRRW in \citet{doshi2023self}. However, we adopt the asymptotic analysis to non-reversible Markov chains as detailed in Appendix \ref{appendix e3}, to which SRRW cannot accommodate.

To start with, we first introduce the following theorem for almost sure and CLT results of the stochastic approximation.
\begin{theorem}[\citep{delyon2000stochastic} Theorem 15, \citep{fort2015central} Theorem 3.2]\label{theorem:SA_theory} Consider the following stochastic approximation:
\begin{equation}\label{eqn:SA_iteration}
    \rvx_{n+1} = \rvx_n + \gamma_{n+1} H(\rvx_n, X_{n+1}),
\end{equation}
where $\{X_n\}_{n\geq 0}$ is driven by a Markov chain $\mP[\rvx]$, parameterized by $\rvx \in \sR^{d}$, with stationary distribution $\vpi[\rvx]$, and function $H: \sR^d \times \gX \to \sR^d: (\rvx, X) \to H(\rvx, X)$. Moreover, the mean field $h(\rvx) = \E_{i \sim \vpi[\rvx]}[H(\rvx, i)]$. For the following conditions:
\begin{enumerate}
    \item[(B1)] Function $h:\R^d\to\R^d$ is continuous on $\gX$, there exists a non-negative differentiable, radially unbounded function $V$ such that $\langle \nabla V(\rvx),h(\rvx)\rangle \leq 0, \forall \rvx\in\R^d$ and the set $\gS=\{\rvx \mid \langle \nabla V(\rvx),h(\rvx)\rangle = 0\}$ is such that $V(\gS)$ has empty interior. There exists a compact set $\mathcal{K}\subset \R^d$ such that $\langle \nabla V(\rvx),h(\rvx)\rangle < 0$ if $\rvx\notin \mathcal{K}$; 
    \item[(B2)] For every $\rvx$, there exists a function $m_{\rvx}(i)$ such that for every $i \in \gX$,
    \begin{equation}\label{eqn:poisson_equation}
        m_{\rvx}(i) - (\mP[\rvx] m_{\rvx})(i) = H(\rvx, i) - h(\rvx).
    \end{equation}
    For any compact set $\mathcal{C} \subset \R^d$, 
    \begin{equation}\label{eqn:boundedness}
        \sup_{\rvx\in\mathcal{C},i\in\gX} \|H(\rvx,i)\|_2 + \|m_{\rvx}(i)\|_2 < \infty.
    \end{equation}
    There exists a continuous function $\phi_{\mathcal{C}}$, $\phi_{\mathcal{C}}(0)=0$, such that for any $\rvx,\rvx'\in\mathcal{C}$,
    \begin{equation}\label{eqn:l-lipschitz}
        \sup_{i\in\gX} \| (\mP[\rvx]m_{\rvx})(i) - (\mP[\rvx']m_{\rvx'})(i)\|_2 \leq \phi_{\mathcal{C}}\left(\| \rvx-\rvx'\|_2\right).
    \end{equation}
    \item[(B3)] The step size follows $\gamma_n\geq 0, \sum_{n\geq 1} \gamma_n = \infty, \sum_{n\geq 1} \gamma_n^2 < \infty$ and $\sum_{n\geq 1} |\gamma_{n+1} - \gamma_{n}| < \infty$.
    \item[(B4)] $\sup_n ||\rvx_n||_2 < \infty$ almost surely.
    \item[(B5)] Function $h$ is continuous, differentiable in some neighborhood of $\rvx^* \in \gS$, and matrix $\nabla h(\rvx)|_{x=\vmu}$ (written as $\nabla h(\rvx^*)$ for brevity) has all its eigenvalues with negative real parts.
\end{enumerate}    
If (B1) - (B4) are satisfied, then almost surely,
\begin{equation}\label{eqn:almost sure convergence appendix}
    \limsup_n \liminf_{\rvx^* \in \gS} || \rvx_n - \rvx^*||_2 = 0.
\end{equation}
If additionally (B5) is satisfied, then condition on $\rvx_n \to \rvx^*$, where $\rvx^* \in \gS$, we have
\[
\sqrt{n} (\rvx_n - \rvx^*) \xrightarrow[dist.]{n\to\infty} N(\vzero, \mV),
\]
where
\begin{equation}\label{eqn:V_form_in_SA}
    \mV = \int_0^{\infty} e^{(\nabla h(\rvx^*) + \mI / 2)t} \mU e^{(\nabla h(\rvx^*) + \mI / 2)^Tt} dt,
\end{equation}
and 
\begin{equation}\label{eqn:U_form_in_SA}
    \mU = \lim_{t\to\infty}\frac{1}{t}\E\left[\left(\sum_{s=1}^t(H(\rvx^*, X_s) - h(\rvx^*))\right)\left(\sum_{s=1}^t(H(\rvx^*, X_s) - h(\rvx^*))\right)^T\right].
\end{equation} \qed
\end{theorem}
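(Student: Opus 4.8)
The plan is to establish both assertions by the ODE (mean-field) method for stochastic approximation, with the controlled Markovian noise handled through the Poisson equation guaranteed by (B2). The guiding idea is that the aggregated noise $\sum_k \gamma_{k+1}(H(\rvx_k, X_{k+1}) - h(\rvx_k))$ is asymptotically negligible, so that the recursion \eqref{eqn:SA_iteration} is an asymptotic pseudotrajectory of the mean field $\dot\rvx = h(\rvx)$; convergence then follows from the Lyapunov structure in (B1), and the Gaussian fluctuations from linearizing around $\rvx^*$ and applying a martingale central limit theorem with the frozen-chain covariance $\mU$.

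For the almost-sure convergence (under (B1)--(B4)), I would first decompose the noise using the Poisson solution $m_\rvx$. Evaluating \eqref{eqn:poisson_equation} at $X_{n+1}$ and then adding and subtracting $(\mP[\rvx_n]m_{\rvx_n})(X_n)$ splits $H(\rvx_n, X_{n+1}) - h(\rvx_n)$ into (i) a martingale difference $m_{\rvx_n}(X_{n+1}) - (\mP[\rvx_n]m_{\rvx_n})(X_n)$, which is mean-zero because $\E[m_{\rvx_n}(X_{n+1}) \mid \gF_n] = (\mP[\rvx_n]m_{\rvx_n})(X_n)$, and (ii) the remainder $(\mP[\rvx_n]m_{\rvx_n})(X_n) - (\mP[\rvx_n]m_{\rvx_n})(X_{n+1})$. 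The martingale part, weighted by $\gamma_{n+1}$, has summable predictable quadratic variation by the uniform bound \eqref{eqn:boundedness} and $\sum_n \gamma_n^2 < \infty$ from (B3), hence converges almost surely by the martingale convergence theorem. The remainder is treated by summation by parts: when summed against $\gamma_{n+1}$ it telescopes up to a change-of-parameter term $(\mP[\rvx_n]m_{\rvx_n})(X_{n+1}) - (\mP[\rvx_{n+1}]m_{\rvx_{n+1}})(X_{n+1})$, controlled by the Lipschitz estimate \eqref{eqn:l-lipschitz}, together with boundary terms controlled by $\sum_n |\gamma_{n+1}-\gamma_n| < \infty$ and \eqref{eqn:boundedness}. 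With the weighted noise sum thus convergent, the radially unbounded $V$ from (B1) with $\langle \nabla V, h\rangle \le 0$ drives a LaSalle-type argument forcing the limit set into $\gS$; the hypothesis that $V(\gS)$ has empty interior prevents the flow from drifting along a level set of $V$ and pins the limit to $\gS$, while (B4) supplies precompactness, yielding \eqref{eqn:almost sure convergence appendix}.

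For the CLT (adding (B5)), I would restrict to the event $\{\rvx_n \to \rvx^*\}$ and set $u_n \triangleq \rvx_n - \rvx^*$. Taylor-expanding the mean field gives $h(\rvx_n) = \nabla h(\rvx^*) u_n + o(\|u_n\|)$, so the recursion reads $u_{n+1} = u_n + \gamma_{n+1}[\nabla h(\rvx^*) u_n + \xi_{n+1}] + \text{(higher order)}$, where $\xi_{n+1}$ is the Poisson-decomposed noise. With $\gamma_n \sim 1/n$ the natural rescaling is $\sqrt{n}\,u_n$, for which the effective drift matrix is $\nabla h(\rvx^*) + \mI/2$; condition (B5), which makes $\nabla h(\rvx^*)$ stable, ensures $\nabla h(\rvx^*) + \mI/2$ is Hurwitz, so the rescaled linear recursion is contractive. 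A martingale CLT applied to the martingale-difference component, whose conditional covariances converge to the frozen-chain asymptotic covariance \eqref{eqn:U_form_in_SA}, gives asymptotic normality, with remainder and higher-order terms shown negligible via Part 1 and the bounds in (B2). Finally $\mV$ is identified as the unique solution of the continuous Lyapunov equation $(\nabla h(\rvx^*) + \mI/2)\mV + \mV(\nabla h(\rvx^*) + \mI/2)^T = -\mU$, whose closed form is exactly the integral \eqref{eqn:V_form_in_SA}, the integral converging precisely because of the Hurwitz property.

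The main obstacle is the controlled Markovian dependence: unlike i.i.d. or fixed-chain stochastic approximation, the kernel $\mP[\rvx_n]$ itself moves with the iterate, so the noise is neither a martingale difference nor stationary. The Poisson equation in (B2) is the device that resolves this, but its deployment is delicate --- one must verify that the change-of-parameter remainder $(\mP[\rvx_n]m_{\rvx_n})(\cdot) - (\mP[\rvx_{n-1}]m_{\rvx_{n-1}})(\cdot)$ decays fast enough, which is exactly why the uniform boundedness \eqref{eqn:boundedness}, the Lipschitz bound \eqref{eqn:l-lipschitz}, and the bounded-variation step sizes in (B3) must all be invoked simultaneously. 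A second subtlety in the CLT is confirming that the limiting noise covariance is the frozen-chain quantity $\mU$ rather than something corrupted by the slow drift of $\rvx_n$ toward $\rvx^*$; continuity of the Poisson solution in $\rvx$ near $\rvx^*$, implied by \eqref{eqn:l-lipschitz}, is what guarantees this.
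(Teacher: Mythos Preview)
The paper does not prove this theorem: it is stated as a quotation from \citet{delyon2000stochastic} and \citet{fort2015central}, closed with a \qed{} immediately after the statement, and then \emph{applied} (not proved) in Appendices~\ref{appendix e1}--\ref{appendix e3} by verifying conditions (B1)--(B5) for the specific HDT recursion. So there is no ``paper's own proof'' to compare your proposal against.

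That said, your sketch is the standard proof architecture for this class of SA results and is essentially what the cited references do: Poisson-equation decomposition of the Markovian noise into a martingale-difference part plus a telescoping remainder controlled by \eqref{eqn:l-lipschitz} and the step-size conditions in (B3); a LaSalle/Lyapunov argument using (B1) and (B4) for convergence to $\gS$; and linearization around $\rvx^*$ with a martingale CLT and the Lyapunov-equation identification of $\mV$ for the fluctuation result. The two subtleties you flag --- the change-of-parameter remainder and the frozen-chain identification of $\mU$ --- are indeed the right ones. One point to be careful about is that (B5) alone does not guarantee $\nabla h(\rvx^*)+\mI/2$ is Hurwitz; the CLT with $\gamma_n=1/n$ needs the real parts of the eigenvalues of $\nabla h(\rvx^*)$ to be strictly less than $-1/2$, not merely negative. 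In the paper's application this holds because $\nabla h(\vmu)=\alpha\vmu\vone^T-(\alpha+1)\mI$ has eigenvalues $\le -1$, but as stated in (B5) this is a gap in the theorem statement itself (inherited from the paper's paraphrase of the original sources), not in your outline.
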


We interpret the iteration of empirical measure $\rvx_n$ of Algorithm \ref{alg:L_SRRW} here as an instance of stochastic approximation.
\begin{equation}\label{eqn:appendix_empirical_measure}
\rvx_{n+1} = \rvx_n + \frac{1}{n+2}\underbrace{(\vdelta_{X_{n+1}} - \rvx_n)}_{\triangleq H(\rvx_n, X_{n+1})} = \rvx_{n} - \frac{1}{n+2}\underbrace{(\vpi[\rvx_n] - \rvx_n)}_{\triangleq h(\rvx_n)} + \frac{1}{n+2}(\vdelta_{X_{n+1} - \vpi[\rvx_n]}).
\end{equation}

\subsection{Ergodicity}\label{appendix e1}
We examine conditions (B1) - (B4) to obtain the almost sure convergence. (B1) stems from the stability of the related ODE, which has been proved in Appendix \ref{appendix: proof of stability ode} Part III. Specifically, the Lyapunov function $V$ is of the form in \eqref{eqn:Lyapunov function}, and $\gS = \gK = \{\vmu\}$ contains a singleton $\vmu$, which is the unique fixed point of the ODE, or equivalently, $h(\vmu) = \vpi[\vmu] - \vmu = 0$. (B3) is automatically satisfied since $\gamma_n = 1 / (n+1)$ as in \eqref{eqn:appendix_empirical_measure}. (B4) is guaranteed since $\rvx \in \Int(\Sigma)$ so that $\|\rvx_n\|_2 \leq 1$ almost surely for any $n\geq 0$. 

Now, we focus on (B2). In our Algorithm \ref{alg:L_SRRW}, replacing the target of the base MCMC sampler from $\vmu$ to $\vpi[\rvx]$ results in an ergodic transition matrix $\mP[\rvx]$ parameterized by $\rvx$, which is $\vpi[\rvx]$-invariant. For a function $m: \gX \to \sR^d$, we define the operator
\[
(\mP[\rvx] m)(i) \triangleq \sum_{j\in\gX} P_{ij}[\rvx] m(j).
\]
For the function $H (\rvx, \cdot) : \gX \to \R^{|\gX|}$, there always exists a corresponding function $m_{\rvx}(\cdot): \gX \to \R^{|\gX|}$ that satisfies the Poisson equation \eqref{eqn:poisson_equation}. According to \citet[Appendix C]{hu2024does}, the solution $m_{\rvx}(i)$ to the Poisson equation \eqref{eqn:poisson_equation} is of the form
\begin{equation}\label{eqn:solution of poisson equation}
m_{\rvx}(i) = \sum_{j\in \gX} \left(\mI - \mP[\rvx] + \vone\vpi[\rvx]^T\right)^{-1}_{ij} (H(\rvx, j)-h(\rvx)),
\end{equation}
where $(\mI - \mP[\rvx] + \vone\vpi[\rvx]^T)^{-1}$ is the fundamental matrix and it is always well-defined whenever $\mP[\rvx]$ is an ergodic Markov chain with stationary distribution $\vpi[\rvx]$ \citep[Chapter 6.3.1]{bremaud2013markov}.
Because $\rvx, \vpi[\rvx] \in \Int(\Sigma)$ the probability simplex and $\vdelta_{(\cdot)}$ is the canonical vector, we have $\|H(\rvx, i)\|_2 = \|\vdelta_i - \rvx\| \leq 2$, as well as $\|h(\rvx)\|_2 = \|\vpi[\rvx] - \rvx\| \leq 2$, we have $\|m_{\rvx}(i)\|_2 \leq 2 \sum_{j\in \gX} \left(\mI - \mP[\rvx] + \vone\vpi[\rvx]^T\right)^{-1}_{ij} < \infty$ for all $i \in \gX$. Therefore, \eqref{eqn:boundedness} is satisfied.

Next, note that the MCMC sampler for the target $\vmu$ is associated with a transition kernel $\mP$, exhibiting continuity in relation to $\vmu$. Since $\vpi[\rvx]$ in the form of \eqref{eqn:our proposed pi} is continuous in $\rvx$, we have $\mP[\rvx]$ continuous in its target $\vpi[\rvx]$, and in turn continuous in $\rvx \in \Int(\Sigma)$. As a consequence, \eqref{eqn:solution of poisson equation} is continuous in $\rvx$ such that for every compact subset $\gC$ of probability simplex $\Sigma$, $(\mP[\rvx]m_{\rvx})(i)$ is continuous in $\rvx \in \gC$ for any $i \in \gX$, and thus Lipschitz in $\gC$. Therefore, \eqref{eqn:l-lipschitz} holds. As a result, (B2) is satisfied, and \eqref{eqn:almost sure convergence appendix} shows $\rvx_n \to \vmu$ as $n\to\infty$ almost surely.

\subsection{CLT}\label{appendix e2}
To obtain the CLT result, we examine (B5). Note that $\vpi[\rvx]$ in the form of \eqref{eqn:our proposed pi} is continuous and differentiable in $\rvx \in \Int(\Sigma)$. In addition, we take the partial derivative of $h(\rvx)$ w.r.t $\rvx$ and obtain the following: After some algebraic computations, for $j \neq i$, 
\[
\frac{\partial}{\partial x_j} h_i(x) =  \frac{\partial}{\partial x_j} \left[\frac{
\mu_i (x_i / \mu_i)^{-\alpha}
}{\sum_{k\in\gX}\mu_k (x_k / \mu_k)^{-\alpha}} - x_i\right] = \frac{\alpha \pi_i[\rvx]\pi_j[\rvx]}{x_j}.
\]
Similarly, when $j = i$,
\[
\frac{\partial}{\partial x_i} h_i(x) = - \frac{\alpha \pi_i[\rvx](1-\pi_i[\rvx])}{x_i} - 1.
\]
Setting $\rvx = \vmu$ in above partial derivatives gives
\[
\frac{\partial}{\partial x_j} h_i(x) = \alpha \mu_i, \quad \frac{\partial}{\partial x_i} h_i(x) = \alpha\mu_i - (\alpha + 1),
\]
or equivalently, $\nabla h(\vmu) = \alpha \vmu\vone^T - (\alpha + 1)\mI$, which has all eigenvalues no larger than $-1$. Thus, (B5) is satisfied.

Now, we have the CLT result from Theorem \ref{theorem:SA_theory}. We translate the form of $\mV,\mU$ from \eqref{eqn:V_form_in_SA} and \eqref{eqn:U_form_in_SA} to our settings. \eqref{eqn:limiting covariance original definition} shows that
\[
    \mU = \lim_{t\to\infty}\frac{1}{t}\E\left[\left(\sum_{s=1}^t(\vdelta_{X_s} - \vmu)\right)\left(\sum_{s=1}^t(\vdelta_{X_s} - \vmu)\right)^T\right] = \mV^{\text{{\tiny base}}}.
\]
Before proceeding with the expression of matrix $\mV^{\text{{\tiny HDT}}}(\alpha)$, we first provide a result:  $\vone^T \mU = \vzero^T$ by considering the above expression of matrix $\mU$ and $\vone^T \vdelta_{X_s} - \vone^T \vmu = 0$. Then,
\begin{align*}
    \mV^{\text{{\tiny HDT}}}(\alpha) &= \int_0^{\infty} e^{(\nabla h(\vmu) + \mI / 2)t} \mU e^{(\nabla h(\vmu) + \mI / 2)^Tt} dt \\
    &= \int_0^{\infty} e^{(\alpha\vmu\vone^T - (\alpha+1/2)\mI / 2)t} \mU e^{(\alpha\vone\vmu^T - (\alpha+1/2)\mI / 2)^Tt} dt \\
    &= \int_0^{\infty} \!\left(\sum_{i=2}^{|\gX|} e^{-(\alpha+1/2)t}\vu_i\vv_i^T + e^{-t/2}\vmu\vone^T\right)\mU\left(\sum_{i=2}^{|\gX|} e^{-(\alpha+1/2)t}\vu_i\vv_i^T + e^{-t/2}\vmu\vone^T\right)^T \!\!\!dt \\
    &= \int_0^{\infty} \left(\sum_{i=2}^{|\gX|} e^{-(\alpha+1/2)t}\vu_i\vv_i^T\right)\mU\left(\sum_{i=2}^{|\gX|} e^{-(\alpha+1/2)t}\vu_i\vv_i^T\right)^T  dt \\
    &= \sum_{i=2}^{|\gX|}\sum_{j=2}^{|\gX|}\int_0^{\infty} \left(e^{-(2\alpha+1)t}dt\right)\left(\vu_i\vv_i^T\mU\vv_j\vu_j^T\right) \\
    &= \frac{1}{2\alpha+1} \left(\sum_{i=2}^{|\gX|}\vu_i\vv_i^T\right) \mU\left(\sum_{j=2}^{|\gX|}\vv_j\vu_j^T\right) \\
    &= \frac{1}{2\alpha+1}(\mI - \vmu\vone^T)\mU (\mI - \vone\vmu^T) \\
    &= \frac{1}{2\alpha+1}\mU \\
    &= \frac{1}{2\alpha+1}\mV^{\text{\tiny base}},
\end{align*}
where the third equality comes from the eigendecomposition of the matrix exponential form. The fourth and the second last equalities stem from the fact $\vone^T \mU = \vzero$. The third last equality comes from the fact that $\mI = \sum_{i=1}^{|\gX|}\vu_i\vv_i^T = \vmu\vone^T + \sum_{i=2}^{|\gX|}\vu_i\vv_i^T$ and $\mI^T = \vone\vmu^T + \sum_{i=2}^{|\gX|}\vv_i\vu_i^T = \mI$.

\subsection{Adaptation to Non-Reversible Markov Chains}\label{appendix e3}
For non-reversible Markov chains that directly modify the transition probabilities such as \citet{suwa2010markov, chen2013accelerating,bierkens2016non,thin2020nonreversible}, they typically add some target-dependent mass to the standard MH kernel $\mP$ with target $\vmu$, and let it satisfy the `skew detailed balance' condition, e.g., \citet[eq. (3)]{thin2020nonreversible}. For example, \citet{bierkens2016non} introduces a vorticity matrix $\boldsymbol{\Gamma_{\vmu}}$, which always exists, to break the reversibility while ensuring that the chain retains an arbitrary target distribution $\vmu \in \Int(\Sigma)$. When applied in our Algorithm \ref{alg:L_SRRW}, for each given empirical measure $\rvx$, there will be a specific $\boldsymbol{\Gamma_{\vpi[\rvx]}}$, still maintaining the target distribution as $\vpi[\rvx]$. This vorticity matrix $\boldsymbol{\Gamma_{\vpi[\rvx]}}$ always exists for a given $\rvx$ because $\vpi[\rvx]$ is simply a new target distribution. Therefore, in this case, for any given $\rvx$, the resulting transition kernel $\mP[\rvx]$ emerging from the non-reversible Markov chain is still ergodic and $\vpi[\rvx]$-invariant. We note that in the proof of ergodicity and CLT, \textit{we never require the time reversibility} of the underlying Markov chain $\mP[\rvx]$ for a given $\rvx \in \Int(\Sigma)$. For the proof, the sole requirement is the continuity of the kernel $\mP$, determined by its target $\vmu$, from the non-reversible MCMC algorithms. Consequently, this line of work can be directly covered by the theoretical analysis in Appendix \ref{appendix e1} and \ref{appendix e2}.

However, the above argument does not work for non-reversible Markov chains with augmented state spaces, like MHDA and 2-cycle Markov chain, which incorporate an additional variable to induce directional bias in transitions \citep{lee2012beyond, maire2014comparison, andrieu2021peskun}, rather than acting on the original state space $\gX$. Specifically, their trajectory is usually defined as $\{(X_n, Y_n)\}_{n\geq 0}$, where the sample $X_n \in \gX$ is defined in the original state space $\gX$, and $Y_n \in \gY$ is defined on some state space, such as $\gY = \{0, 1\}$ for a 2-cycle Markov chain such that $Y_n \in \{0, 1\}$ to indicate which Markov chain to use in each iteration, and $\gY = \gX$ for MHDA such that $Y_n = X_{n-1}$, which represents the most recent sample.

Recall the stochastic approximation \eqref{eqn:SA_iteration} requires $\{X_n\}$ to be controlled Markovian dynamics. That is, $X_{n+1}$ is drawn from a transition kernel $\mP_{(X_n,\cdot)}[\rvx_n]$  parameterized by empirical measure $\rvx_n$. If the chain instead operates on an augmented space $(\gX,\gY))$, then $\{X_n\}$ in isolation cannot be viewed as a controlled Markov chain, since $X_{n+1}$ depends not only on $X_n, \rvx_n$, but also on the auxiliary state $Y_n \in \gY$. More concretely, mixing these spaces (using a transition matrix on $(\gX,\gY))$ but a function $H(\rvx,\cdot)$ only defined on $\gX$) causes a dimensional mismatch. For example, in \eqref{eqn:solution of poisson equation}, the solution $m_{\rvx}(i)$ is not well defined because $\mP[\rvx]$ is specified on $(\gX,\gY))$, while $H(\rvx,\cdot)$ is only on $\gX$. On the other hand, a natural solution to the above mismatch is to record visit frequencies of the augmented state in the empirical measure update \eqref{eqn:appendix_empirical_measure}. This resolves a dimensional mismatch but inflates the dimension of $\rvx$. For example, this would increase the dimension of $\rvx$ from $|\gX|$ to $|\gX|^2$ for the MHDA and to $2|\gX|$ for the 2-cycle Markov chain. This is especially problematic for large state space $\gX$.

In the following, we provide a trick to accommodate the proofs in Appendix \ref{appendix e1} and \ref{appendix e2} to the non-reversible Markov chains on the augmented state space without modifying the empirical measure $\rvx$, i.e., the empirical measure is still defined on the original state space $\gX$.

Given a target $\vmu$, we denote by $\hat \vmu$ the stationary distribution defined on the augmented state space $\gX \times \gY$, and admits the marginal stationary distribution $\mu_i = \sum_{j \in \gY}\hat \mu_{(i, j)}$ because these non-reversible Markov chains are designed to achieve the same target distribution $\vmu$ on the original state space $\gX$, e.g., \citet[Theorem 6]{lee2012beyond}, \citet[Proposition 13]{maire2014comparison}. Thus, when adopting these algorithms under our HDT-MCMC framework, for a given $\rvx \in \Int(\Sigma)$, we alter their target distribution by $\vpi[\rvx]$, and let $\hat \vpi[\rvx]$ be the joint distribution of the augmented states with $\pi_i[\rvx] = \sum_{j\in\gY}\hat \pi_{(i, j)}[\rvx]$. Then, we define another function $\Phi: \R^{|\gX|} \times \gX \times \gY \to \R^{|\gX|}$ such that $\Phi(\rvx, i, j) = H(\rvx, i) = \vdelta_i - \rvx$. Then, the $\rvx_n$ update \eqref{eqn:appendix_empirical_measure} becomes 
\begin{equation}\label{eqn:transformed_empirical_measure_upate_2_cycle}
\rvx_{n+1} =\rvx_n + \frac{1}{n+2} \Phi\left(\rvx_n, X_{n+1}, Y_{n+1}\right).
\end{equation}
For any $\rvx \in \Int(\Sigma)$, we have
\begin{align*}
    \phi(\rvx) \triangleq \E_{(i,j) \sim \hat \vpi[\rvx]} \Phi(\rvx,i,j) = \sum_{i\in\gX,j\in\gY} H(\rvx,i) \hat\pi_{(i,j)}[\rvx] = \sum_{i\in\gX} H(\rvx, i) \sum_{j\in\gY}\hat \pi_{(i,j)}[\rvx] = \sum_{i \in \gX} H(\rvx, i) \pi_i[\rvx] = h(\rvx).
\end{align*}
Thus, the mean field $\phi(\cdot)$ is identical to $h(\cdot)$. Therefore, instead of analyzing \eqref{eqn:appendix_empirical_measure}, the analyses in Appendix \ref{appendix e1} and \ref{appendix e2} now directly carry over to the iteration \eqref{eqn:transformed_empirical_measure_upate_2_cycle}, where we replace the function $H$ by $\Phi$ and its mean field $h$ by $\phi$.

Moreover, the matrix $\mU$ is given by
\begin{equation*}
\begin{split}
        \mU \!&= \lim_{t \rightarrow \infty} \frac{1}{t}\E\left\{\left[\sum_{s=1}^t\Phi(\theta^*, Z_s)\right]\! \left[\sum_{s=1}^t\Phi(\theta^*, Z_s)\right]^T  \right\} \\
        &= \lim_{t \rightarrow \infty} \frac{1}{t}\E\left\{\left[\sum_{s=1}^tH(\vmu, X_s)\right]\! \left[\sum_{s=1}^tH(\vmu, X_s)\right]^T  \right\},
\end{split}
\end{equation*}
which is exactly the definition of the asymptotic covariance for non-reversible Markov chains, as studied in \citet{lee2012beyond,maire2014comparison,andrieu2021peskun}. Therefore, we maintain the same expression of $\mV$ from Appendix \ref{appendix e2} for non-reversible Markov chains on augmented state spaces. This completes the proof.

\begin{remark}
A caveat here is that we no longer have the alternative form for the matrix $\mV$ as in \eqref{eqn:base_covariance} because the transition kernel for the non-reversible Markov chain is defined on the augmented state space $\gX \times \gY$, while the asymptotic covariance here is only on the original state space $\gX$. This is the reason that in Theorem \ref{theorem:main_results}, we refer to the original definition of the asymptotic variance in \eqref{eqn:limiting covariance original definition} rather than \eqref{eqn:base_covariance}.
\end{remark}
\section{Proof of Theorem \ref{theorem: cost comparision CLT}}\label{appendix: cost based CLT}

Due to the fact that $\rvx_{n} - \vmu = \frac{1}{n}\sum_{s=1}^n (\vdelta_{X_s} - \vmu)$, we first rewrite
\begin{equation}\label{eqn:Appendix.f.1}
    \sqrt{B}(\rvx_{T^{\text{{{\tiny HDT}}}}(B)} - \vmu) = \sqrt{B} \cdot \frac{1}{T^{\text{{{\tiny HDT}}}}(B)}\sum_{s=1}^{T^{\text{{{\tiny HDT}}}}(B)}(\vdelta_{X_s} - \vmu) = \sqrt{\frac{B}{T^{\text{{{\tiny HDT}}}}(B)}} \cdot \underbrace{\frac{1}{\sqrt{T^{\text{{{\tiny HDT}}}}(B)}}\sum_{s=1}^{T^{\text{{{\tiny HDT}}}}\!(B)}(\vdelta_{X_s} - \vmu)}_{\rm{Term}~A}.
\end{equation}

To deal with Term A, we state the random-change-of-time theory as follows.
    \begin{theorem}[\citep{billingsley2013convergence} Theorem 14.4]\label{theorem:subsequence_CLT}
        Given a sequence of random variable $\theta_1, \theta_2, \cdots$ with partial sum $S_n \triangleq \sum_{s=1}^n \theta_s$ such that 
        \[
        \frac{1}{\sqrt{n}}S_n \xrightarrow[dist.]{n\to\infty} N(\vzero,\mV).
        \]
        Let $n_l$ be some positive random variable taking integer value such that $\theta_{n_l}$ is on the same space as $\theta_n$. In addition, for some deterministic sequence $\{c_l\}_{l\geq 0}$, which goes to infinity, $n_l/c_l\to c$ for a positive constant $c$. Then, 
        \[
        \frac{1}{\sqrt{n_l}}S_{n_l} \xrightarrow[dist.]{l\to\infty} N(\vzero,\mV). \hfill \qed
        \]
    \end{theorem}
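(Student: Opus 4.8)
The plan is to prove Theorem~\ref{theorem:subsequence_CLT} by an Anscombe-type argument that transfers the convergence from a \emph{deterministic} subsequence to the random indices $n_l$. First I would introduce the deterministic comparison sequence $m_l \triangleq \lfloor c\, c_l \rfloor$, which tends to infinity because $c_l \to \infty$ and $c>0$. Since $m_l$ is deterministic and $S_n/\sqrt{n} \xrightarrow[dist.]{n\to\infty} N(\vzero,\mV)$ by hypothesis, evaluating along the subsequence $\{m_l\}$ gives immediately $S_{m_l}/\sqrt{m_l} \xrightarrow[dist.]{l\to\infty} N(\vzero,\mV)$. Next, from $n_l/c_l \to c$ together with $m_l/c_l \to c$ I obtain $m_l/n_l = (m_l/c_l)(c_l/n_l) \to 1$ almost surely (hence in probability), so $\sqrt{m_l/n_l}\to 1$.

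I would then decompose
\[
\frac{S_{n_l}}{\sqrt{n_l}} = \sqrt{\frac{m_l}{n_l}}\;\frac{S_{m_l}}{\sqrt{m_l}} + \frac{S_{n_l}-S_{m_l}}{\sqrt{n_l}}.
\]
By Slutsky's theorem the first summand converges in distribution to $N(\vzero,\mV)$, being the product of the weakly convergent sequence $S_{m_l}/\sqrt{m_l}$ and the scalar sequence $\sqrt{m_l/n_l}$ tending in probability to the constant $1$. It therefore remains to show that the remainder $R_l \triangleq (S_{n_l}-S_{m_l})/\sqrt{n_l}$ vanishes in probability; one more application of Slutsky then yields the stated conclusion.

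The main obstacle is precisely the control of $R_l$, which is where the internal structure of the increments must be invoked. Fix $\epsilon,\delta>0$. On the event $\{|n_l - m_l|\le \delta\, m_l\}$, whose probability tends to $1$ for each fixed $\delta$ since $n_l/m_l\to 1$, I bound $\|S_{n_l}-S_{m_l}\|_2$ by the maximal fluctuation $\max_{|j-m_l|\le \delta m_l}\|S_j - S_{m_l}\|_2$. The decisive estimate is the Anscombe uniform-continuity condition
\[
\lim_{\delta\to 0}\;\limsup_{l\to\infty}\;\sP\!\left(\max_{|j-m_l|\le \delta m_l}\frac{1}{\sqrt{m_l}}\,\|S_j - S_{m_l}\|_2 > \epsilon\right) = 0,
\]
which, combined with $m_l/n_l\to 1$, forces $R_l \to \vzero$ in probability. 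Verifying this condition is the crux: in the present application the increments are $\theta_s = \vdelta_{X_s}-\vmu$ and are uniformly bounded (indeed $\|\vdelta_{X_s}-\vmu\|_2\le 2$, as already noted in the ergodicity argument), so via the Poisson-equation martingale approximation of \eqref{eqn:poisson_equation} one has $\E\|S_{m_l+k}-S_{m_l}\|_2^2 = O(k)$ uniformly in $m_l$. Doob's maximal inequality then gives $\max_{|j-m_l|\le \delta m_l}\|S_j - S_{m_l}\|_2 = O_{\sP}(\sqrt{\delta m_l})$, and dividing by $\sqrt{m_l}$ produces $O_{\sP}(\sqrt{\delta})$, which tends to $0$ as $\delta\to 0$. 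This establishes the Anscombe condition and completes the proof. (An equivalent route, closer to Billingsley's Section~14, is to lift $S_n/\sqrt{n}$ to the partial-sum process in $D[0,\infty)$, apply Donsker-type weak convergence and the random-time-change continuous-mapping theorem at the time $t=c$; I prefer the Anscombe decomposition here since it uses only the one-dimensional CLT stated in the hypothesis.)
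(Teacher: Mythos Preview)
The paper does not prove this theorem at all; it is quoted verbatim from \citet{billingsley2013convergence} (Theorem~14.4) and marked with a \qed{} immediately after the statement, then invoked as a black box in the proof of Theorem~\ref{theorem: cost comparision CLT}. So there is no ``paper's own proof'' to compare against.

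Your Anscombe-type argument is structurally sound, but there is a mismatch between what you actually establish and what the theorem asserts. The theorem is stated for a \emph{general} sequence $\theta_1,\theta_2,\ldots$ satisfying only the one-dimensional CLT hypothesis $S_n/\sqrt{n}\xrightarrow[dist.]{}N(\vzero,\mV)$. That hypothesis by itself does \emph{not} imply the Anscombe uniform-continuity condition you need; to verify it you reach outside the theorem statement and invoke the Poisson-equation/martingale structure \eqref{eqn:poisson_equation} specific to the paper's Markov-chain increments $\theta_s=\vdelta_{X_s}-\vmu$. What you have written is therefore a correct proof of the random-index CLT \emph{for the particular application in Appendix~\ref{appendix: cost based CLT}}, not of the general statement as written. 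That is entirely adequate for the paper's purposes---arguably more self-contained than a bare citation---but you should say explicitly that you are proving the result under the additional regularity available here rather than in the stated generality. Billingsley's own Theorem~14.4 takes the functional route you mention parenthetically: it assumes convergence of the full partial-sum process in $D[0,1]$ and obtains the fluctuation control via tightness and the continuous-mapping theorem, which is precisely how one avoids case-specific structure.
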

By Theorem \ref{theorem:main_results}(b), we have $\sqrt{n}(\rvx_n - \vmu) \xrightarrow[dist.]{n\to\infty} N(\vzero,\mV(\alpha))$, and $\sqrt{n}(\rvx_n - \vmu) = \frac{1}{\sqrt{n}}\sum_{s=1}^n (\vdelta_{X_s} - \vmu)$. We have a deterministic sequence $\{1,2,\cdots,\floor{B}\}$, where $\floor{\cdot}$ is the floor function, and the condition $B / T^{\text{{{\tiny HDT}}}}(B) \to C^{\text{{\tiny HDT}}}$ in Theorem \ref{theorem: cost comparision CLT}. Thus, in view of Theorem \ref{theorem:subsequence_CLT}, we have
\begin{equation}\label{eqn:Appendix.f.2}
    \frac{1}{\sqrt{T^{\text{{{\tiny HDT}}}}(B)}}\sum_{s=1}^{T^{\text{{{\tiny HDT}}}}\!(B)}(\vdelta_{X_s} - \vmu) \xrightarrow[dist.]{B\to\infty} N(\vzero,\mV).
\end{equation}

Now, we state the Slutsky's theorem as follows.
\begin{theorem}[Slutsky's theorem \citep{ash2000probability}]\label{theorem:Slutsky's theorem}
    Let $\{A_n\}$ and $\{B_n\}$ be the sequence of random variables. If $A_n \xrightarrow[dist.]{n\to\infty} A$ for some distribution $A$ and $B_n \to b$ almost surely as $n\to\infty$ for a non-zero constant $b$, then
    \[
    A_n / B_n \xrightarrow[dist.]{n\to \infty} A / b. \qed
    \]
\end{theorem}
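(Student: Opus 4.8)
The plan is to reduce Slutsky's theorem to the continuous mapping theorem by first upgrading the two separate marginal hypotheses into a single joint convergence of the pair $(A_n, B_n)$. Since almost-sure convergence implies convergence in probability, I may work under the weaker assumption that $B_n \to b$ in probability. The target map is $g(a, b) \triangleq a/b$, which is continuous on the open set $\{(a, b) : b \neq 0\}$. Once joint convergence $(A_n, B_n) \xrightarrow[dist.]{n\to\infty} (A, b)$ is established, applying $g$ yields $A_n / B_n \xrightarrow[dist.]{n\to\infty} A/b$, because the limit $(A, b)$ places no mass on the discontinuity set $\{b = 0\}$ (its second coordinate is the fixed nonzero constant $b$).

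The key step is the joint convergence lemma: if $A_n \xrightarrow[dist.]{n\to\infty} A$ and $B_n \to b$ in probability with $b$ a constant, then $(A_n, B_n) \xrightarrow[dist.]{n\to\infty} (A, b)$. I would prove this via a converging-together argument. First, appending a constant gives $(A_n, b) \xrightarrow[dist.]{n\to\infty} (A, b)$ trivially, since for any bounded continuous $f$ the map $a \mapsto f(a, b)$ is bounded and continuous, so $\E[f(A_n, b)] \to \E[f(A, b)]$ by the marginal convergence of $A_n$. Second, the Euclidean distance between the perturbed and unperturbed pairs satisfies $\|(A_n, B_n) - (A_n, b)\| = |B_n - b| \to 0$ in probability. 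Invoking the standard corollary of the portmanteau theorem (e.g. \citet{billingsley2013convergence}) that $X_n \xrightarrow[dist.]{n\to\infty} X$ together with the distance between $X_n$ and $Y_n$ tending to $0$ in probability implies $Y_n \xrightarrow[dist.]{n\to\infty} X$, applied with $X_n = (A_n, b)$ and $Y_n = (A_n, B_n)$, gives the desired $(A_n, B_n) \xrightarrow[dist.]{n\to\infty} (A, b)$.

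With joint convergence in hand, I would finish by the continuous mapping theorem. The discontinuity set of $g(a, b) = a/b$ is $D_g = \{(a, b) : b = 0\}$, and since the limiting law of $(A_n, B_n)$ is the law of $(A, b)$ with $b$ a nonzero constant, we have $\mathbb{P}((A, b) \in D_g) = 0$. Hence $g(A_n, B_n) = A_n / B_n \xrightarrow[dist.]{n\to\infty} g(A, b) = A/b$, which completes the proof.

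The main obstacle is the joint convergence step: marginal convergence in distribution does not in general imply joint convergence of the pair, so it is essential to exploit that $B_n$ converges to a \emph{constant} rather than a random limit. This degeneracy of the limit is exactly what makes the asymptotic-equivalence argument valid, and care must be taken to verify the hypotheses of both the portmanteau corollary and the continuous mapping theorem, in particular that the limiting measure charges no mass on the discontinuity set of $g$. The behavior of the division map near $b = 0$ is benign here only because the limit is bounded away from that set.
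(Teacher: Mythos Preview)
The paper does not actually prove this statement: it is simply quoted from \citet{ash2000probability} as a known tool, with the \verb|\qed| placed directly after the display to signal that no proof is supplied. Your proposal is a correct and standard proof of Slutsky's theorem via the joint-convergence lemma (exploiting that the second coordinate converges to a constant) followed by the continuous mapping theorem applied to $g(a,b)=a/b$; this is precisely the route taken in standard references, so there is nothing substantive to compare.
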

Following \eqref{eqn:Appendix.f.2}, $B / T^{\text{{{\tiny HDT}}}}(B) \to C^{\text{{\tiny HDT}}}$ almost surely, and Theorem \ref{theorem:Slutsky's theorem}, as $B \to \infty$, \eqref{eqn:Appendix.f.1} converges weakly to $N(\vzero,C^{\text{{\tiny HDT}}}\mV(\alpha))$. We can follow the same procedures for SRRW, which completes the proof.

\section{Proof of Lemma \ref{lemma: average cost per sample}}\label{appendix: average cost per sample}
We first briefly explain the cost of our HDT-MCMC and SRRW per sample. At each step, our HDT-MCMC computes the proposal distribution $Q_{ij}$ and $Q_{ji}$ with $2$ costs. Thus, we have $a_i = 2c$ and $C^{\text{\tiny HDT}} = 2c$.

On the other hand, in SRRW, normalizing $K_{ij}[\rvx]$ over all $j\in \overline{\gN}(i)$ costs $2c |\overline{\gN}(i)|$, since each neighbor's acceptance ratio must be calculated. Based on the definition in \eqref{eqn:definition T2B}, we derive the inequality for $B / T^{\text{\tiny SRRW}}(B)$:
\begin{equation}\label{eqn: average cost inequality}
    \frac{\sum_{n=1}^{k+1} 2c|\overline{\gN(X_n)}|}{k} \!\geq\! \frac{B}{T^{\text{\tiny SRRW}}(B)} \!\geq\! \frac{\sum_{n=1}^k 2c|\overline{\gN(X_n)}|}{k}.
\end{equation}
Taking $B \to \infty$ is equivalent to taking $k \to \infty$, so the RHS of \eqref{eqn: average cost inequality} becomes
\[
\lim_{k\to\infty} \frac{2c}{k}\sum_{n=1}^k |\overline{\gN(X_n)}| = 2c\E_{i \sim \vmu}[|\overline{\gN(i)}|],
\]
where the equality comes from the ergodicity in Theorem \ref{theorem:main_results}(a). Similarly, we can rewrite the LHS of \eqref{eqn: average cost inequality} as 
\[
\frac{\sum_{n=1}^k 2c |\overline{\gN(X_n)}|}{k} + \frac{2c|\overline{\gN(X_{n+1})}|}{k}.
\]
Since $c|\overline{\gN(X_{n+1})}|< \infty$, the LHS of \eqref{eqn: average cost inequality} still converges to $2c \E_{i \sim \vmu}[|\overline{\gN(i)}|]$ as $k \to \infty$. Hence, $C^{\text{\tiny SRRW}} = 2c \E_{i\sim\vmu}[|\overline{\gN}(i)|]$, and
\begin{equation}\label{eqn:appendix_g2}
C^{\text{\tiny SRRW}} / C^{\text{\tiny HDT}} = \E_{i\sim\vmu}[|\overline{\gN}(i)|].
\end{equation}
Note that the `expanded' neighborhood size $|\gN(i)|\geq 2$ because the graph is connected so that each state has at least one neighbor plus itself, thus $\E_{i\sim\vmu}[|\overline{\gN}(i)|] \geq 2$.

From \eqref{eqn:SRRW_covariance}, \eqref{eqn:definition of V} and by noting the eigenvalue $\lambda_i \!\leq\! 1$, we show that
\begin{equation}\label{eqn:appendix_g1}
\begin{split}
\mV^{\text{{\tiny SRRW}}}(\alpha) &= \sum_{i=2}^{|\gX|}\frac{1}{2\alpha(\lambda_i + 1) + 1}\cdot \frac{1+\lambda_i}{1-\lambda_i} \vu_i\vu_i^T \\
&\succeq \frac{1}{4\alpha + 1}\sum_{i=2}^{|\gX|}\frac{1+\lambda_i}{1-\lambda_i} \vu_i\vu_i^T \\
&= \frac{2\alpha+1}{4\alpha+1} \mV^{\text{{\tiny HDT}}}(\alpha) \\
&\succeq \frac{1}{2}\mV^{\text{{\tiny HDT}}}(\alpha).
\end{split}
\end{equation}
where the second inequality (by Loewner ordering) is because $\lambda_i \leq 1$ such that $\frac{1}{2\alpha(\lambda_i+1)+1} - \frac{1}{4\alpha+1} \geq 0$ and $\sum_{i=2}^{|\gX|} (\frac{1}{2\alpha(\lambda_i+1)+1} - \frac{1}{4\alpha+1}) \frac{1+\lambda_i}{1-\lambda_i}\vu_i\vu_i^T$ is positive semi-definite using the definition of Loewner ordering, i.e., for any non-zero vector $\vz$, 
\[
\vz^T\left(\sum_{i=2}^{|\gX|} \left(\frac{1}{2\alpha(\lambda_i+1)+1} - \frac{1}{4\alpha+1}\right) \frac{1+\lambda_i}{1-\lambda_i}\vu_i\vu_i^T\right)\vz =  \sum_{i=2}^{|\gX|} \left(\frac{1}{2\alpha(\lambda_i+1)+1} - \frac{1}{4\alpha+1}\right) \frac{1+\lambda_i}{1-\lambda_i}(\vz^T\vu_i)^2\geq 0.
\]
The last inequality comes from $\frac{2\alpha+1}{4\alpha+1} \geq \frac{1}{2}$ such that $\frac{2\alpha+1}{4\alpha+1} \mV^{\text{{\tiny HDT}}}(\alpha) - \frac{1}{2}\mV^{\text{{\tiny HDT}}}(\alpha)$ is positive semi-definite (note that $\mV^{\text{{\tiny HDT}}}(\alpha)$ is positive semi-definite by its definition in \eqref{eqn:definition of V}).

Then, by Theorem \ref{theorem: cost comparision CLT}, we have
\begin{align*}
C^{\text{{{\tiny SRRW}}}} \mV^{\text{{\tiny SRRW}}}(\alpha) &= C^{\text{{{\tiny HDT}}}} \E[|\overline\gN(i)|] \mV^{\text{{\tiny SRRW}}}(\alpha) \\
&\succeq C^{\text{{{\tiny HDT}}}} \mV^{\text{{\tiny HDT}}}(\alpha)\cdot \frac{\E[|\overline\gN(i)|]}{2},
\end{align*}
where the first equality comes from \eqref{eqn:appendix_g2}, and the second inequality is from \eqref{eqn:appendix_g1}. This completes the proof.

\section{Additional Simulation}\label{appendix: additional simulation}

\subsection{Simulation setting for graph sampling}
\begin{table}[H]
    \caption{Summary of graph datasets.}
    \label{tab:graph_summary}
    \begin{center}
    \footnotesize
    \begin{tabular}{|c||c|c|c|c|c|}
    \hline
    Name  &  \# of nodes. & \# of edges &Average degree\\
    \hline
    % \hhline{|=||=|=|=|=|}
    WikiVote &  889&  2,914 &6.55\\
    \hline
    Facebook &  4039&  88,234&43.69\\
    \hline
    p2p-Gnutella08  & 6301& 20,777 &6.59\\
    \hline
    p2p-Gnutella04 & 10876&  39,994 &7.35\\
    \hline
    \end{tabular}        
    \end{center}
\end{table}
Table \ref{tab:graph_summary} shows the detailed summary of the graph used in our experiments. We used different numbers of iterations for each graph $T=\{3000, 15000, 15000, 3000\}$, respectively. In addition to TVD, we also simulate normalized root mean square error (NRMSE) under HDT-MCMC framework in graph sampling, where the NRMSE of the MCMC estimator $\psi_n(f) \triangleq \frac{1}{n}\sum_{s=1}^n f(s)$ for some test function $f: \gX \to \sR$ with the ground truth $\bar \psi \triangleq \E_{\vmu}[f]$ is defined as 
\[\rm{NRMSE}(\psi_n,\bar \psi)=\sqrt{\E[(\psi_n(f)-\bar \psi)^2]} / \bar \psi\].

\begin{figure}[H]
\centering
\begin{subfigure}[t]{0.24\columnwidth}
\includegraphics[width=\linewidth]{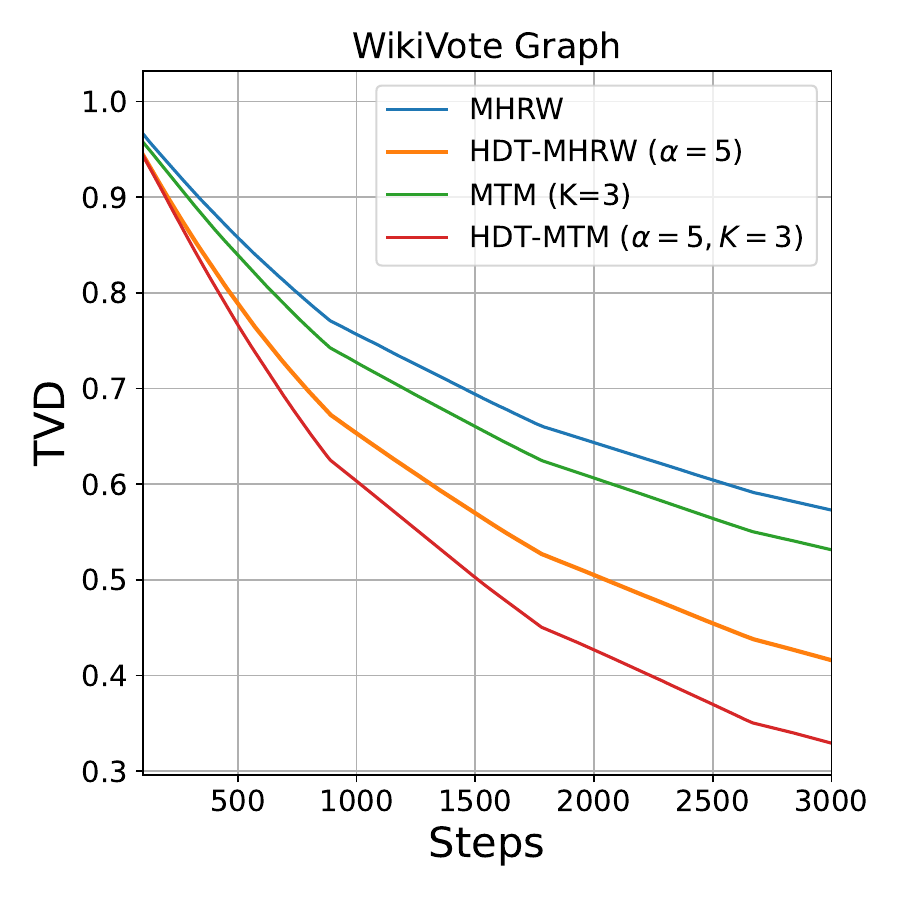}
     \end{subfigure}
     \begin{subfigure}[t]{0.24\columnwidth}
     \includegraphics[width=\linewidth]{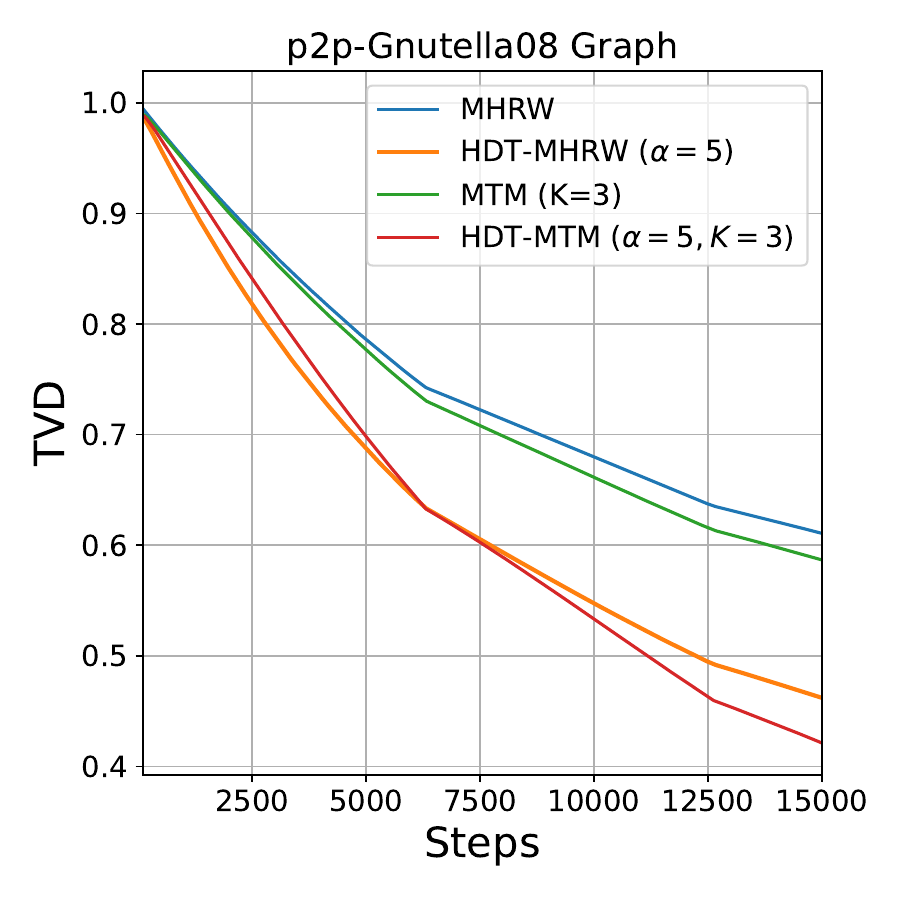}
     \end{subfigure}
\begin{subfigure}[t]{0.24\columnwidth}
\includegraphics[width=\linewidth]{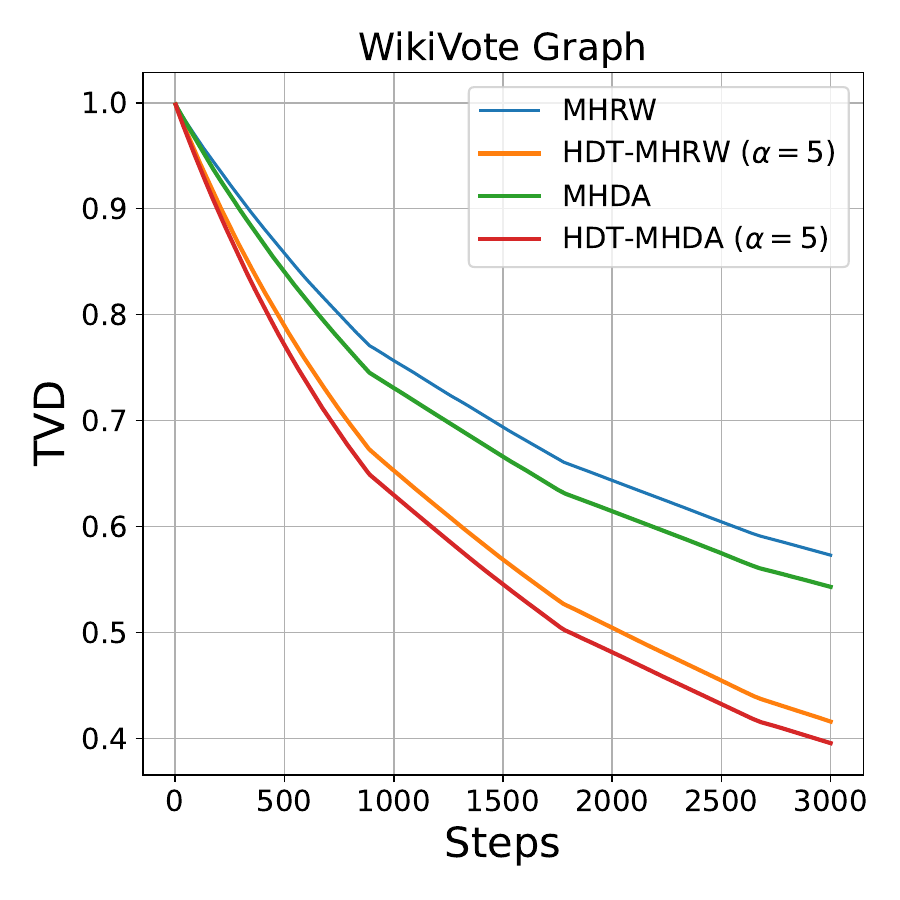}
     \end{subfigure}
     \begin{subfigure}[t]{0.24\columnwidth}
     \includegraphics[width=\linewidth]{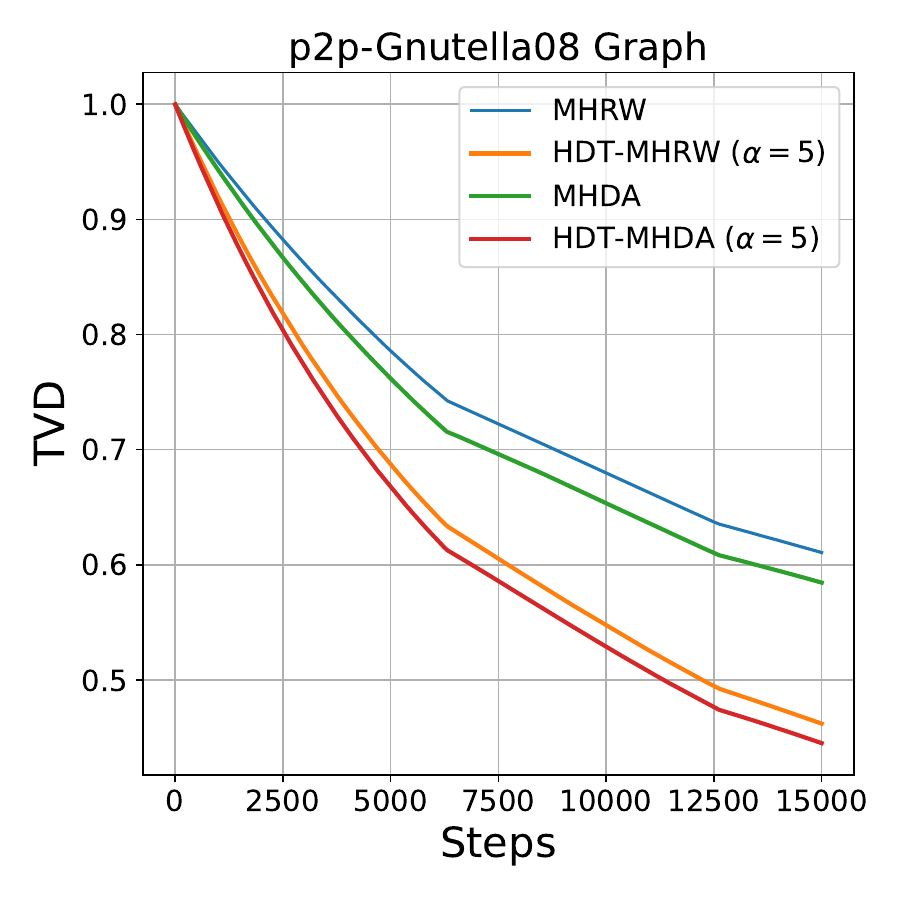}
     \end{subfigure}
    \caption{TVD improvement over HDT framework. The upper row  for nonreversible chain; TVD improvement over HDT-MTM with locally balanced weights function where MHRW is a special case when $K=1$.}
    \label{fig: TVD MHDA}
\end{figure}

\subsection{More results on HDT-MCMC in graph sampling}
\label{appendix: graph simulation}
In this section, we present more simulation results on the algorithm that utilizes HDT-MCMC framework for graph sampling in other graph. To demonstrate the impact of HDT-MCMC, we implement HDT-version of both reversible and nonreversible chain. For instance, we implement HDT-MTM as a reversible chain, and HDT-MHDA, HDT-2-cycle as non-reversible chains. A detailed implementation can be found in Appendix \ref{appendix:implementation of algorithm 1}. We use TVD and NRMSE as the metrics, and set our test function as $f:\gX\to \{0,1\}$. This form of test function can be interpreted as the label of each node. Estimating the density of the node label is crucial, especially in online social networks. 

For each graph, we first randomly assign nodes with label 1 with some probability $p$ and 0 otherwise. Namely, Let $\mathcal{S}_1$ be the set of the node that is assigned to label 1, i.e. $\mathcal{S}_1=\{i\in\gX |f(i)=1\}$ We aim to estimate the true proportion of label 1 using graph sampling. The assigned probability $p$ is set as 0.3. Figure \ref{fig: MSE} include the result of MHRW, MTM, MHDA and their corresponding HDT-version. We observe that HDT version algorithms consistently outperform the base version across different graphs. We also notice that HDT version algorithms have better performance than SRRW, as shown in Facebook graph and p2p-Gnutella08 graph. 

\begin{remark}\label{remark:2}
We conjecture that TVD metric reflects the exploration speed of the random walker, as it equally weights contributions from visited and unvisited states. SRRW achieves superior performance by adjusting its transition kernel to reduce self-transition probabilities. In contrast, HDT-MCMC’s use of the Metropolis-Hastings (MH) algorithm may increase self-transition probabilities compared to SRRW. However, for graph sampling, even partial graph exploration with an unbiased MCMC estimator can accurately approximate the true value effectively, explaining its improved NRMSE performance.    
\end{remark}

Figure \ref{fig: TVD MHDA} shows the TVD result of HDT-MCMC framework over the graph dataset. The improvement in TVD is also consistent among all the graph, showing the benefits of utilizing HDT-MCMC framework can convergence faster than the base chain.

We also implemented HDT version of 2-cycle MCMC as an another nonreversible chain that be benefits from this works. In particular, we used MTM and MH as two reversible chains in 2-cycle MCMC while its HDT version is constructed by replacing MTM and MH with their HDT versions. Figure \ref{appendix:2mc} shows the result where HDT version 2-cycle MCMC outperforms the base 2-cycle chain among all the graph dataset. 
\begin{figure}[H]
\centering
\begin{subfigure}[t]{0.24\columnwidth}
\includegraphics[width=\linewidth]{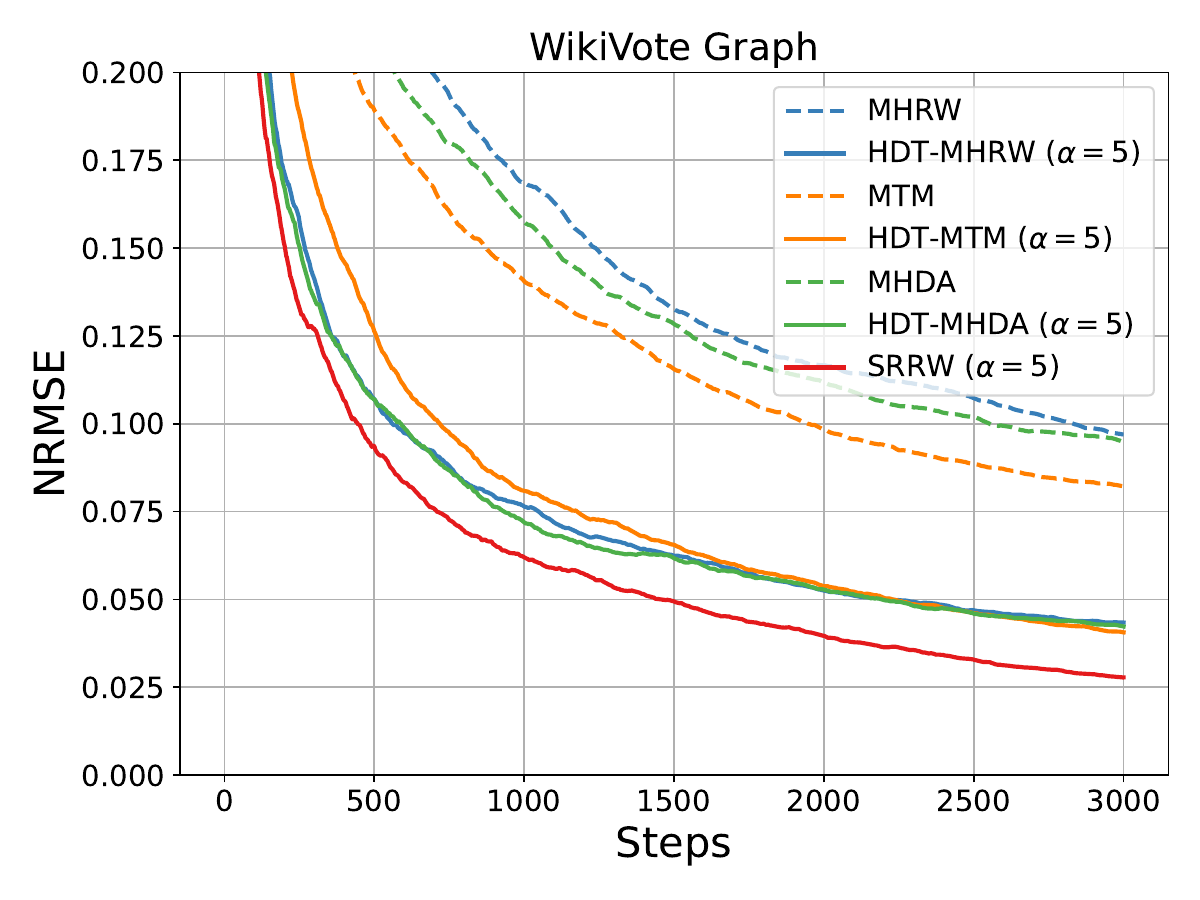}
     \end{subfigure}
     \begin{subfigure}[t]{0.24\columnwidth}
     \includegraphics[width=\linewidth]{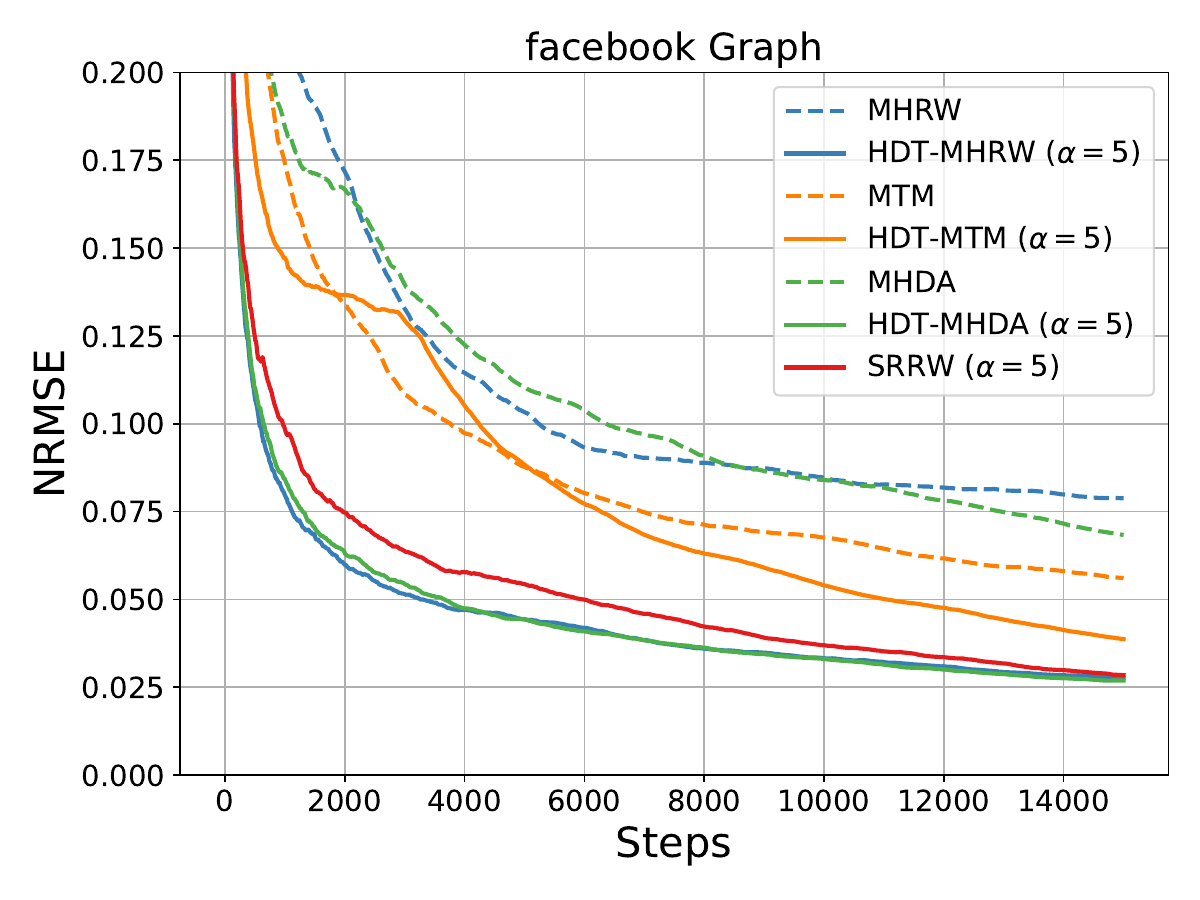}
     \end{subfigure}
     \begin{subfigure}[t]{0.24\columnwidth}
     \includegraphics[width=\linewidth]{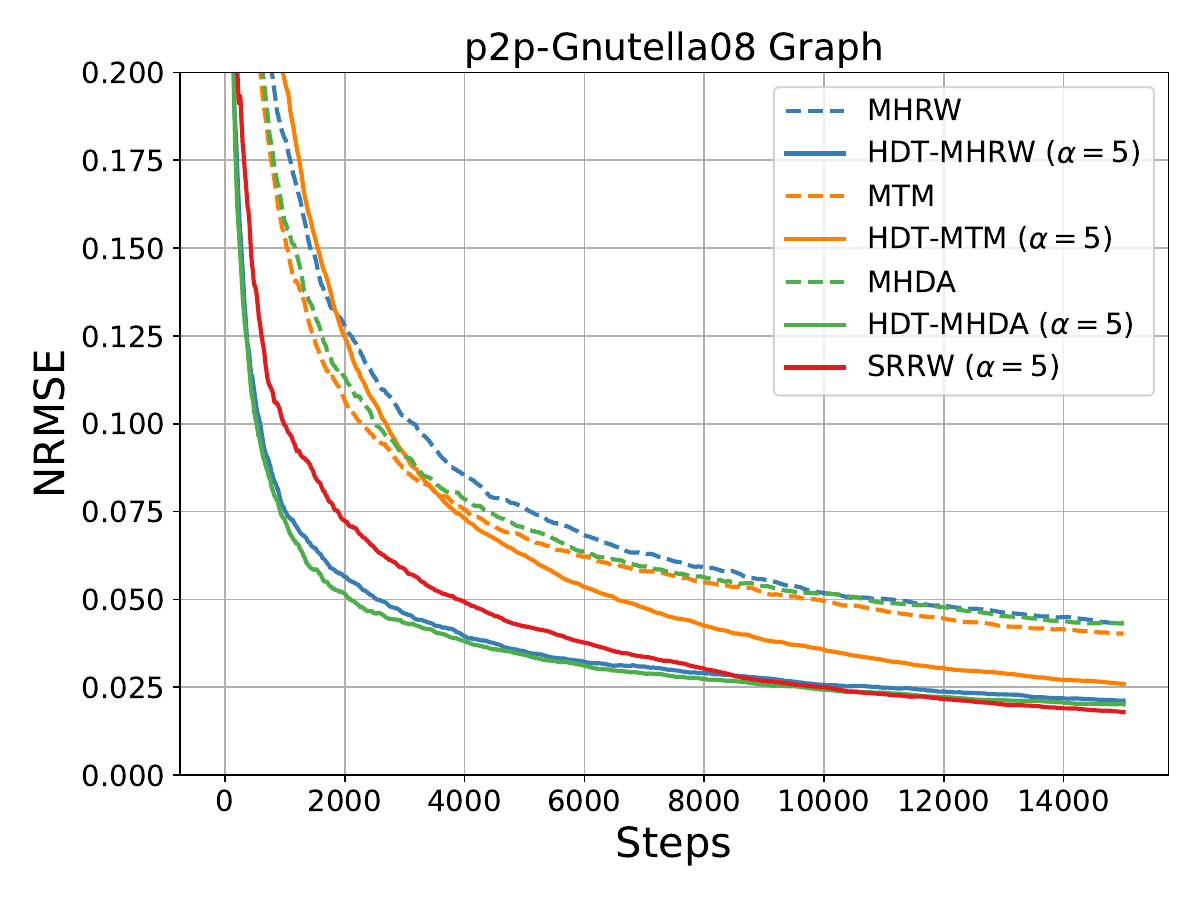}
     \end{subfigure}
     \begin{subfigure}[t]{0.24\columnwidth}
     \includegraphics[width=\linewidth]{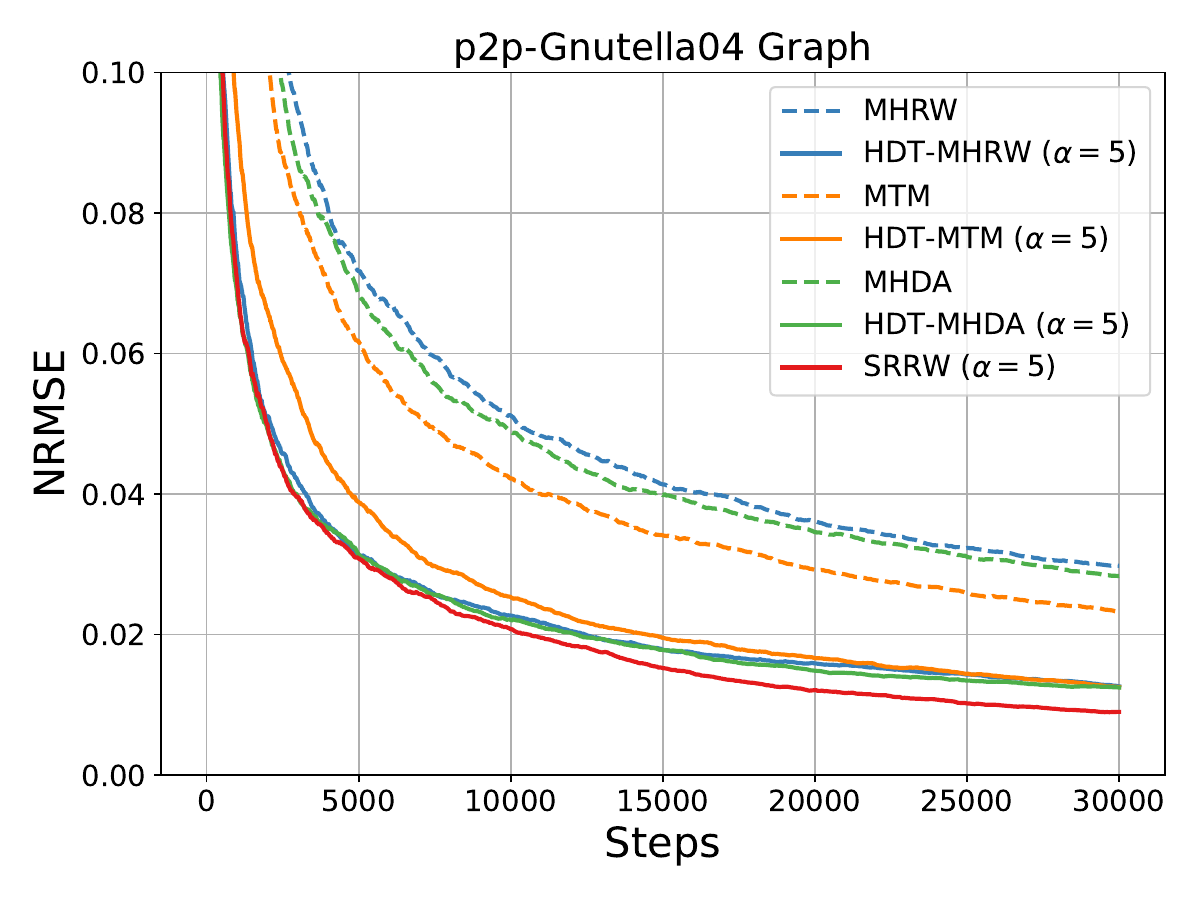}
     \end{subfigure}

    \caption{NRMSE Comparison for HDT framework.}
    \label{fig: MSE}
\end{figure}

\subsection{Different Initializations on State $X_0$ and Fake Visit Counts $\rvx_0$}\label{appendix:initialization}
Here, we conduct experiments under different setups to demonstrate the robustness of HDT-MCMC across several aspects. In addition to Facebook graph presented in the main body, we further evaluate the robustness of HDT-MCMC to the choice of initial state $X_0$ using p2p-Gnutella04 graph. The results in Table \ref{tab:3} and Table \ref{tab:6} show both TVD and NRMSE maintain consistent performance across different initializations.

\begin{table}[h!]
    \centering
    \caption{Mean (std error) TVD at $n=30,000$ step with different $X_0$
 initializations on p2p-Gnutella04. All std error values are in units of $10^{-4}$.}
    \label{tab:3}
    \begin{tabular}{lll}
\toprule
& Low-Deg Group & High-Deg Group \\
\midrule
MHRW & 0.545 (1.738) & 0.545 (1.704) \\
HDT-MHRW & 0.403 (0.748) & 0.403 (0.765) \\
MTM & 0.514 (1.546) & 0.514 (1.556) \\
HDT-MTM & 0.328 (0.999) & 0.328 (1.035) \\
MHDA & 0.522 (1.616) & 0.521 (1.549) \\
HDT-MHDA & 0.388 (0.708) & 0.388 (0.747) \\
\bottomrule
\end{tabular}
\end{table}

\begin{table}[h!]
    \centering
    \caption{Mean NRMSE at $n=30,000$ step with different $X_0$ initializations on p2p-Gnutella04.}
    \label{tab:6}
\begin{tabular}{lll}
\toprule
 & Low-Deg Group & High-Deg Group \\
\midrule
MHRW & 0.030 & 0.031 \\
HDT-MHRW & 0.013 & 0.013 \\
MTM & 0.025 & 0.025 \\
HDT-MTM & 0.012 & 0.013 \\
MHDA & 0.028 & 0.029 \\
HDT-MHDA & 0.013 & 0.012 \\
\bottomrule
\end{tabular}
\end{table}

Next, we consider experiments in which the fake visit count $\rvx_0$ is initialized from 
\begin{itemize}
    \item 'Deg' scenario: Fake visit counts set by node degrees, heavily favoring high-degree nodes.
    \item 'Non-unif' scenario: Fake visit counts randomly drawn from a Dirichlet distribution with default concentration parameter of $0.5$.
    \item 'Unif' scenario: Same initial count for all nodes.
\end{itemize}
Here, we use Facebook graph and p2p-Gnutella04 graph in our simulation and show TVD in Tables \ref{tab:7} - \ref{tab:9}, and NRMSE in Tables \ref{tab:10} - \ref{tab:12}, both with confidence intervals $95\%$. Similar to the conclusions from the results on different initialization on state $X_0$, our HDT framework is robust to different fake visit counts, and our HDT version still showcases improvements over their baseline counterparts in both TVD and NRMSE metrics.

\begin{table}[h!]
    \centering
    \caption{Mean (std error) TVD at $n=15,000$ with different fake visit counts $\rvx_0$ on Facebook. All std error values are in units of $10^{-3}$}
    \label{tab:7}
    \begin{tabular}{llll}
\toprule
 & Deg & Non-unif & Unif \\
\midrule
MHRW & 0.520 (2.256) & 0.520 (2.256) & 0.520 (2.256) \\
HDT-MHRW & 0.371 (1.246) & 0.371 (1.281) & 0.371 (1.246) \\
MTM & 0.487 (2.128) & 0.487 (2.128) & 0.487 (2.128) \\
HDT-MTM & 0.288 (1.751) & 0.285 (1.456) & 0.285 (1.496) \\
MHDA & 0.513 (2.175) & 0.513 (2.1.75) & 0.513 (2.175) \\
HDT-MHDA & 0.365 (1.281) & 0.365 (1.246) & 0.366 (1.258) \\
\bottomrule
\end{tabular}
\end{table}

\begin{table}[h!]
    \centering
    \caption{Mean (std error) TVD at $n=30,000$ with different fake visit counts $\rvx_0$ on p2p-Gnutella04. All std error values are in units of $10^{-4}$.}
    \label{tab:9}
    \begin{tabular}{llll}
\toprule
 & Deg & Non-unif & Unif \\
\midrule
MHRW & 0.545 (1.621) & 0.545 (1.621) & 0.545 (1.621) \\
HDT-MHRW & 0.403 (0.744) & 0.403 (0.742) & 0.403 (0.745) \\
MTM & 0.514 (1.587) & 0.514 (1.587) & 0.514 (1.587) \\
HDT-MTM & 0.325 (1.031) & 0.330 (0.987) & 0.328 (1.069) \\
MHDA & 0.522 (1.573) & 0.522 (1.573) & 0.522 (1.573) \\
HDT-MHDA & 0.388 (0.757) & 0.388 (0.702) & 0.388 (0.733) \\
\bottomrule
\end{tabular}
\end{table}

\begin{table}[h!]
    \centering
    \caption{Mean NRMSE at $n=15,000$ step with different fake visit counts $\rvx_0$ on Facebook.}
    \label{tab:10}
    \begin{tabular}{llll}
\toprule
 & Deg & Non-unif & Unif \\
\midrule
MHRW & 0.079 & 0.079 & 0.079 \\
HDT-MHRW & 0.028 & 0.028 & 0.028 \\
MTM & 0.056 & 0.056 & 0.056 \\
HDT-MTM & 0.098 & 0.047 & 0.062 \\
MHDA & 0.068 & 0.068 & 0.068 \\
HDT-MHDA & 0.027 & 0.028 & 0.027 \\
\bottomrule
\end{tabular}
\end{table}

\begin{table}[h!]
    \centering
    \caption{Mean NRMSE at $n=15,000$ step with different fake visit counts $\rvx_0$ on p2p-Gnutella04.}
    \label{tab:12}
    \begin{tabular}{llll}
\toprule
 & Deg & Non-unif & Unif \\
\midrule
MHRW & 0.030 & 0.030 & 0.030\\
HDT-MHRW & 0.013 & 0.013 & 0.013 \\
MTM & 0.023 & 0.023& 0.023 \\
HDT-MTM & 0.013 & 0.012 & 0.012 \\
MHDA & 0.028 & 0.028 & 0.028 \\
HDT-MHDA & 0.013 & 0.012 & 0.013 \\
\bottomrule
\end{tabular}
\end{table}

\subsection{Effect of $\alpha$ on Convergence}
\label{appendix: alpha effect}
To illustrate the impact of $\alpha$, we run HDT-MHRW under $\alpha \in \{0,1,2,5,10\}$, with $\alpha = 0$ being the MHRW. We observed in Figure \ref{fig:multialpha} that the performance of average TVD is asymptotically improved with larger $\alpha$, supporting the theoretical result in Theorem \ref{theorem:main_results} where larger $\alpha$ leads to smaller covariance.

\begin{figure}[!ht]
\centering
\begin{subfigure}[t]{0.24\columnwidth}
\includegraphics[width=\linewidth]{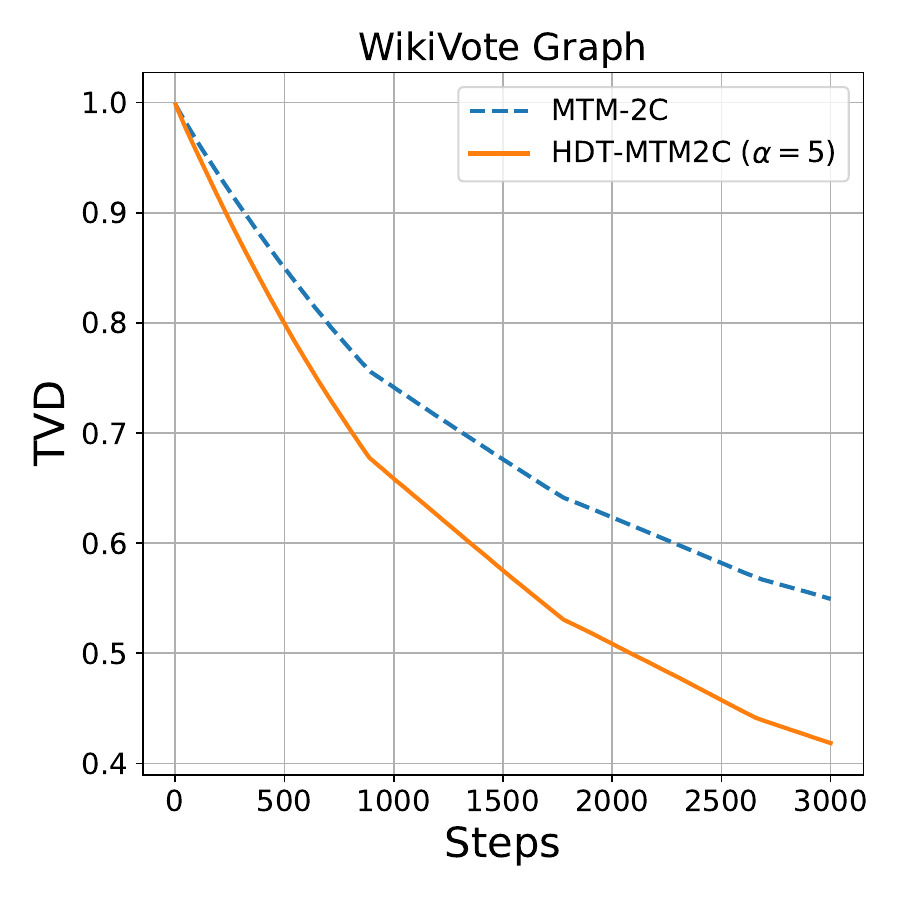}
     \end{subfigure}
     \begin{subfigure}[t]{0.24\columnwidth}
     \includegraphics[width=\linewidth]{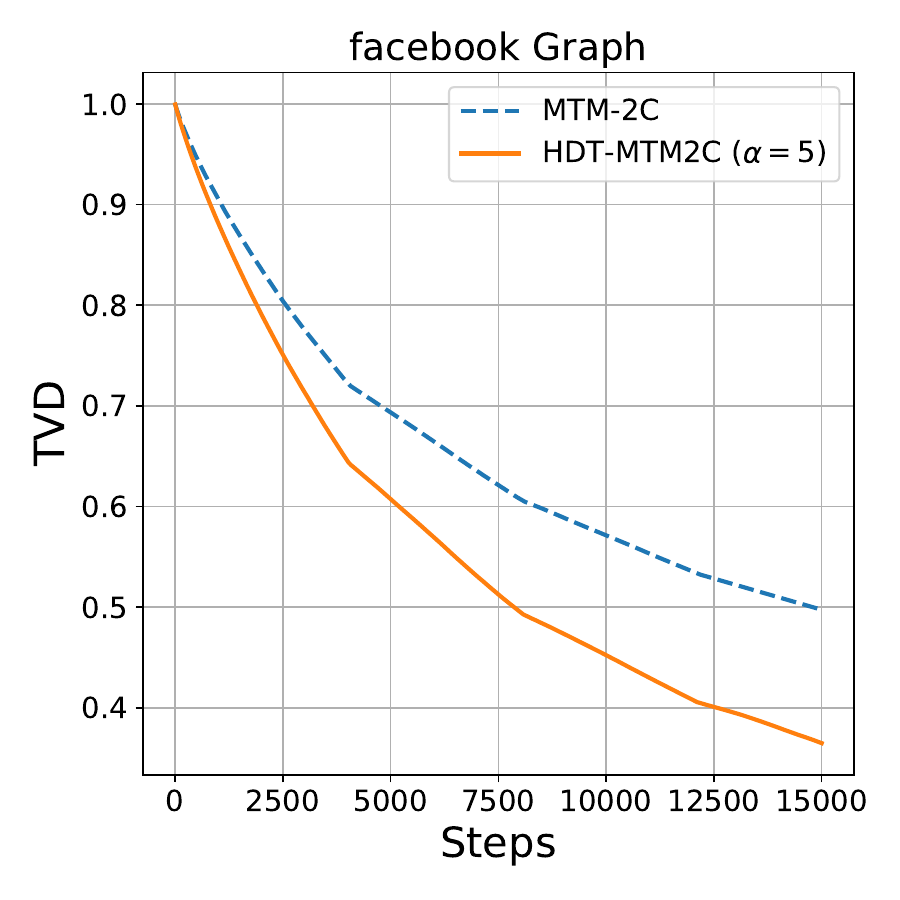}
     \end{subfigure}
     \begin{subfigure}[t]{0.24\columnwidth}
     \includegraphics[width=\linewidth]{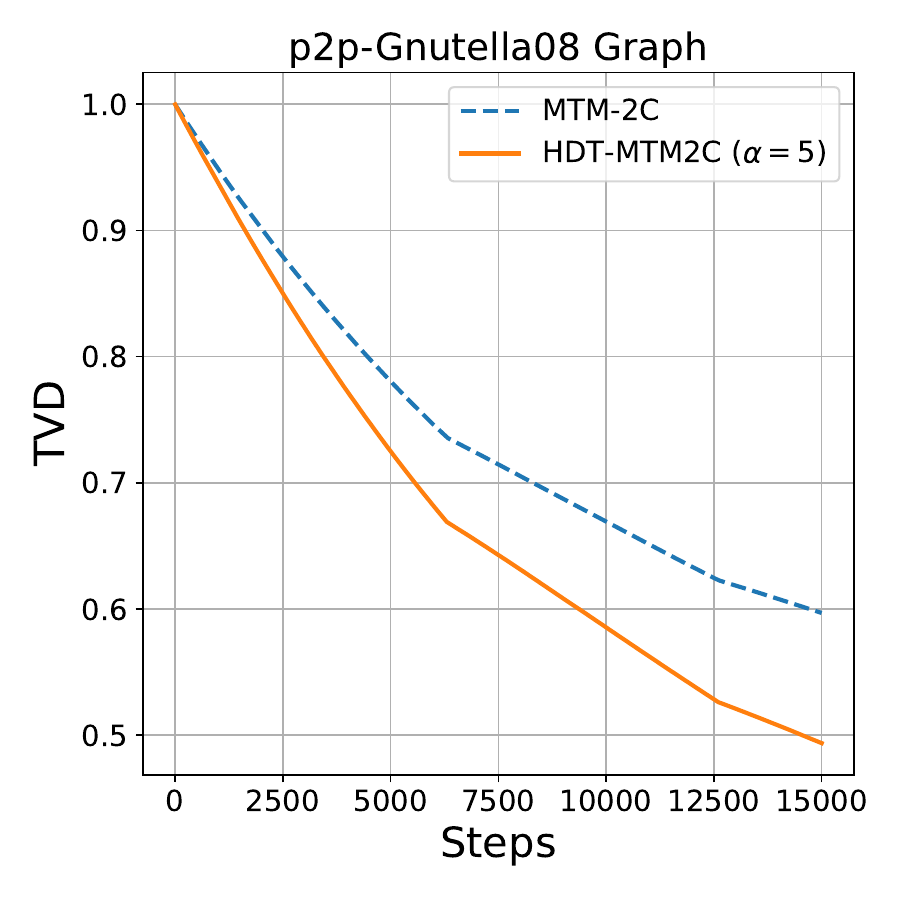}
     \end{subfigure}
     \begin{subfigure}[t]{0.24\columnwidth}
     \includegraphics[width=\linewidth]{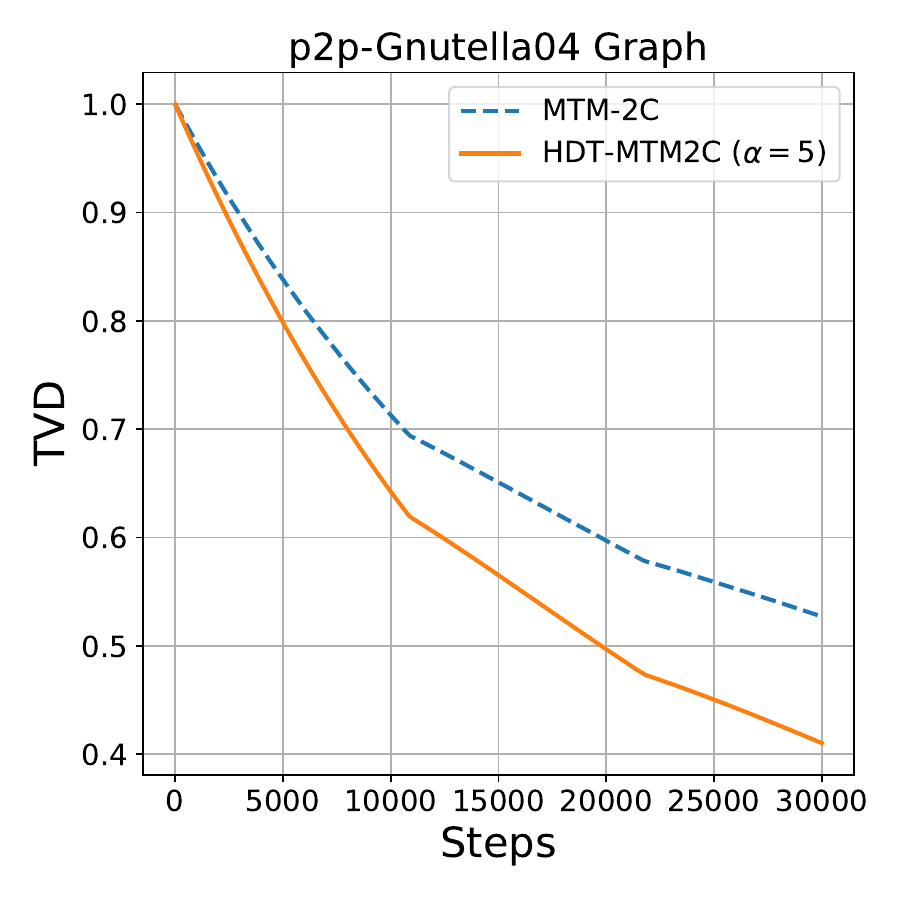}
     \end{subfigure}

    \caption{TVD Comparison for HDT-2cyc.}
    \label{appendix:2mc}
\end{figure}

\begin{figure}[!ht]
\centering
\begin{subfigure}[t]{0.24\columnwidth}
\includegraphics[width=\linewidth]{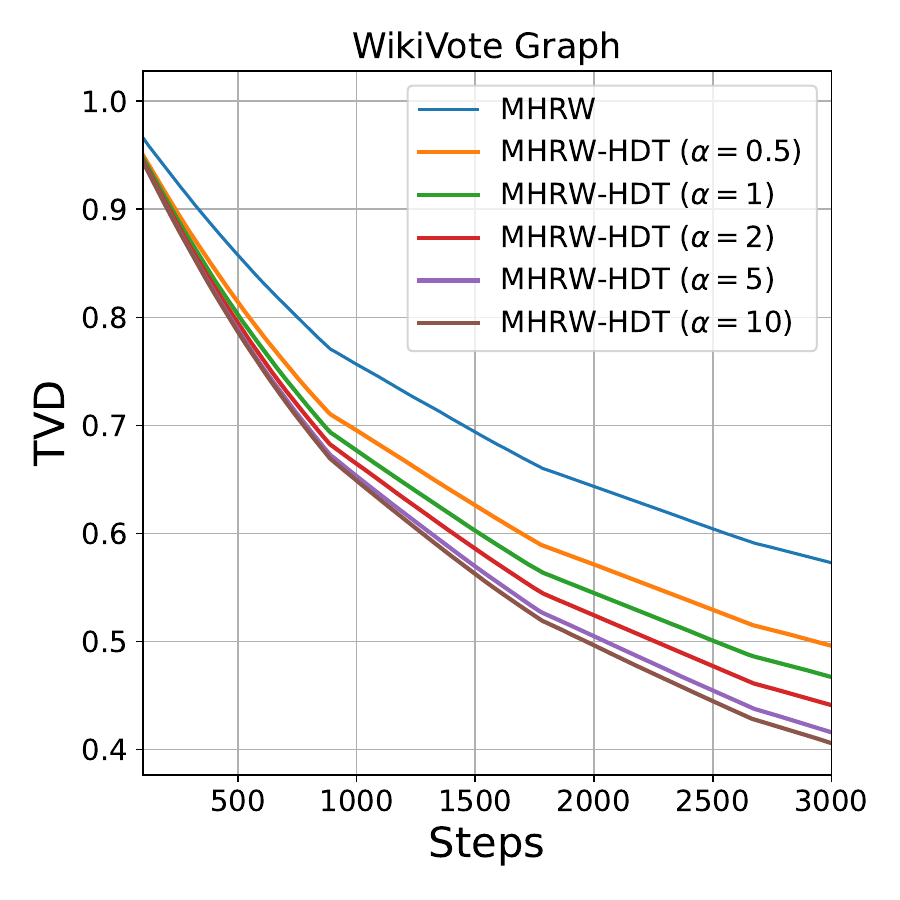}
     \end{subfigure}
     \begin{subfigure}[t]{0.24\columnwidth}
     \includegraphics[width=\linewidth]{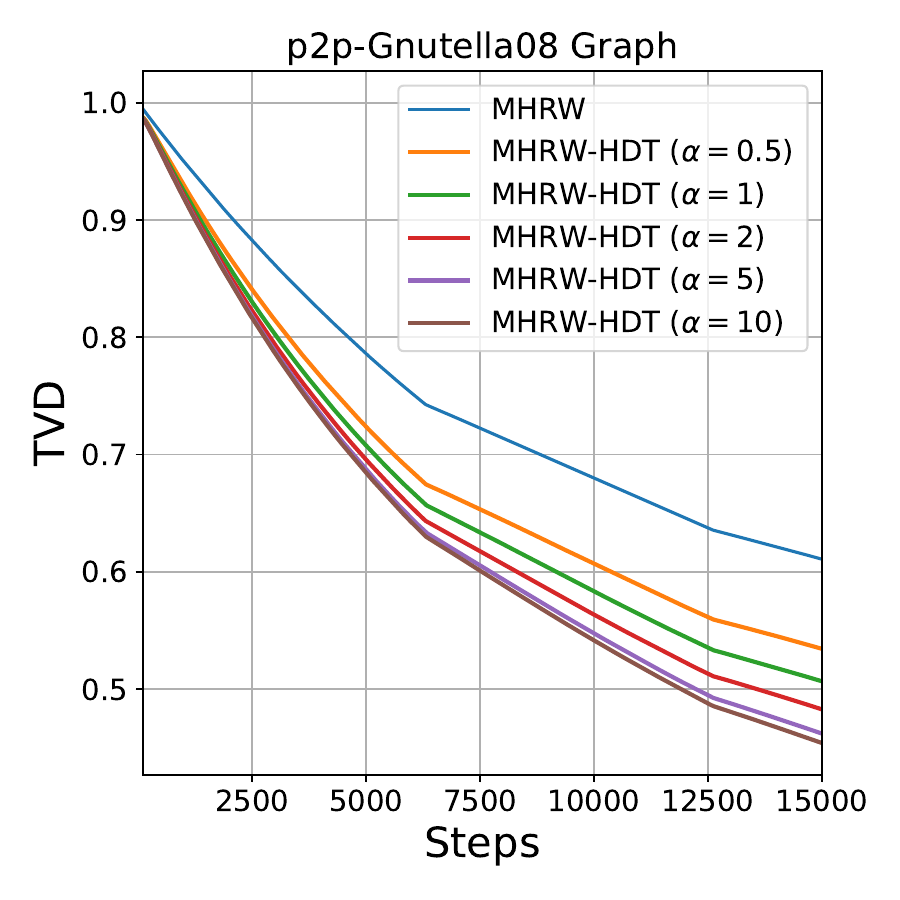}
     \end{subfigure}
     \begin{subfigure}[t]{0.24\columnwidth}
     \includegraphics[width=\linewidth]{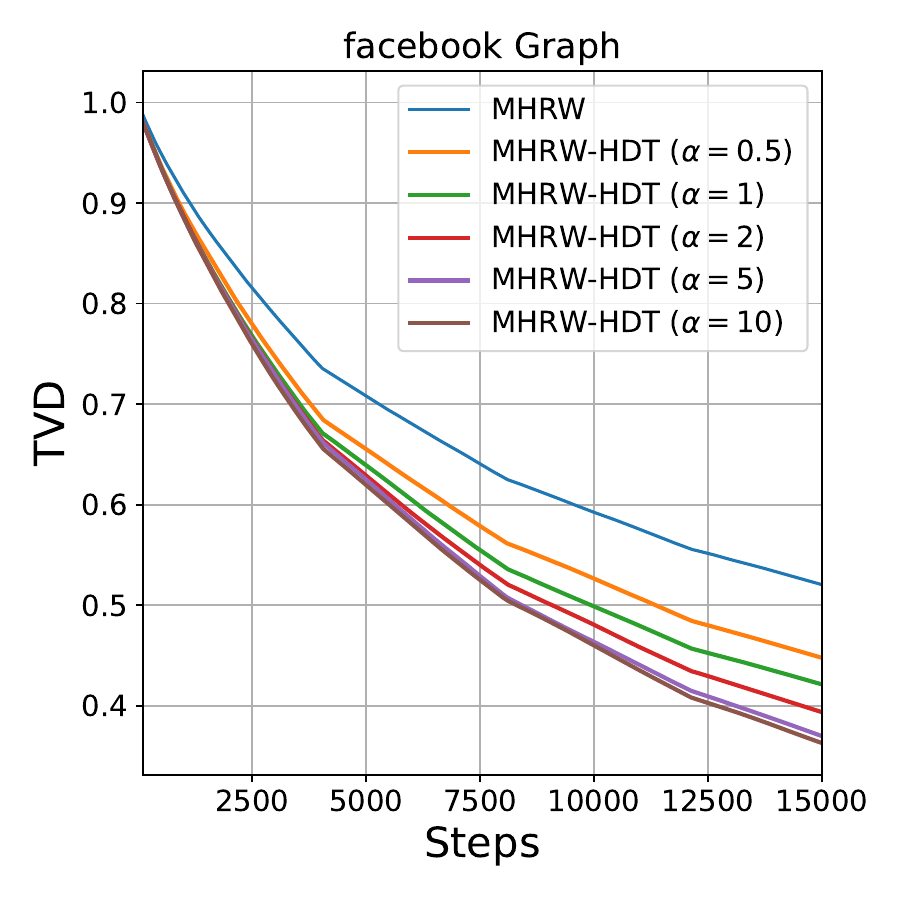}
     \end{subfigure}
     \begin{subfigure}[t]{0.24\columnwidth}
         \includegraphics[width=\linewidth]{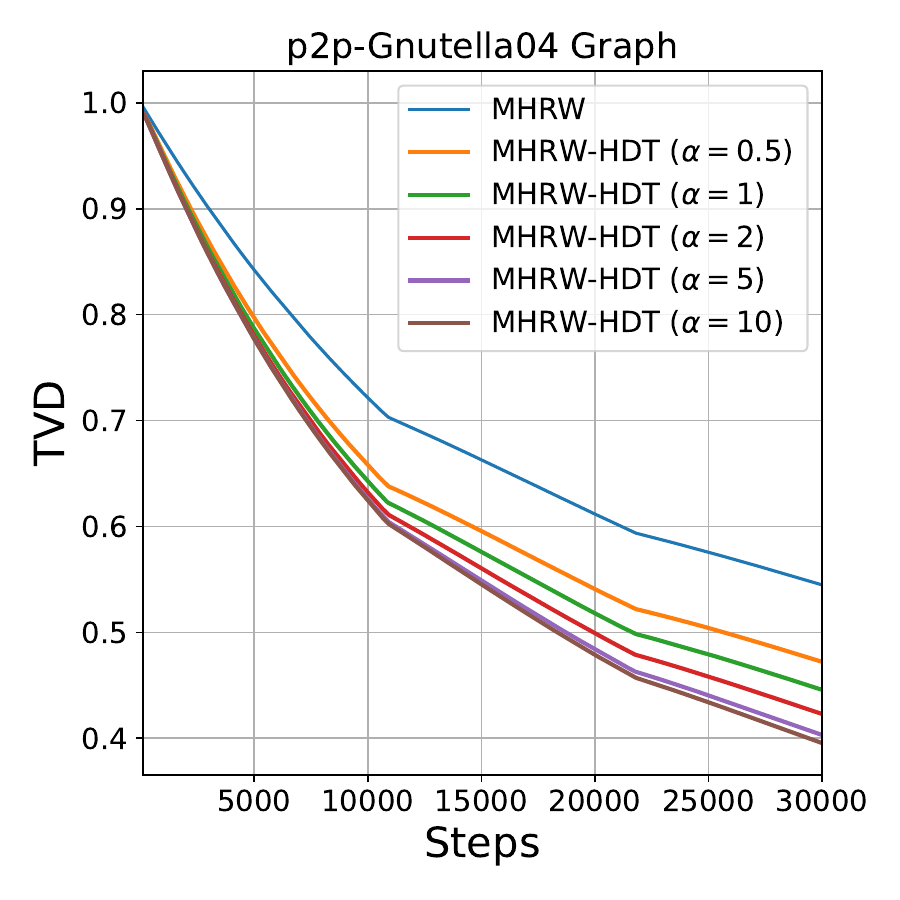}
     \end{subfigure}
     \vspace{-5mm}
    \caption{Simulation of HDT-MHRW with different choices of $\alpha$ where MHRW can be viewed as the special case when $\alpha=0$.}
    \label{fig:multialpha}
\end{figure}

\subsection{LRU on Larger graph}
\label{appendix: LRU large}
Here, we test our LRU with HDT-MHRW on WikiVote graph and p2p-Gnutella graph, and the result is shown in Figure \ref{fig: LRU_small}. We also observed LRU with 80\% memory reduction can still outperforms standard MH. Moreover, in p2p-Gnutella08 graph, we notice that the performance of LRU design is robust to the choice of the memory capacity. A comparable performance when $r$ is chosen within 5 to 20. 

Furthermore, we test our design using the ego-Gplus graph \cite{snapnets}, which contains about 100K nodes, and implemented $r=0.01$ and $r=0.1$ in LRU. For simulation setting, We run 50 independent trials with 300,000 steps and set one-third of the number of steps as burn-in period before collecting samples. Figure \ref{appendix:LRU_large} shows the result. We observed a similar robustness behavior of the choice of LRU capacity with respect to average TVD. The performance of using $r=0.01$ is comparable to using $r=0.1$ where both cases outperforms MHRW. This findings open a possibility that we may only requires an approximated empirical measure to implement HDT-MCMC while remaining fast convergence. Furthermore, the corresponding theoretical result of that can be one possible future work.

\begin{figure}[H]
\centering
\begin{subfigure}[t]{0.4\columnwidth}
\includegraphics[width=\linewidth]{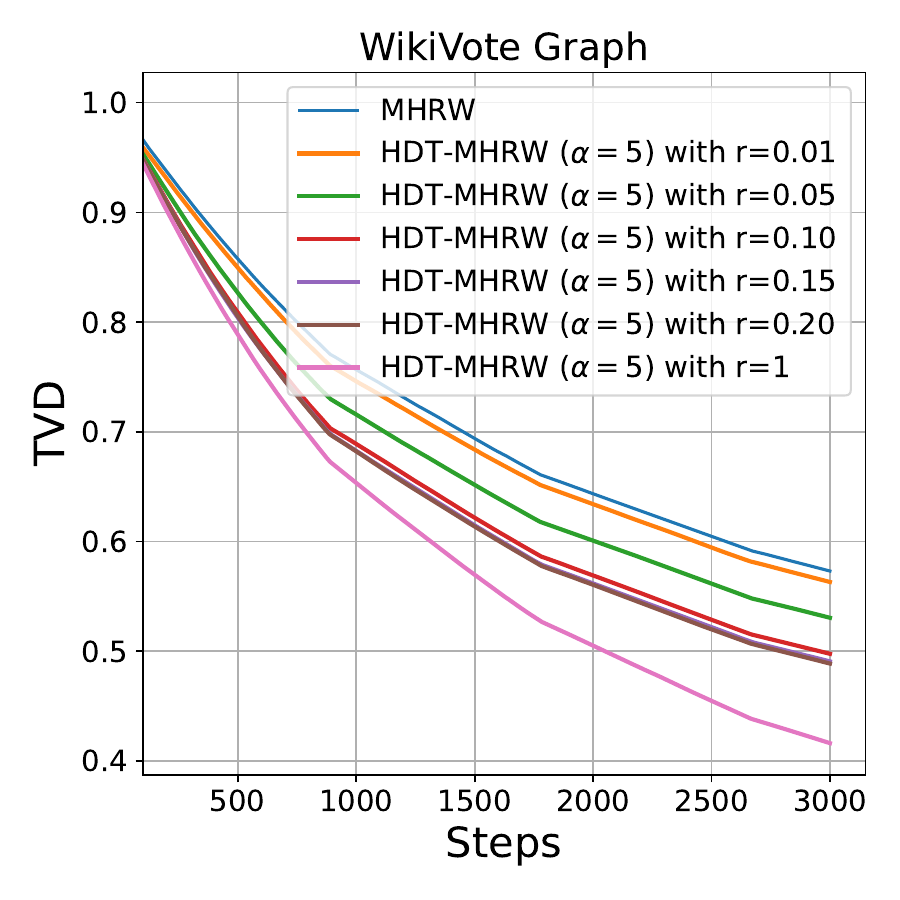}
     \end{subfigure}
     \begin{subfigure}[t]{0.4\columnwidth}
     \includegraphics[width=\linewidth]{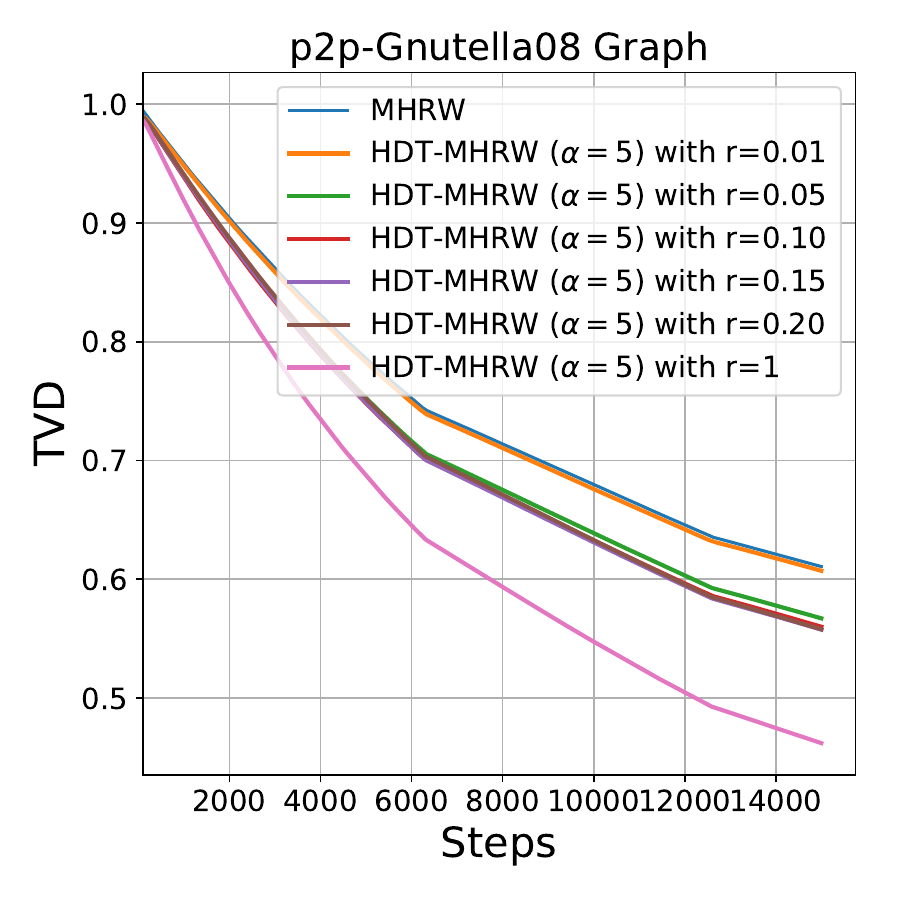}
     \end{subfigure}
     \vspace{-4mm}
    \caption{Simulation of incorporating LRU in HDT-MHRW framework to reduce memory issues.}
    \label{fig: LRU_small}
\end{figure}

\begin{figure}[H]
\centering
\begin{subfigure}[t]{0.24\columnwidth}
\includegraphics[width=\linewidth]{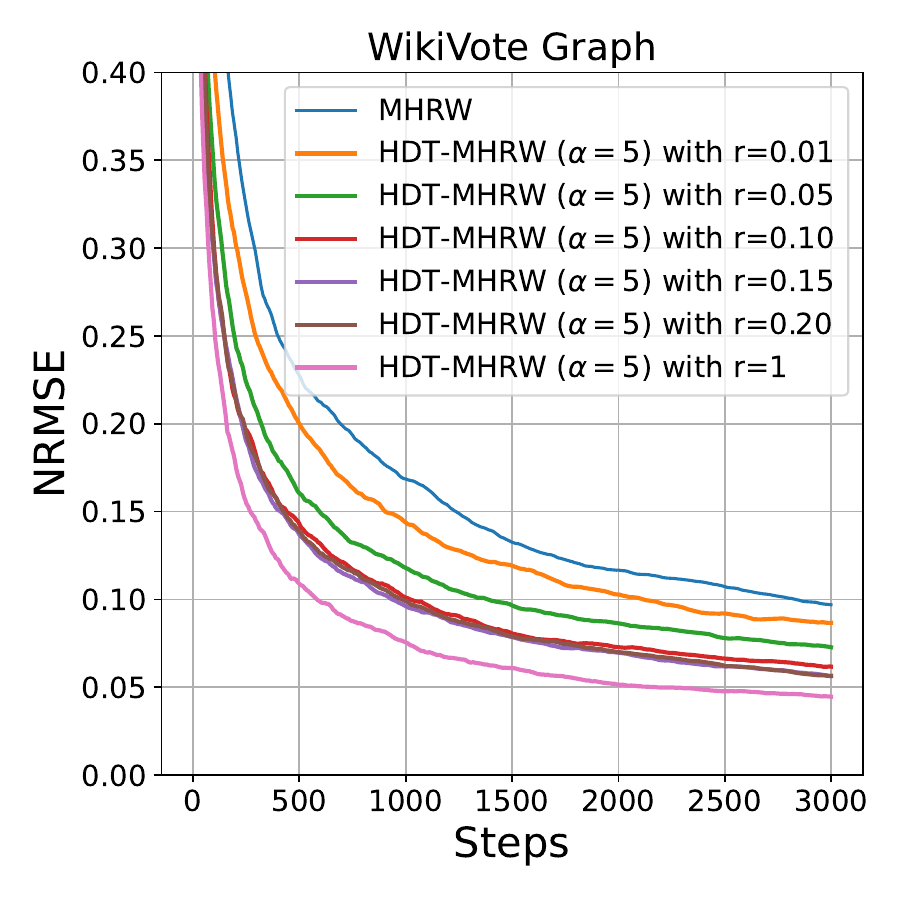}
     \end{subfigure}
     \begin{subfigure}[t]{0.24\columnwidth}
     \includegraphics[width=\linewidth]{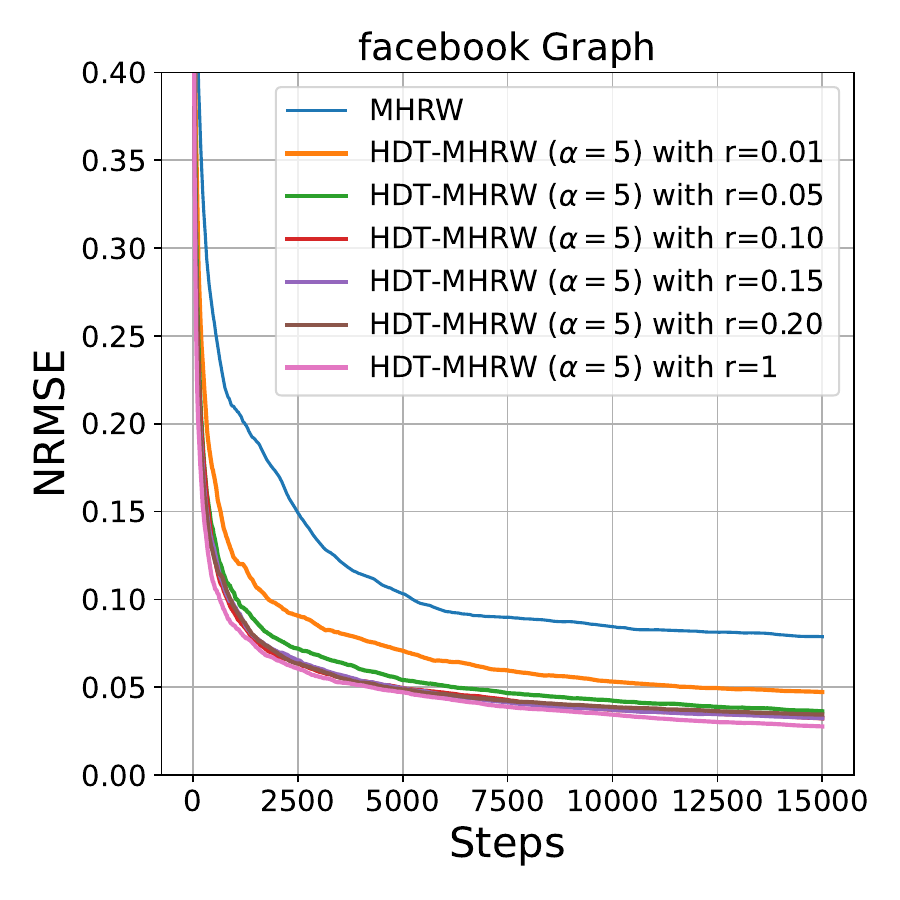}
     \end{subfigure}
         \begin{subfigure}[t]{0.24\columnwidth}
     \includegraphics[width=\linewidth]{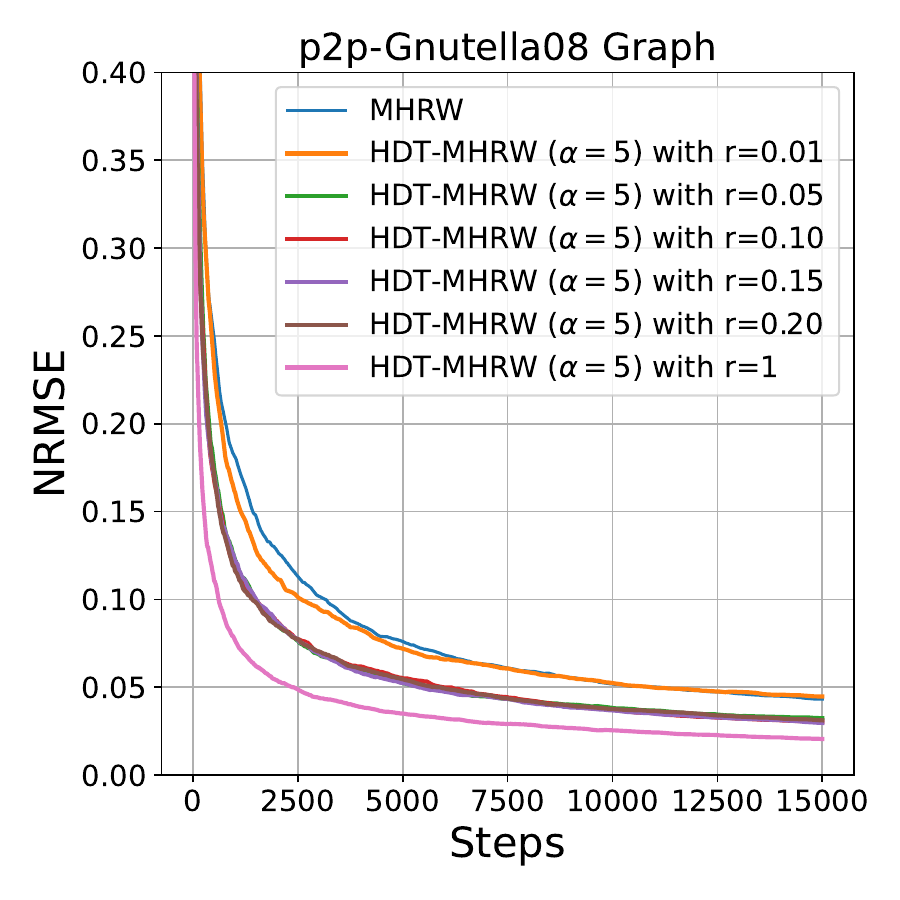}
     \end{subfigure}
          \begin{subfigure}[t]{0.24\columnwidth}
     \includegraphics[width=\linewidth]{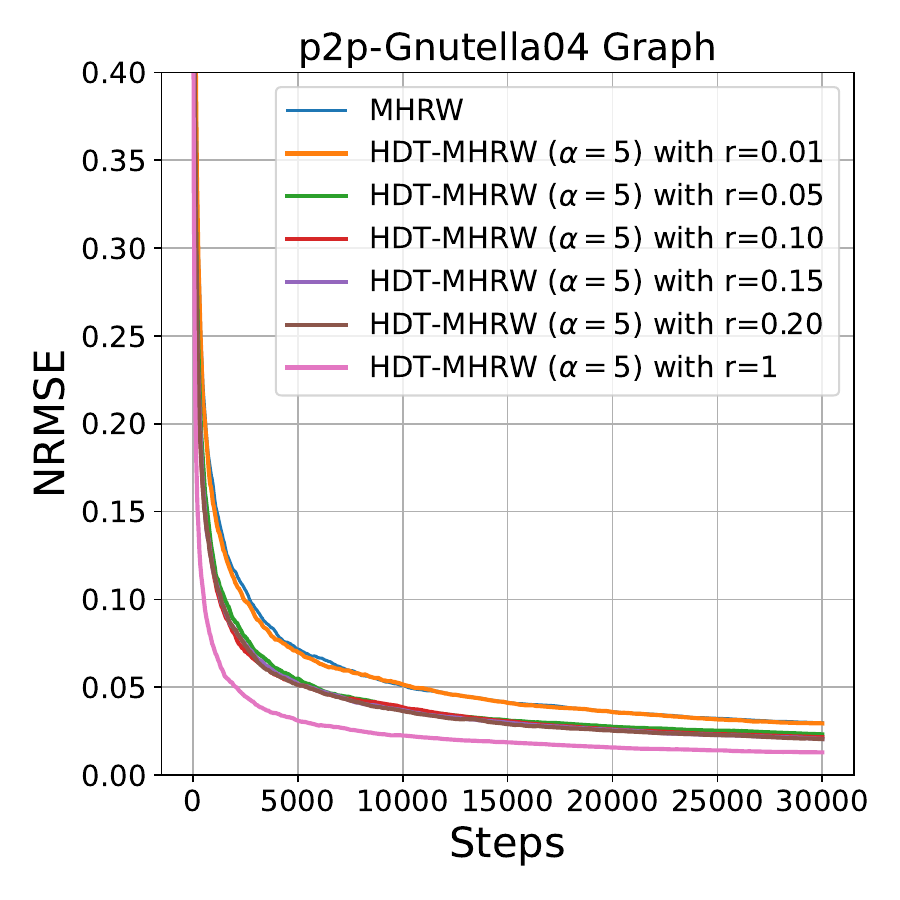}
     \end{subfigure}
     \vspace{-4mm}
    \caption{NRMSE result of HDT-MHRW framework with LRU}
    \label{fig: LRU_mse}
\end{figure}
\begin{figure}[H]
\centering
\includegraphics[width=0.35\linewidth]{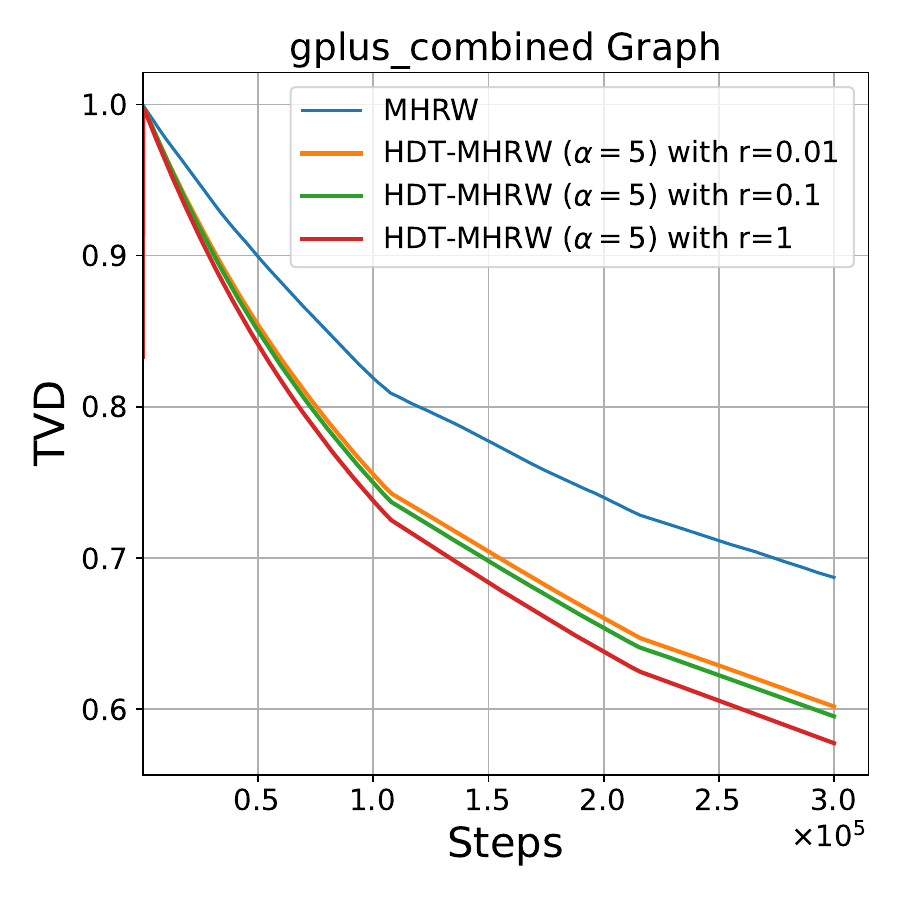}
    \caption{Simulation of the HDT-MHRW with different LRU capacity for graph sampling on gplus-combined graph.}
    \label{appendix:LRU_large}
\end{figure}

\subsection{Additional Experiments on Non-Uniform Target}\label{appendix:non-uniform target}
Since HDT-MCMC can generally be applied to any target distribution $\vmu$, we conducts additional experiments on non-uniform target with HDT-MCMC. In particular, we conduct MHRW, MTM and MHDA with their HDT improvements over AS-733 graph (with $6474$ nodes and $13895$ edges) \citep{leskovec2005graphs} and Whois graph (with $7476$ nodes and $56.9K$ edges) \citep{mahadevan2006internet}, and set the target distribution proportional to the degree distribution, i.e. $\mu_i \propto d_i$ with $\alpha = 1$ in HDT. Similar to the simulation settings in our manuscript, we compare HDT-MCMC to baseline algorithms through TVD (with $95\%$ confidence interval in shaded region) and NRMSE metrics. Since the target distribution is no longer uniform, we adopt importance sampling \cite{doshi2023self} to obtain an unbiased estimator and then compute NRMSE. The following experimental results are obtained through $100$ independent runs.

We first show the comparison between HDT-MCMC and baseline algorithms using the TVD metric in Figure \ref{fig:non-unif-tvd}, where HDT outperforms its counterpart in both AS-733 and Whois graphs. In addition, for two baseline Then, we show the performance ordering of HDT-MHRW in terms of different $\alpha$ values. As illustrated in Figure \ref{fig:non-unif-multiplealpha}, a larger $\alpha$ leads to a smaller TVD, aligning with our theoretical findings in our Theorem 3.3. We also test the performance of HDT-MCMC with heuristic LRU cache scheme in Figure \ref{fig:non-unif-LRU}. In the Whois graph, HDT-MHRW combined with LRU results in a $10\%$ smaller TVD compared to the base MHRW, with over a $90\%$ decrease in memory usage. On the other hand, on the AS-733 graph, the LRU caching scheme yields only modest benefits, achieving a $5\%$ TVD reduction with approximately $50\%$ less memory consumption (with $r = 0.5$). In Figure \ref{fig:non-unif-cost-based}, we compare the performance of HDT-MHRW and SRRW the under the same computational budget. Since SRRW always requires computing the transition probability around its neighbor at each iteration, it induces a large computation during sampling. In contrast, HDT-MHRW only queries one node per iteration, making it more lightweight. In the experiment, we set $\alpha=1$ and the total steps as $30000$ without a burn-in period. We also observe HDT-MCMC achieves better performance in terms of TVD metric given any fixed budget. Moreover, the performance gap between  SRRW and HDT-MHRW is larger on the Whois graph, where average degree of is approximately $15$ and is larger than average degree of AS 733, around $4$. This also aligns with our Lemma 3.6 where the HDT framework provide more efficiency in dense graph under limited budget.

In the simulation of estimating the density of the node label, we retain the same test functions as in Appendix H.2 in our submission. However, since the target distribution is no longer uniform, we have to use importance reweighting to adjust the weight of each sample. 
In Figure \ref{fig:non-unif-nrmse}, we show that in the long run, HDT-MCMC still outperforms its baseline counterpart.

\begin{figure}[!ht]
\centering
\begin{subfigure}[t]{0.24\columnwidth}
\includegraphics[width=\linewidth]{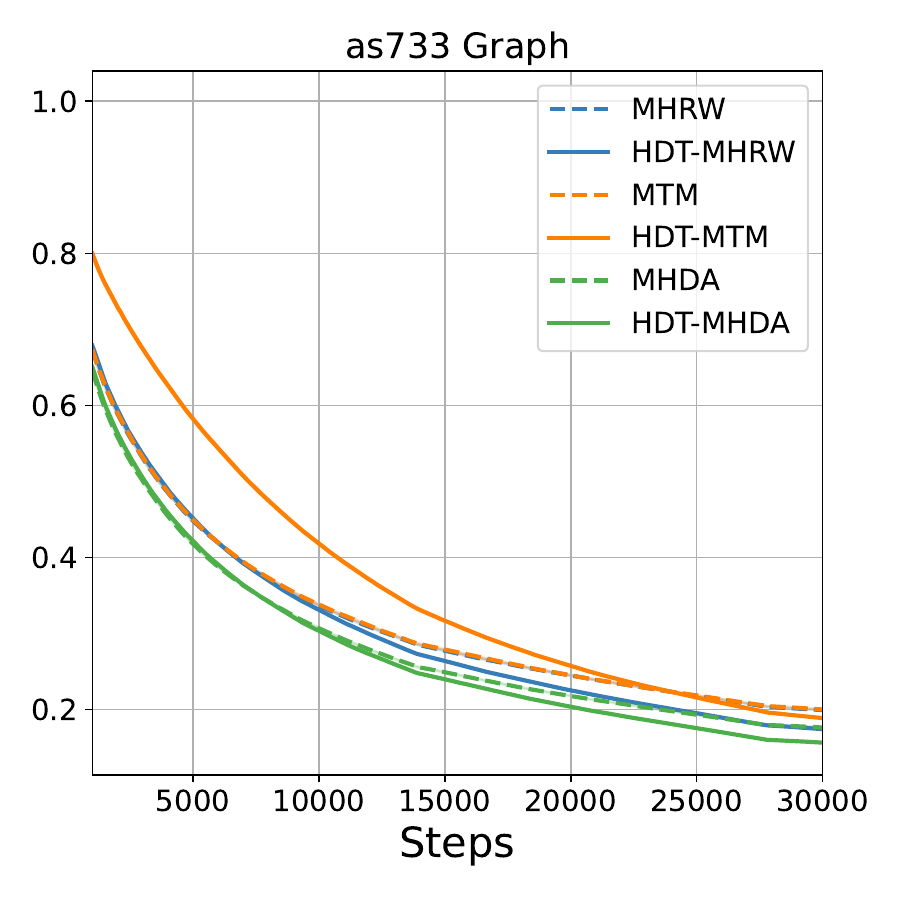}
     \end{subfigure}
     \begin{subfigure}[t]{0.24\columnwidth}
     \includegraphics[width=\linewidth]{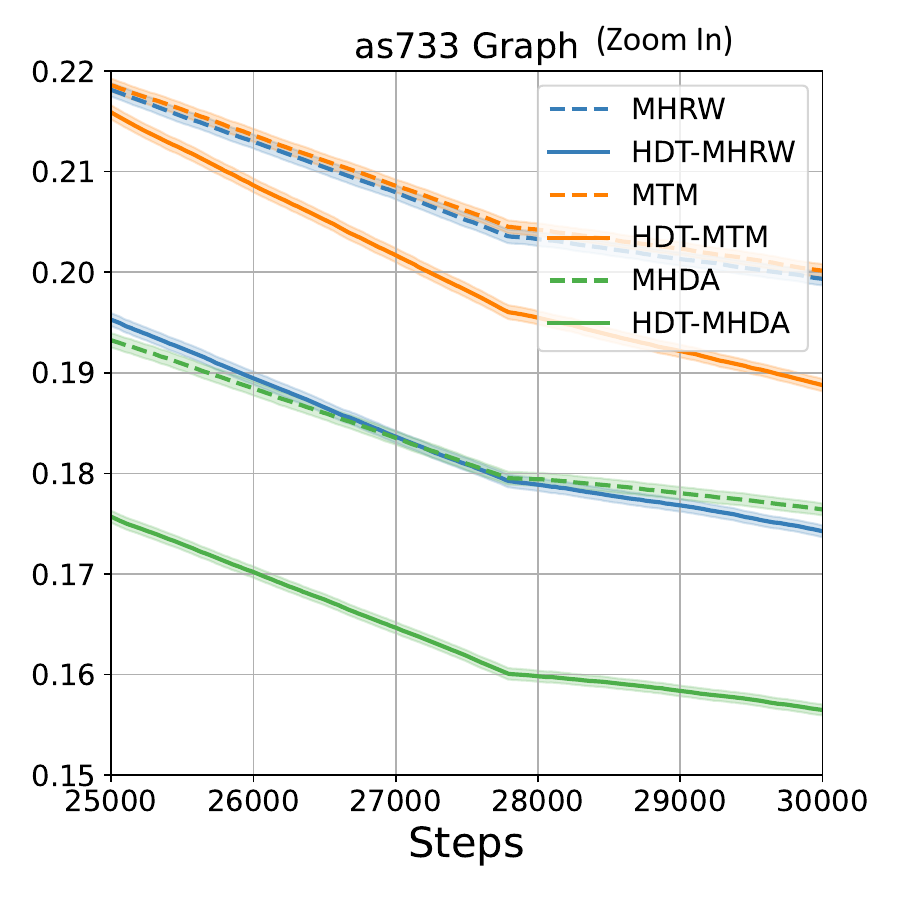}
     \end{subfigure}
         \begin{subfigure}[t]{0.24\columnwidth}
     \includegraphics[width=\linewidth]{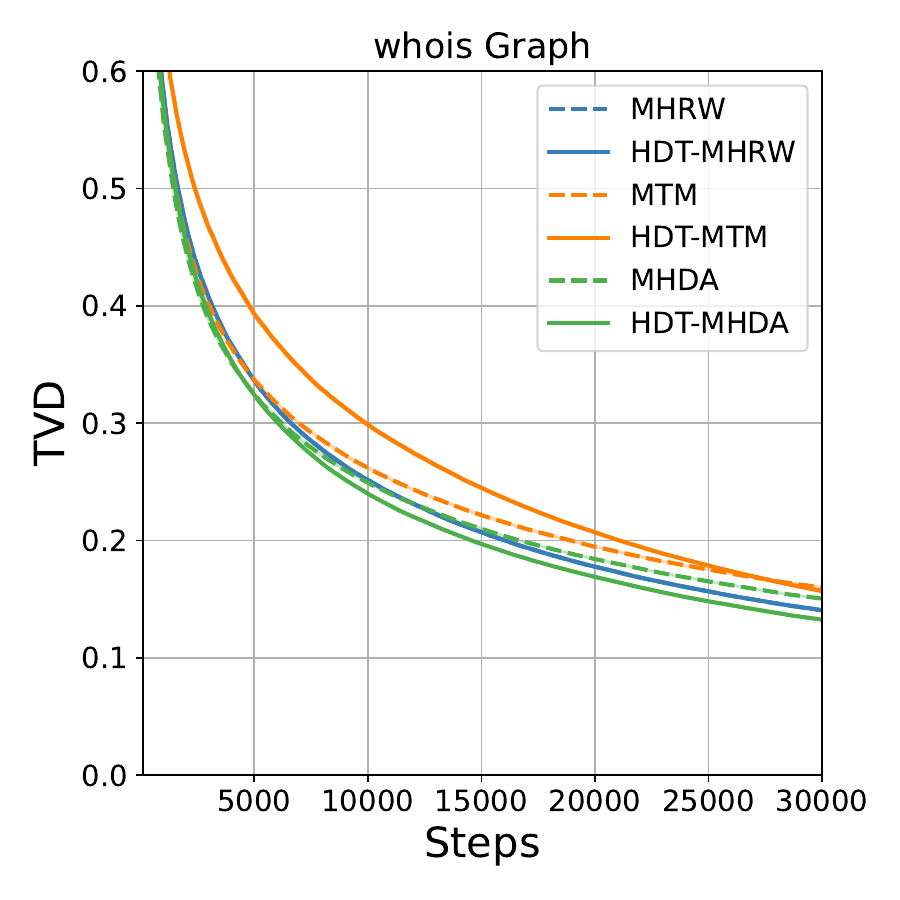}
     \end{subfigure}
          \begin{subfigure}[t]{0.24\columnwidth}
     \includegraphics[width=\linewidth]{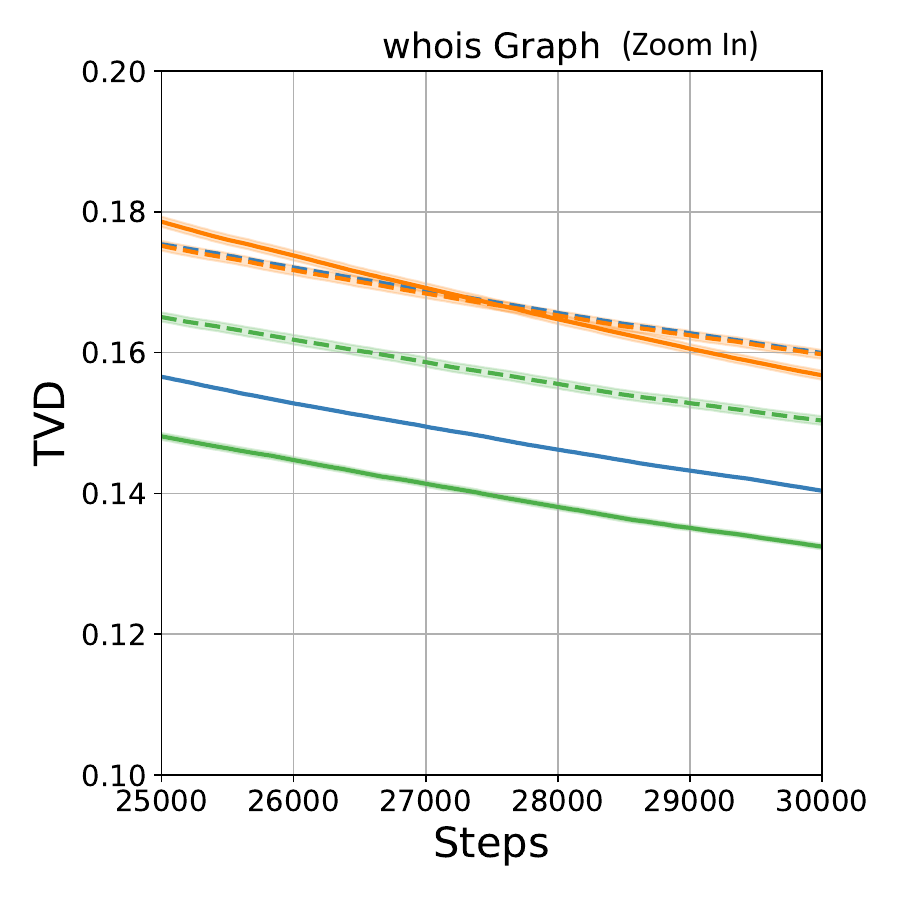}
     \end{subfigure}

    \caption{Comparison of HDT-MCMC with base chain MHRW, MTM, and MHDA via TVD metric in AS-733 and Whois graphs. The plots in the right column is the zoom in version of those in the left column.}
    \label{fig:non-unif-tvd}
\end{figure}

\begin{figure}[!ht]
\centering
\begin{subfigure}[t]{0.4\columnwidth}
\includegraphics[width=\linewidth]{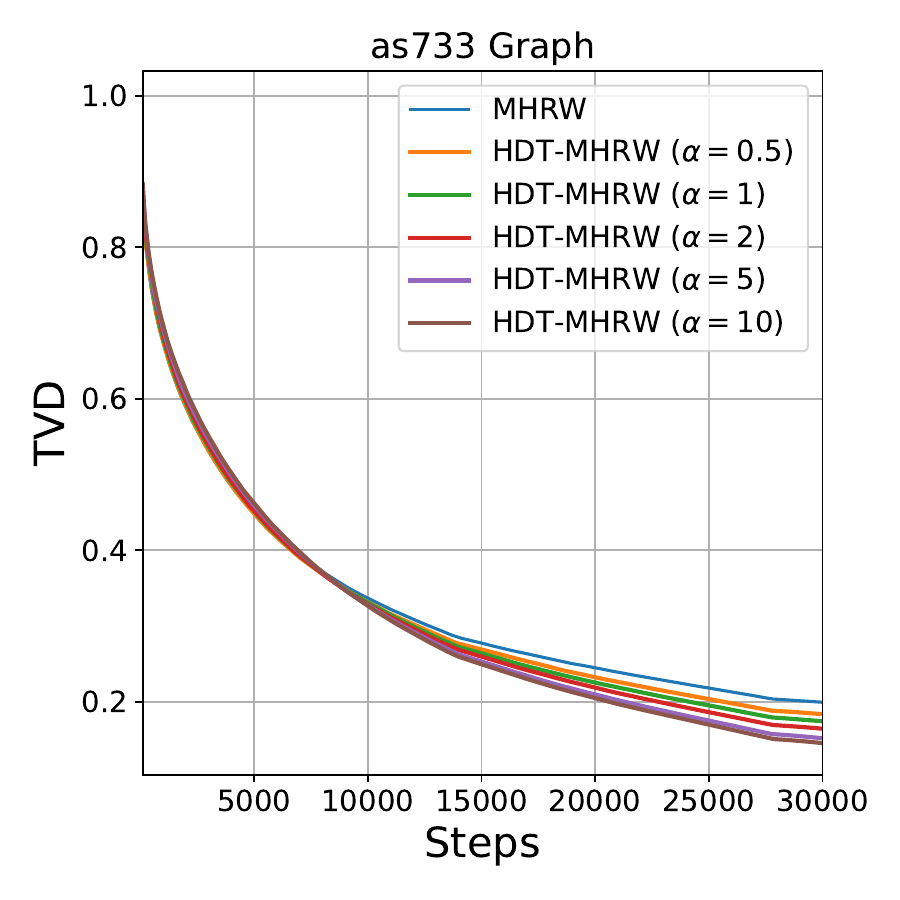}
     \end{subfigure}
     \begin{subfigure}[t]{0.4\columnwidth}
     \includegraphics[width=\linewidth]{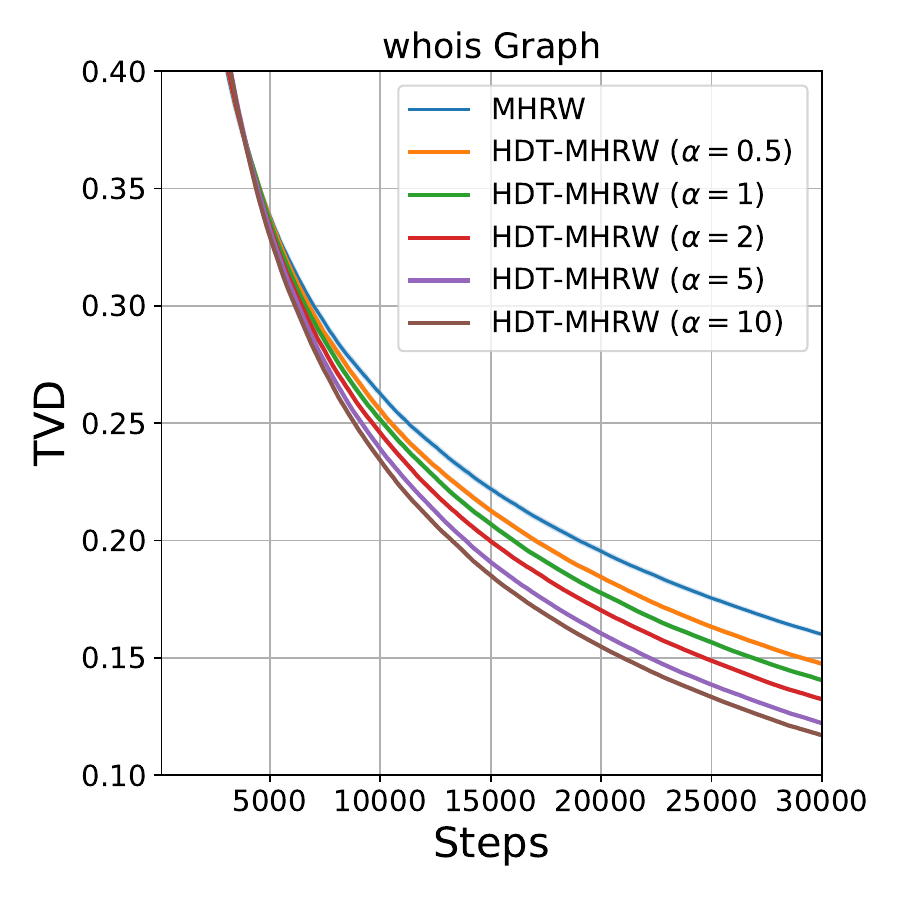}
     \end{subfigure}
    \caption{Simulation of HDT-MHRW with different choices of $\alpha$ where MHRW can be viewed as the special case when $\alpha=0$.}
    \label{fig:non-unif-multiplealpha}
\end{figure}

\begin{figure}[!ht]
\centering
\begin{subfigure}[t]{0.24\columnwidth}
\includegraphics[width=\linewidth]{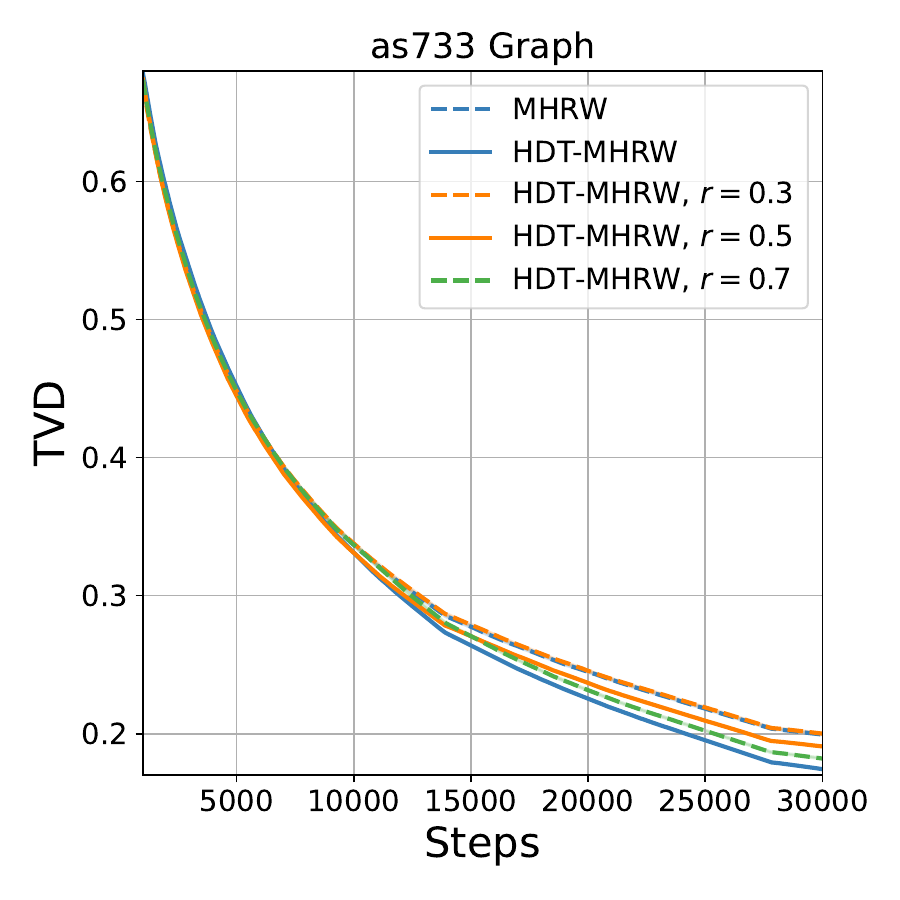}
     \end{subfigure}
     \begin{subfigure}[t]{0.24\columnwidth}
     \includegraphics[width=\linewidth]{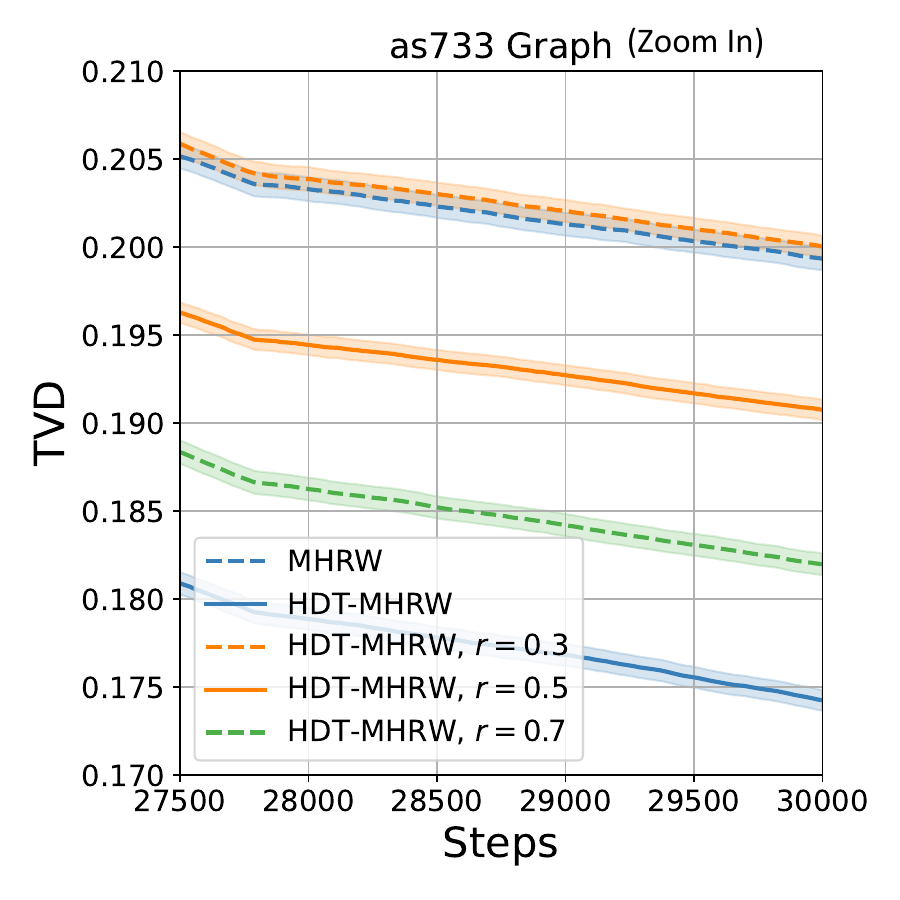}
     \end{subfigure}
         \begin{subfigure}[t]{0.24\columnwidth}
     \includegraphics[width=\linewidth]{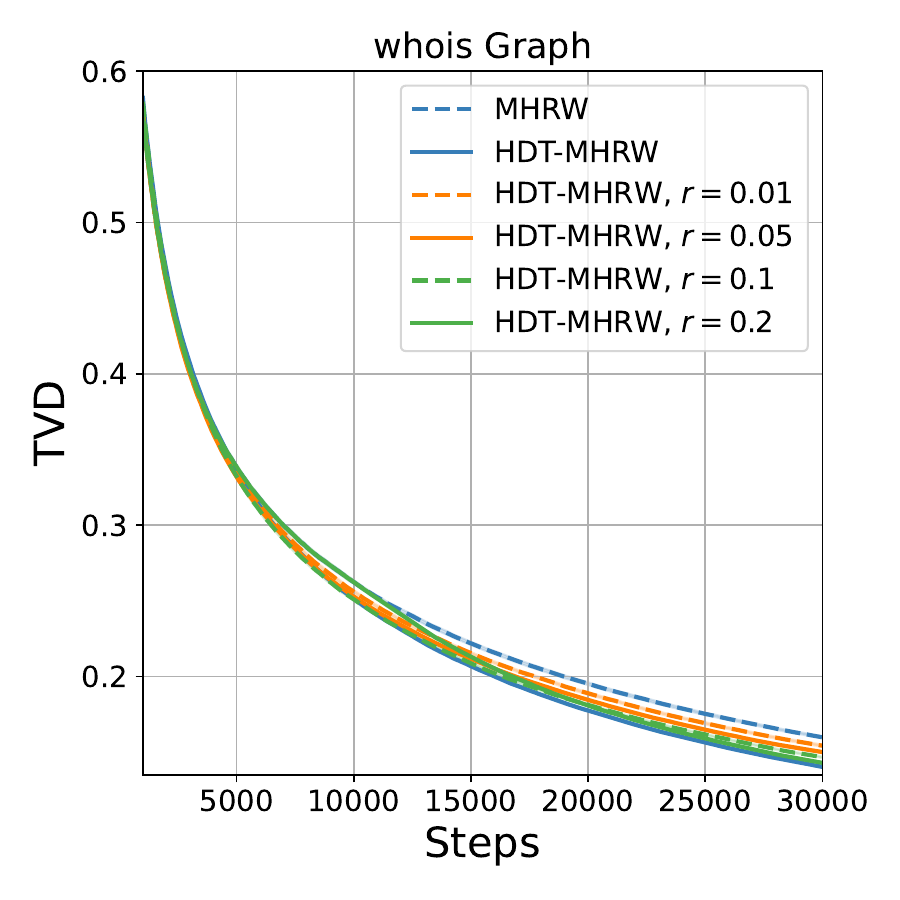}
     \end{subfigure}
          \begin{subfigure}[t]{0.24\columnwidth}
     \includegraphics[width=\linewidth]{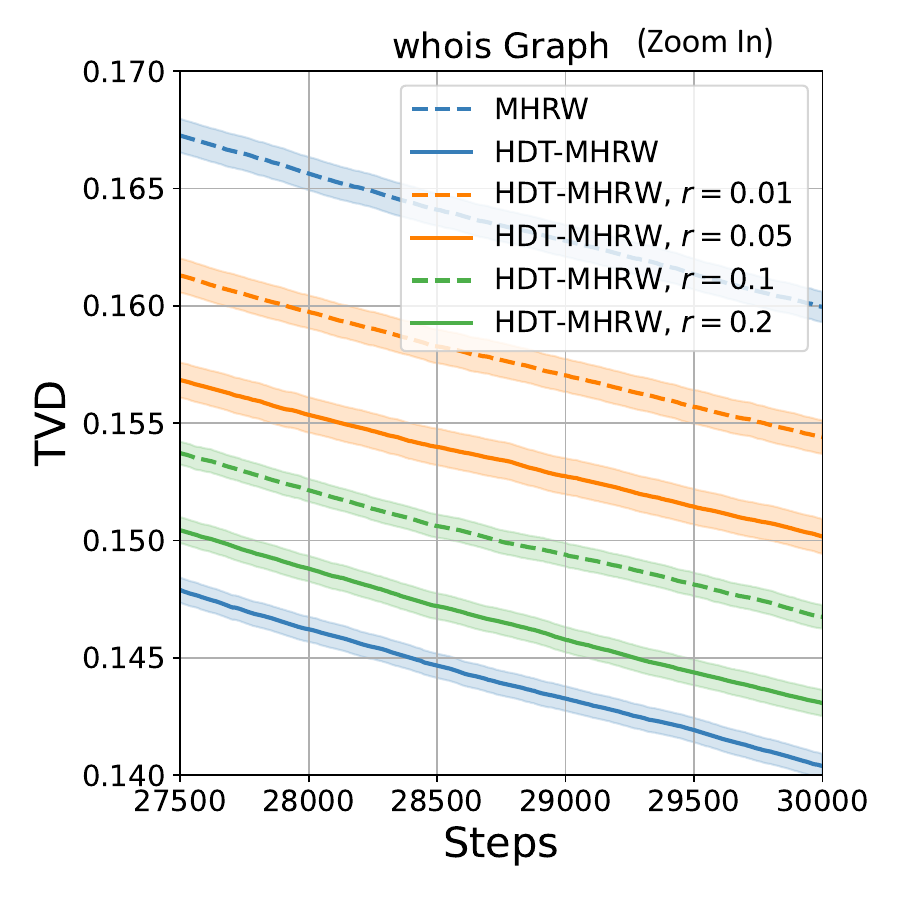}
     \end{subfigure}

    \caption{Performance of HDT-MHRW with LRU cache scheme.}
    \label{fig:non-unif-LRU}
\end{figure}

\begin{figure}[!ht]
	\centering
	\begin{subfigure}[t]{0.4\columnwidth}
\includegraphics[width=\linewidth]{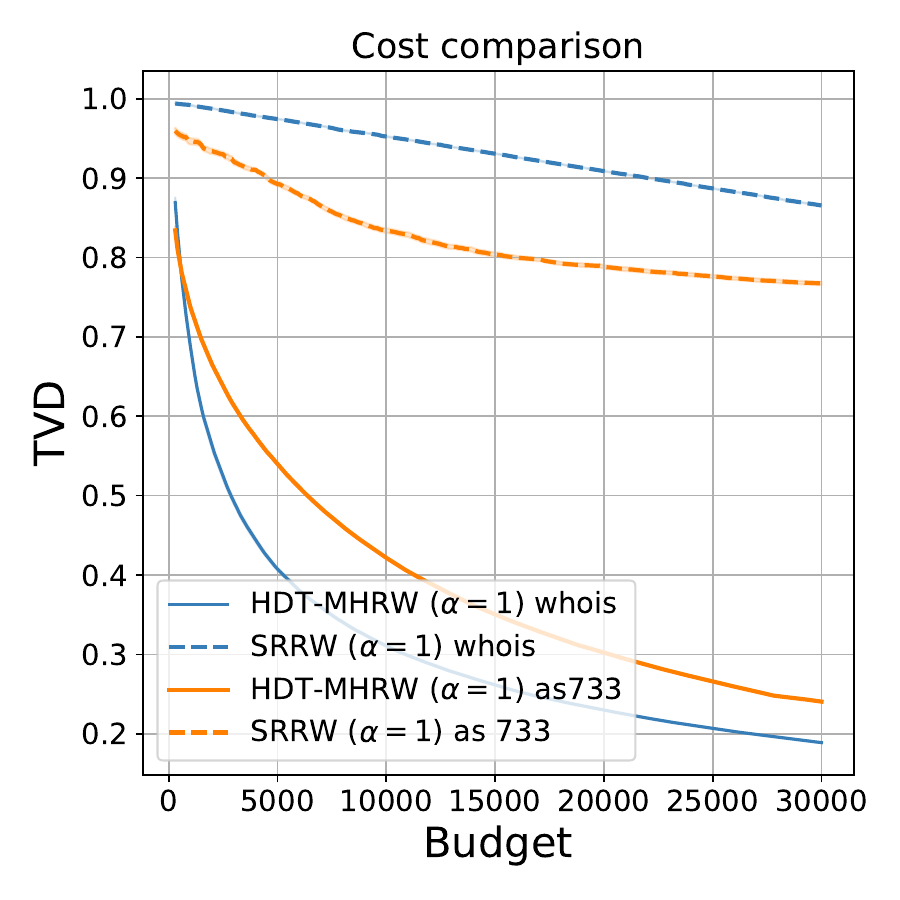}
     \end{subfigure}
     \vspace{-4mm}
	\caption{TVD comparison between HDT-MHRW and SRRW under budget constraints.}
	\label{fig:non-unif-cost-based}
\end{figure}

\begin{figure}[!ht]
\centering
\begin{subfigure}[t]{0.24\columnwidth}
\includegraphics[width=\linewidth]{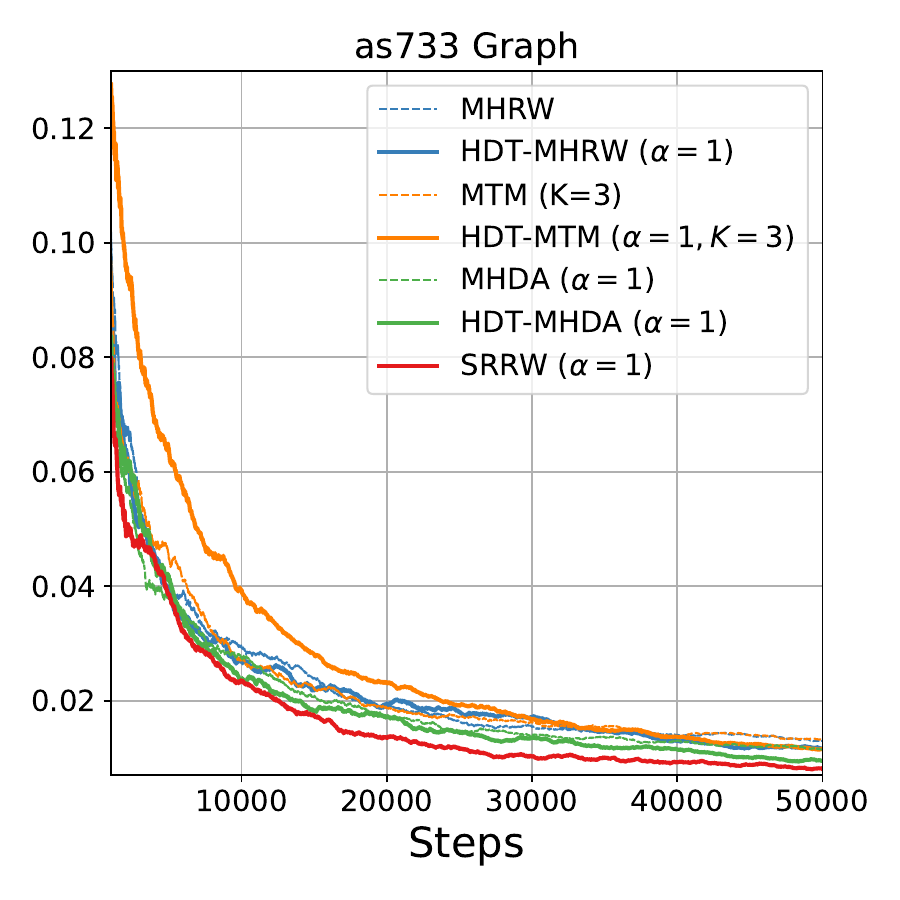}
     \end{subfigure}
     \begin{subfigure}[t]{0.24\columnwidth}
     \includegraphics[width=\linewidth]{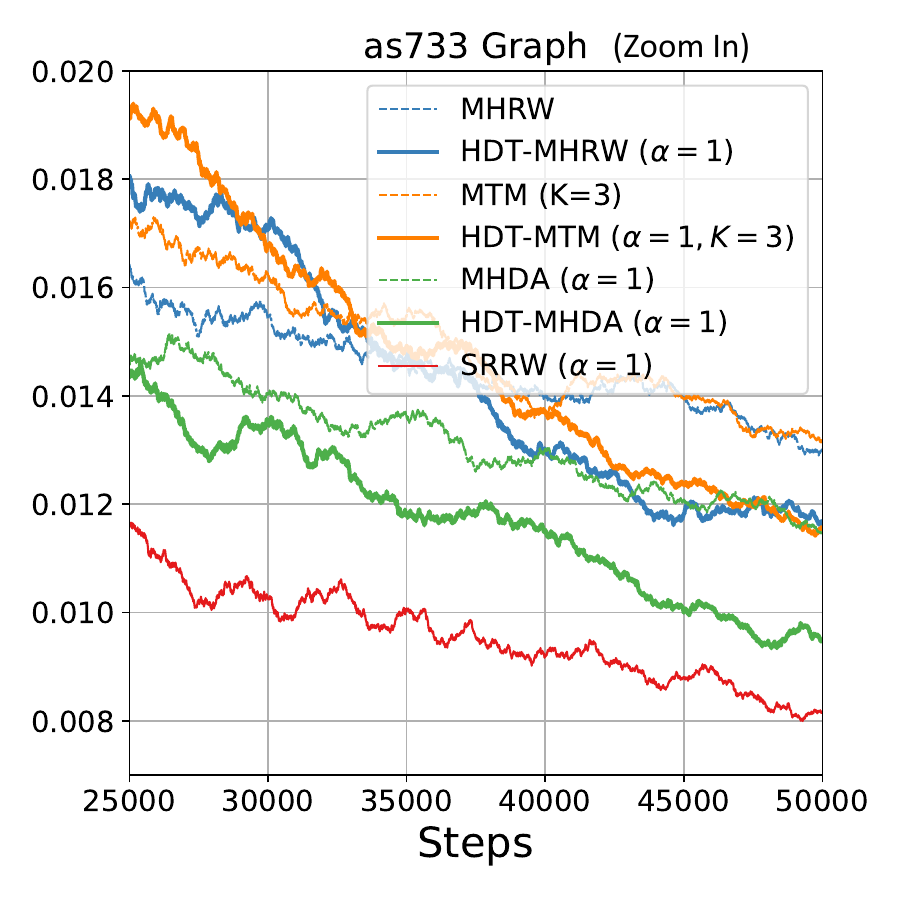}
     \end{subfigure}
         \begin{subfigure}[t]{0.24\columnwidth}
     \includegraphics[width=\linewidth]{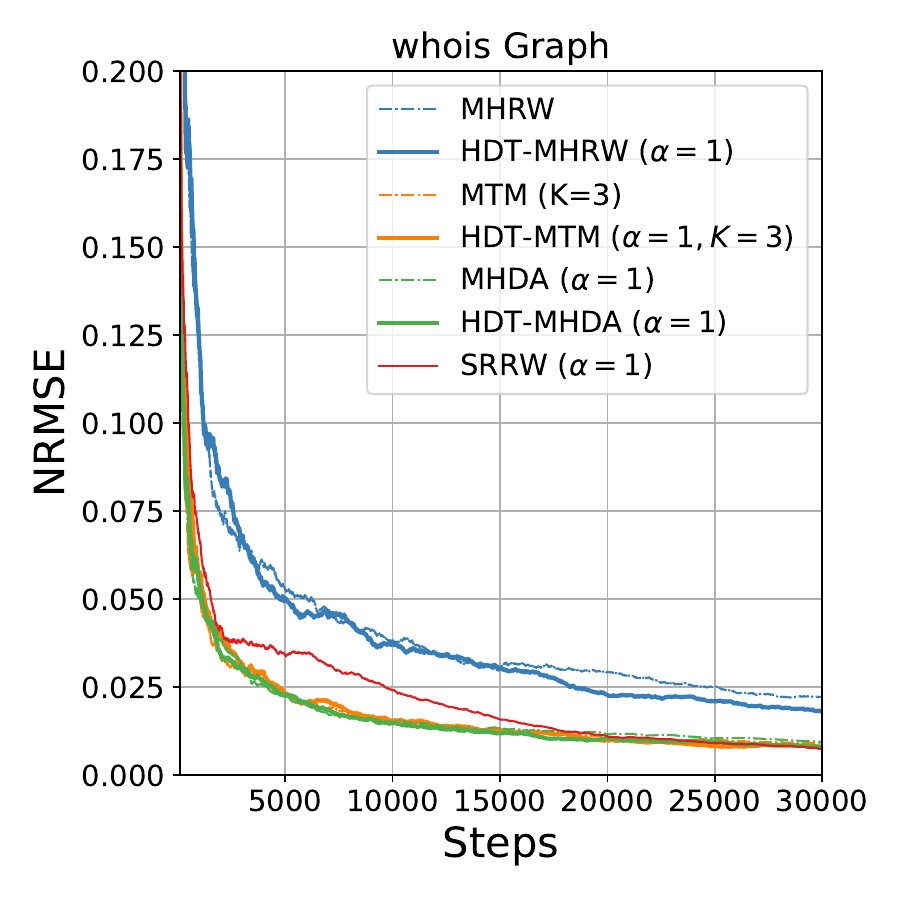}
     \end{subfigure}
          \begin{subfigure}[t]{0.24\columnwidth}
     \includegraphics[width=\linewidth]{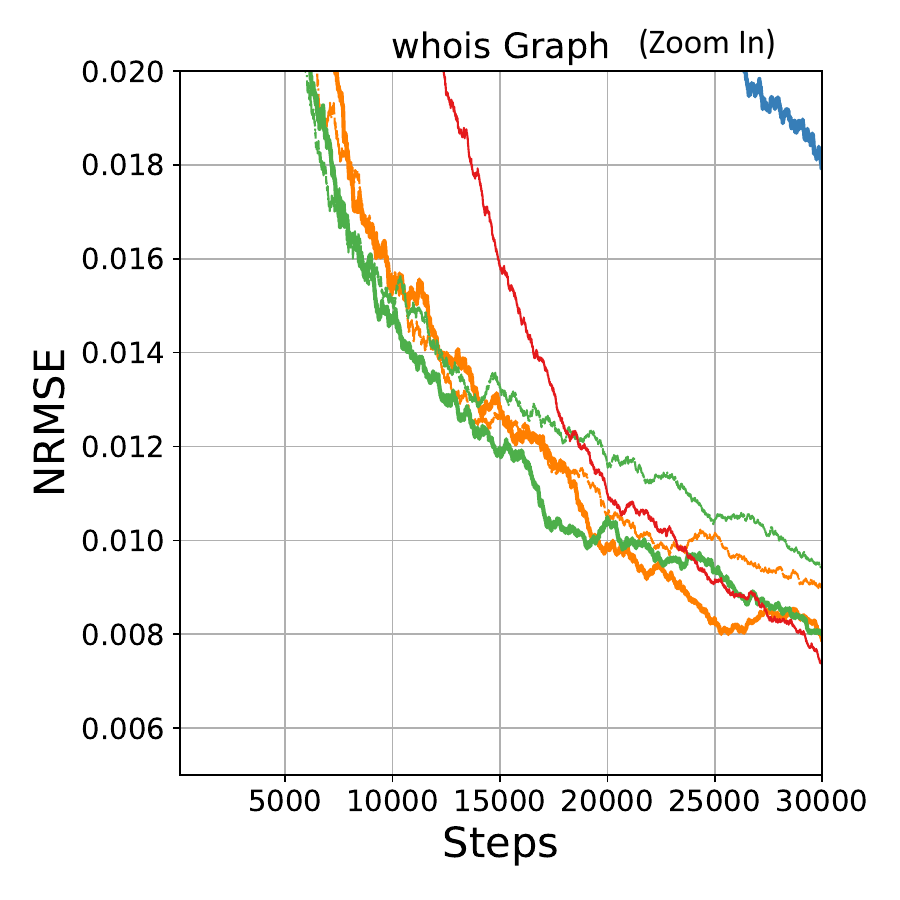}
     \end{subfigure}
     \vspace{-4mm}
    \caption{Comparison of HDT-MCMC (with thicker solid curves) to base chain MHRW, MTM, and MHDA (with thinner dash-dot curves) via NRMSE metric in AS-733 and Whois graphs. The plots in the right column is the zoom in version of those in the left column.}
	\label{fig:non-unif-nrmse}
\end{figure}

%%%%%%%%%%%%%%%%%%%%%%%%%%%%%%%%%%%%%%%%%%%%%%%%%%%%%%%%%%%%%%%%%%%%%%%%%%%%%%%
%%%%%%%%%%%%%%%%%%%%%%%%%%%%%%%%%%%%%%%%%%%%%%%%%%%%%%%%%%%%%%%%%%%%%%%%%%%%%%%

\end{document}